\newcommand{\mb}[1]{\mathbf{#1}}
\newcommand{\tb}[1]{\textbf{#1}}
\DeclareMathOperator{\tr}{tr} \DeclareMathOperator{\rank}{rank}
\DeclareMathOperator{\diag}{diag}
\DeclareMathOperator{\st}{s.t.}
\newtheorem{theorem}{Theorem}%[section]
\newtheorem{proposition}[theorem]{Proposition}
\newtheorem{corollary}[theorem]{Corollary}
\newtheorem{definition}[theorem]{Definition}
\begin{document}

\title{Spectral Sparse Representation for Clustering: Evolved from PCA, K-means, Laplacian Eigenmap, and Ratio Cut}

%\author{\name Zhenfang Hu \email fancij@zju.edu.cn \\
%%\addr College of Computer Science and Technology\\
%%       Zhejiang University\\
%%       Hangzhou, 310027, China
%       \AND
%       \name Gang Pan \email gpan@zju.edu.cn \\
%       \addr College of Computer Science\\
%       Zhejiang University\\
%       Hangzhou, 310027, China}
%
%\editor{xxx}

\author[*]{Zhenfang~Hu}
\author[*]{Gang~Pan}
\author[]{Yueming~Wang}
\author[*]{Zhaohui Wu}

\affil[*]{College of Computer Science and Technology, Zhejiang
University, China, \authorcr fancij@zju.edu.cn, gpan@zju.edu.cn, wzh@zju.edu.cn}
\affil[]{Qiushi Academy for Advanced Studies, Zhejiang University, China, \authorcr ymingwang@zju.edu.cn}

\maketitle

\begin{abstract}%   <- trailing '%' for backward compatibility of .sty file
Dimensionality reduction, cluster analysis, and sparse representation are basic components in machine learning.
However, their relationships have not yet been fully investigated. In this paper, we
find that the spectral graph theory underlies a series of these elementary methods and can unify them into a
complete framework. The methods include PCA, K-means, Laplacian eigenmap (LE), ratio cut (Rcut), and a new sparse
representation method developed by us, called spectral sparse representation (SSR). Further, extended relations to conventional over-complete sparse representations (e.g., method of optimal directions, KSVD), manifold learning (e.g., kernel PCA, multidimensional scaling, Isomap, locally linear embedding), and subspace clustering (e.g., sparse subspace clustering, low-rank representation) are incorporated. We show that, under an ideal condition from the spectral graph theory, PCA, K-means, LE, and Rcut are unified together. And when the condition is relaxed, the unification evolves to SSR, which lies in the intermediate between PCA/LE and K-mean/Rcut. An efficient algorithm, NSCrt, is developed to solve the sparse codes of SSR. SSR combines merits of both sides: its sparse codes reduce dimensionality of data  meanwhile revealing cluster structure. For its inherent relation to cluster analysis, the codes of SSR can be directly used for clustering. Scut, a clustering approach derived from SSR reaches the state-of-the-art performance in the spectral clustering family. The one-shot solution obtained by Scut is comparable to the optimal result of K-means that are run many times. Experiments on various data sets demonstrate the properties and strengths of SSR, NSCrt, and Scut.
\end{abstract}

\tb{Keywords:}
  sparse representation, spectral graph, dimensionality reduction, cluster analysis,
  PCA, K-means, spectral clustering, Laplacian eigenmap.

\section{Introduction}\label{sec:introduction}
As the rise of information age, we are overwhelmed by a large amount
of data generated daily, e.g., images, videos, speeches, text,
financial data, and biomedical data. But the information contained
in these data is usually not explicit in their original forms. Data understanding becomes gradually urgent. In this
paper, we focus on unsupervised learning, which tries to find
hidden structure in unlabeled data. The intrinsic
structure of data is often explored by representing the data in another form. Typically used
methods include dimensionality reduction, cluster analysis, and sparse representation, which are among the cornerstones of machine learning.

However, the relationships among these methods have not been fully explored. We believe that the basic
parts of machine learning deserve extensive investigation. This paper devotes to an attempt in this direction.

\subsection{Dimensionality Reduction and Cluster Analysis}
Dimensionality reduction and cluster analysis are two of the most traditional unsupervised learning methods, having wide-spread applications.

\tb{Dimensionality reduction}. It aims at representing data by low-dimensional codes. On the one hand, it saves storage, considering many data we encounter are of high dimensions whose intrinsic dimensions, however, are much lower. On the other hand, noise may be reduced and the structure of data becomes prominent. The codes will preserve the relation of data as much as possible, e.g., distance between data points. The widely applied methods include: principal component analysis (PCA)
\cite{jolliffe2002principal}, multidimensional scaling (MDS) \cite{cox2001multidimensional}, kernel PCA \cite{scholkopf1998nonlinear}, nonnegative
matrix factorization (NMF) \cite{lee1999learning},
Isomap \cite{tenenbaum2000global}, locally linear embedding (LLE) \cite{roweis2000nonlinear},
Laplacian eigenmap (LE) \cite{belkin2003laplacian}, and locality
preserving projections (LPP) \cite{he2003locality}.

\tb{Cluster analysis}. It
tries to partition the data into disjoint
groups such that similar data points are assigned to the same group and
dissimilar data points are separated into different groups. In this
way, the cluster structure of data is revealed. Various methods have
been proposed, for example, centroid-based K-means clustering
\cite{macqueen1967some} and its distribution-based version:
clustering via Gaussian mixture models (GMM)
\cite{bishop2006pattern}, connectivity-based hierarchical
clustering \cite{hastie2005elements}, density-based DBSCAN
\cite{ester1996density}, and graph-based spectral clustering
\cite{von2007tutorial, dhillon2004unified} such as ratio cut
(Rcut) \cite{chan1994spectral}, normalized cut (Ncut)
\cite{shi2000normalized, yu2003multiclass, ng2002spectral}.

The two kinds of methods appear distinct: dimensionality reduction is concerned with fidelity where the data relation should be preserved faithfully, while cluster analysis focuses on semantic where the classes of data should be made clear. However, in a broad sense, both of them can be seen as code-based data representation methods. For cluster analysis, each data point is represented by one cluster, and its codes are indicator vector consisting of zeros and a 1, the index of which indicates the cluster membership. The codes of dimensionality reduction are compact, due to low dimensionality and fidelity, while the codes of cluster analysis are sparse, due to only one nonzero.

The above methods, according to the form of input data they work
with, can be categorized into two types. One works with original
data, e.g., PCA and K-means. The other works with similarity matrix,
which stores the pair-wise relations of data, e.g., LE and spectral
clustering. We can call the second type kernel methods, since
usually the similarity matrix is constructed by some kernel function
\cite{von2007tutorial, shawe2004kernel}.

Among the various methods of dimensionality reduction and cluster
analysis, PCA, K-means, LE, and spectral clustering are the
representative ones. They are based on principled mathematical
formulations and are effective in practice. In this paper, we
focus on these four methods. They appear very different at first
sight. Some of them are dimensionality reduction methods while the
others are cluster analysis methods. Some of them work with original
data while the others work with similarity matrix. In fact, they are
closely related. Some pair-wise connections have been found in the
literature, including LE and Ncut \cite{belkin2003laplacian}, PCA and K-means \cite{ding2004k}, spectral
clustering and K-means \cite{dhillon2004unified}. However, the relations remain pair-wise,
they are not yet rigorously integrated into a unified framework.

\subsection{Sparse Representation (SR)}
Sparse representation is a more recently developed code-based representation method, which represents data with a dictionary and sparse codes. In existing SRs, the dictionary is generally over-complete, i.e., the size of dictionary (number of words/atoms) is larger than the data dimension. The codes are called sparse for only a few nonzero entries exist or dominate.

In the past several years, the communities of signal processing,
computer vision, and pattern recognition had witnessed the great
success of various SRs \cite{bruckstein2009sparse, elad2010role,
wright2010sparse, rubinstein2010dictionaries}, e.g., compressed
sensing theory \cite{donoho2006compressed, candes2008introduction,
bruckstein2009sparse}, over-complete dictionary learning, including
\cite{olshausen1996emergence}, method of optimal directions (MOD) \cite{Engan1999Method}, KSVD \cite{aharon2006img}. In
pattern recognition, sparse codes are frequently
used as features for classification, e.g., face recognition
\cite{wright2009robust}, image classification
\cite{yang2009linear, boureau2010learning, coates2011importance}.

Almost all prevalent SRs are over-complete SRs (OSRs). Except the special application in compressed sensing, OSRs are
not related to dimensionality reduction. On the other hand, comparing with the popular applications in classification, the applications in clustering are few, the most well-known one is sparse subspace clustering (SSC) \cite{elhamifar2013sparse}, which deals with clusters lying in a union of subspaces.

\subsection{Our Work}
In this paper, we deepen and complete the relations among PCA,
K-means, LE,\footnote{The LE we refer to hereafter is slightly
different from the original one \cite{belkin2003laplacian}. The
definition will be given in later section. We call the original LE
to be normalized LE for a reason that will be clear later.} and
Rcut, so that they are unified together. Then, a new SR is developed,
called spectral sparse representation (SSR), which evolves from the
unification of the four methods, bearing
inherent relations to dimensionality reduction and cluster analysis. The spectral graph theory underlies all the methods and integrates them into a framework.

The idea can be briefly described as follows. PCA, LE, K-means, and Rcut are written into forms working with the same matrix, the Laplacian matrix from the spectral graph theory. The low-dimensional codes of PCA and LE are the leading eigenvectors of this matrix. When an ideal graph condition is met, which implies perfectly separable clusters exist in the data, the indicator vectors of K-means and Rcut become the leading eigenvectors, thus PCA, K-means, LE, and Rcut are unified. When the condition is relaxed, the leading eigenvectors become noisy indicator vectors, which are sparse and still contain some cluster information of the data, the unification then evolves to SSR.

The first step to the unification is to establish the bilateral
conversion between PCA and LE, and that between K-means and Rcut. In
effect, we convert the objective of one method into the form of the
other. It turns out that PCA and K-means are equivalently working
with a similarity matrix built by the Gram matrix of data, i.e.,
linear kernel matrix. So they can be converted to the forms of
kernel methods. Roughly speaking, PCA and K-means are linear LE and
linear Rcut respectively. Conversely, the similarity matrix used by
LE and Rcut can be converted to a Gram matrix of some virtual data.
Thus the objectives of LE and Rcut can be written into the forms of
PCA and K-means respectively. Our theory includes two
versions: one works with the original data, called linear
version; the other works with the similarity matrix, called kernel
version.

The second step to the unification is to bridge the link between Rcut and LE under an ideal graph condition. This is
done through spectral graph theory \cite{chung1997spectral, mohar1991laplacian, mohar1997some}, as
\cite{belkin2003laplacian} did. The solution of LE is a set of eigenvectors corresponding to the zero eigenvalues, and
that of Rcut is a set of indicator vectors corresponding to the best partition. Under the condition, the two sets of
vectors span the same subspace, and they are equivalent in the sense of a rotation transform. In brief, the indicator
vectors become the leading eigenvectors of LE, that is Rcut and LE are unified. Based on the results, the equivalence between PCA and K-means is automatically built. Therefore, the unification of PCA, K-means, LE, and Rcut is established.

The unification evolves to SSR when the ideal condition is relaxed,
i.e., in a noisy case. In this case, the leading eigenvectors of PCA
and LE span the same subspace as the noisy version of the
indicator vectors. We call the noisy indicator vectors sparse codes,
since the codes of each data point is usually dominated by one
entry, and the ``noise'' indeed quantitatively reflects the
overlapping status of the clusters. We develop an algorithm, called NSCrt,
to find a rotation matrix, by which the eigenvectors turn into the
noisy indicator vectors. Furthermore, as an application of SSR, a
clustering algorithm, sparse cut (Scut), is developed, which
determines the cluster membership of each data point by simply
checking the maximal entry of its code vector.

SSR can establish extended relations to a series of methods, including 1) OSRs, e.g., \cite{olshausen1996emergence}, MOD, KSVD, 2) manifold learning, e.g., kernel PCA, MDS, Isomap, LLE, and 3) subspace clustering, e.g., SSC, low-rank representation (LRR) \cite{liu2013robust}.

A diagram of the relations between PCA, linear SSR, OSR, and K-means is shown in Figure~\ref{fig:diagram} (it will be discussed in details in Section~\ref{sec:OSR vs SSR}). The codes of the methods change from compact to sparse, and then to extreme sparse, leading to clustering. The main
framework of our work is shown in Figure~\ref{fig:framework}.

\begin{figure}[h]
\center{
\includegraphics[width=12cm]{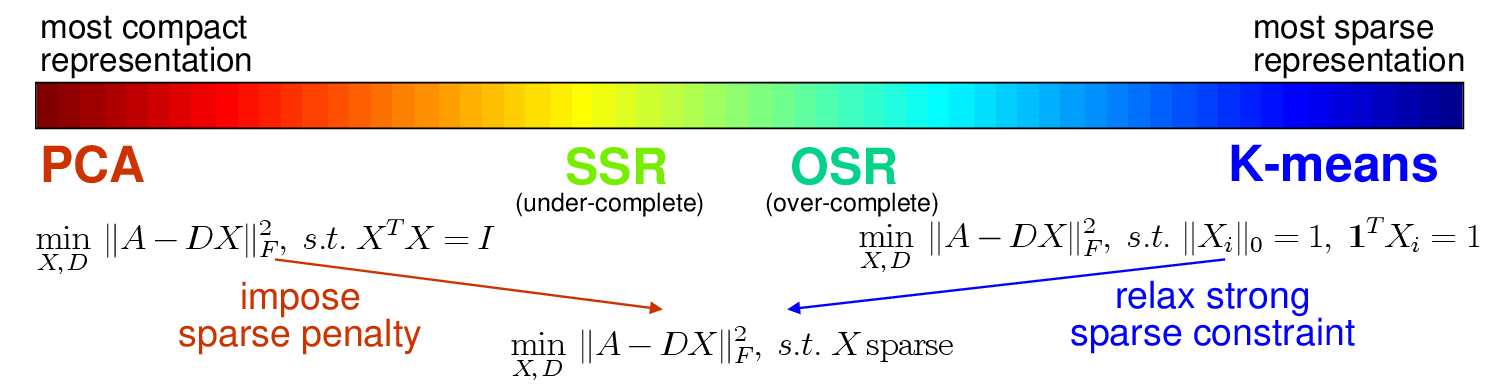}
}\caption{A diagram of the relations between dimensionality reduction, sparse representation, and cluster analysis,
represented by PCA, linear SSR, OSR, and K-means.}\label{fig:diagram}
\end{figure}

\begin{figure*}[htbp]
\centering{ \subfigure[linear
version]{\label{fig:framework_ori}\includegraphics[width=7.4cm]{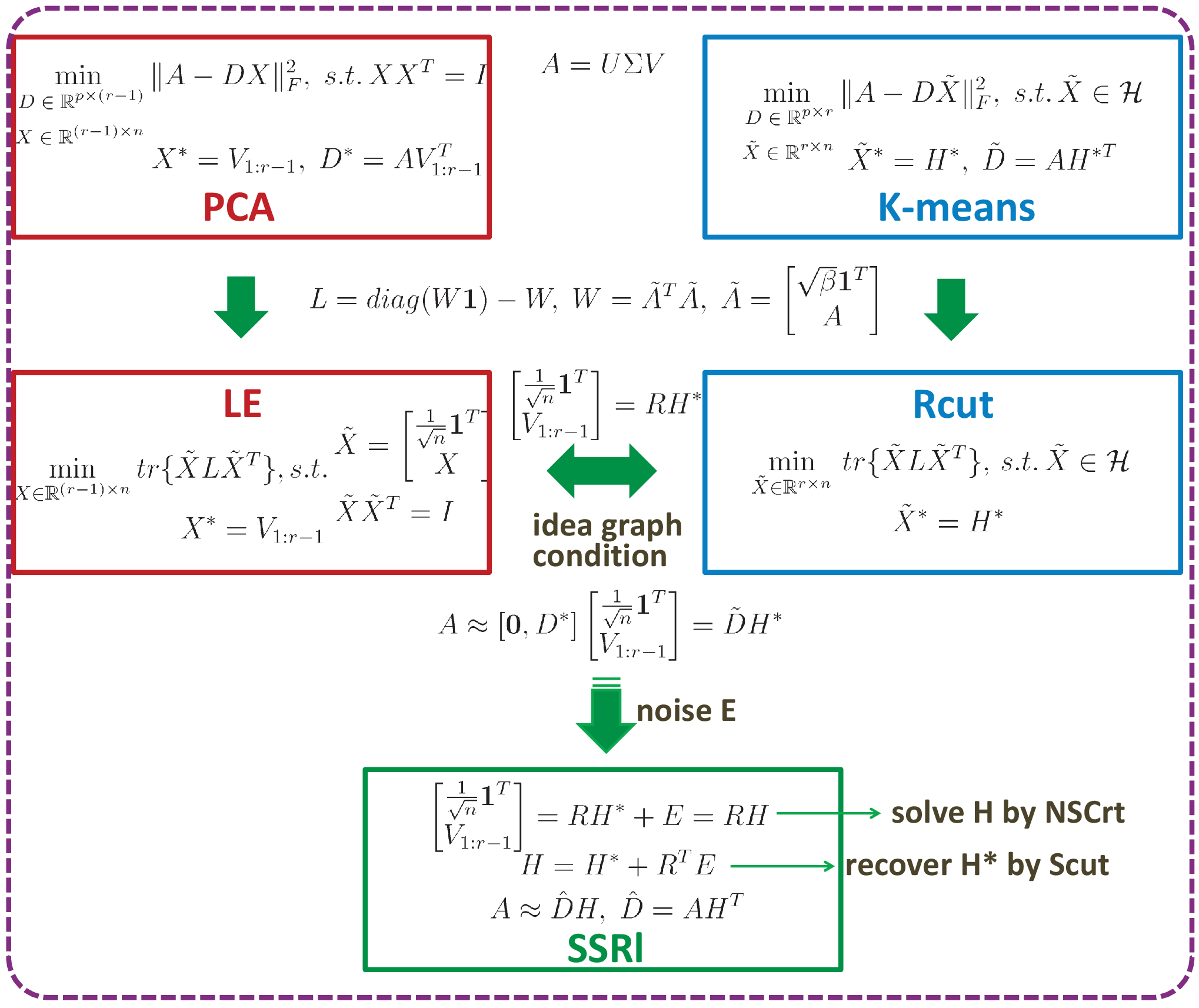}}\hspace{3mm} \subfigure[kernel
version]{\label{fig:framework_ker}\includegraphics[width=7.4cm]{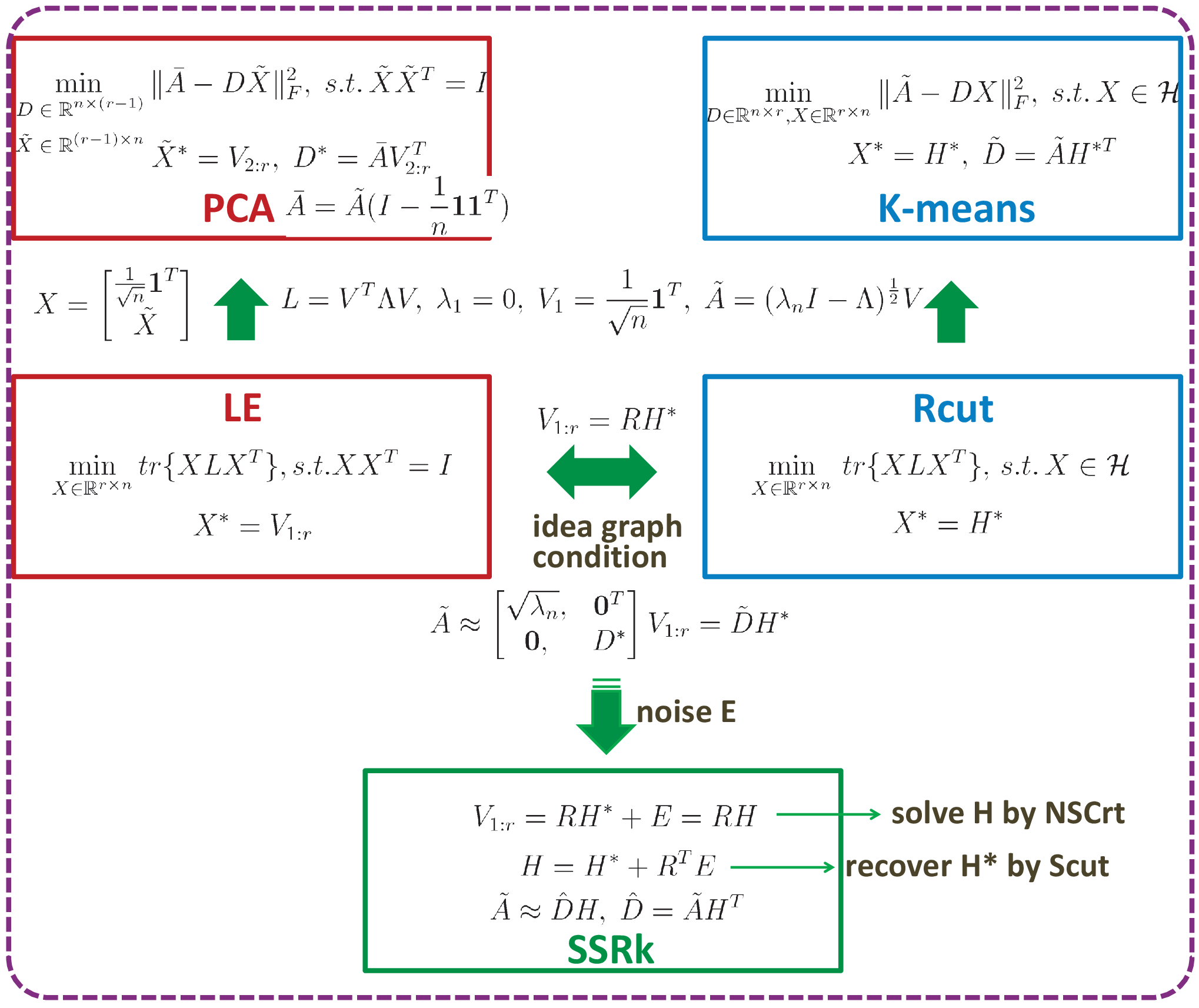}} } \caption{Main framework of
the paper.}\label{fig:framework}
\end{figure*}

The features and contributions of the paper are four-fold:

\begin{enumerate}
  \item A spectral graph theory-based framework unifying dimensionality reduction, cluster analysis, and sparse representation has been established, including inherent relations between PCA, K-means, LE, Rcut, SSR, and extended relations to OSRs (\cite{olshausen1996emergence}, MOD, KSVD), manifold learning (kernel PCA, MDS, Isomap, LLE), and subspace clustering (SSC, LRR).

  \item A new sparse representation, SSR, is developed. It is inherently related to dimensionality reduction and
  cluster analysis, and shares broad relations to many other methods. Lying in the intermediate between dimensionality reduction and cluster analysis, SSR combines merits of both sides. It achieves dimensionality reduction with the same fidelity as PCA/LE meanwhile revealing cluster structure of data.
      In contrast to the hard clustering nature of K-means/Rcut, SSR is soft and descriptive: it reveals the
      underlying clusters, and also describes the overlapping status of them. In contrast to OSRs, the sparse
      codes of SSR can be directly used for clustering, and the sparsity in SSR is implicitly determined by the
      data structure rather than imposed explicitly. If the clusters overlap less, the codes become sparser,
      and vice versa. Finally, SSR has two versions, a linear version, which is under-complete, complementing
      to OSRs, and a more powerful kernel version. For both versions, sparse codes of new data that are out of the
      sample set can be easily obtained.

  \item An algorithm, NSCrt, is developed to solve the sparse codes of SSR. It is simple yet efficient, having a linear computational complexity about the data size. Experimental results demonstrated that, it can effectively recover the underlying solutions.

  \item A clustering algorithm, Scut, is derived from SSR, which performs clustering by checking the maximal entry of each sparse code-vector. Owing to the good performance of NSCrt, Scut outperforms K-means based spectral clustering methods, which depend on initialization and easily get trapped in local minima. The one-shot solution obtained by Scut is comparable to the optimal result of K-means that are run many times.
\end{enumerate}

The rest of the paper is arranged as follows. Section~\ref{sec:related work} introduces the related work.
Section~\ref{sec:review} reviews PCA, K-means, Rcut, and LE, and introduces the most appropriate forms for the
unification. Section~\ref{sec:bilateral conversions} establishes the bilateral conversions between PCA and LE,
K-means and Rcut. Section~\ref{sec:unification} presents the equivalence relation of LE and Rcut, and unifies the
four methods. Section~\ref{sec:SSR} proposes SSR, out-of-sample extensions, NSCrt, and Scut.
Section~\ref{sec:experiments} shows the experimental results. Section~\ref{sec:discussions} investigates the extended relations to other methods. The paper is concluded with Section~\ref{sec:future work}.

\tb{Notations}. The major notations used are listed in
Table~\ref{tab:notations}.

\begin{table*}[t]
\caption{Notations.}\label{tab:notations} \vskip 0.1in
\begin{center}
%\vskip -0.1in
\begin{scriptsize} %
\begin{tabular}{|c||m{8.7cm}|}

\hline Notation & Interpretation \\\hline\hline

$\mb{1}$ & A vector of uniform value 1.\\\hline

$\text{null}(A)$ & Null space of matrix $A$.\\\hline

$\text{span}(A)$ & Subspace spanned by columns (or rows, when clear
under the context) of $A$.\\\hline

$\diag(v)$ & A diagonal matrix with the diagonal being vector
$v$.\\\hline

$A=[A_1,\dots,A_n]\in \mathbb{R}^{p\times n}$ & Data matrix with $n$
samples of dimension $p$ arranged column-wise.\\\hline

$F\in \mathbb{R}^{K\times n}$  & Indicator vectors/matrix. The $k$th
row $F_k$ is the indicator vector of the $k$th cluster $C_k$:
$F_{ki}=1$ if $A_i\in C_k$, and $F_{ki}=0$ otherwise. $F_i$ denote
the $i$th column of $F$. Note that $FF^T=\diag([n_1,\dots,n_K])$,
i.e., $F_{k_1}F_{k_2}^T=n_k$, where $n_k$ is the size of $C_k$, if
$k_1=k_2$; and $F_{k_1}F_{k_2}^T=0$ otherwise.\\\hline

$\mathcal{F}=\{F\in \mathbb{R}^{K\times
n}|\,\|F_i\|_0=1,\mb{1}^TF_i=1, \forall i\}$ & All possible
$K$-partitions of $n$ samples.\\\hline

$H^*\in \mathbb{R}^{K\times n}$  & Normalized indicator
vectors/matrix. $H^*=(FF^T)^{-1/2}F$. It implies
$H^*_k=\frac{1}{\sqrt{n_k}}F_k$, $\|H^*_k\|_2=1$, and $H^*H^{*T}=I$.
In SSR, we use $H$ to denote the noisy version of $H^*$, i.e.,
sparse codes.
\\\hline

$\mathcal{H}=\{H^*|\,H^*=(FF^T)^{-1/2}F,F\in \mathcal{F}\}$ &
Normalized version of $\mathcal{F}$.\\\hline

$V_{1:r}$ & The leading $r$ eigenvectors if $V$ is an eigenvector
matrix.\\\hline

\end{tabular}
\end{scriptsize}
\end{center}
\vskip -0.1in
\end{table*}
%-------------------------------------------------------------------------
\section{Related Work}\label{sec:related work}
First of all, we should clarify that the unifying framework we investigate is different to many frameworks or unified
views in the literature, e.g., \cite{bengio2004out, Yan2007Graph, de2012least}, which usually focus on studying
the general objective function shared by a set of methods, whose contents as well as solutions may be unrelated to each
other. Our framework investigates the inherent relations, e.g., the equivalences of solutions and the conditions when
they hold. Besides, although there are combined applications of dimensionality reduction, cluster analysis, and SR, we
do not consider heuristic hybrid models, e.g., different methods are added together or form a pipeline to accomplish
certain task. We will introduce the related work below: pair-wise relations of PCA, K-means, LE, and spectral
clustering; OSRs; and clustering methods close to Scut.

\subsection{Pair-wise Relations of PCA, K-means, LE, and Spectral
Clustering}\label{sec:related work:unification}

\tb{Normalized LE and Ncut} \cite{belkin2003laplacian}: their objectives can be written into similar forms except that
spectral clustering additionally requires the variable to be an indicator matrix for the purpose of clustering. The
solution of normalized LE is a set of generalized eigenvectors, while that of Ncut is indicator vectors. When a
condition is met, which is in fact the ideal graph condition in this paper, the indicator vectors become the leading
generalized eigenvectors, so the two methods become equivalent. When the condition is nearly met, the generalized
eigenvectors are a rotation of some noisy indicator vectors. Thus, clustering is usually done by postprocessing the
generalized eigenvectors, e.g., applying K-means on the generalized eigenvectors. In addition to Normalized LE and
Ncut, there is a more elementary counterpart, LE and Rcut, which will make the connections to PCA and K-means
straightforward. However, they have been largely overlooked. In this paper, we will focus on LE and Rcut.

\tb{PCA and K-means} \cite{ding2004k}: their objectives can be written into similar forms except that PCA drops a
constant component and K-means constrains a variable to be discrete indicator. PCA is thus viewed as a relaxation
of K-means. The relaxation relation indicates that clustering may be done by applying K-means on the normalized
principal components of PCA, i.e., some leading eigenvectors. It is easy to see that this pair shares many common
features with the first pair. But this analogy may be ignored. In this paper, we will convert PCA and K-means to LE and
Rcut respectively, and then with the help of spectral graph theory, it is discovered that PCA and K-means are
inherently related, beyond sharing similar forms. They are even exactly equivalent under an ideal graph condition. It
further reveals that the heuristic clustering scheme of applying K-means on the normalized PCs is in fact a linear Rcut
algorithm. Similar clustering strategies that apply dimensionality reduction as preprocessing have been studied
\cite{Cohen2014Dimensionality} and frequently applied, however, the preprocessing is mostly considered for the reasons
of efficiency, denoising, etc. rather than inherent relation.

\tb{Spectral clustering and kernel K-means}
\cite{dhillon2004unified}: the trace minimization or maximization
form of spectral clustering can be converted to the trace
maximization form of (weighted) kernel K-means, so spectral
clustering can be solved by (weighted) kernel K-means.
However, on the one hand, the conversion stops at kernel K-means and does not go
further to the dictionary-based representation. In this
paper, we will show that the dictionary form can enrich our
understanding of spectral clustering. On the other hand, the
conversion from K-means to spectral clustering is
absent.\footnote{Among the spectral clustering, Ncut corresponds to
weighted K-means, while Rcut corresponds to K-means. The conversion
from K-means to Rcut is feasible. However, in the same rigorous
sense, the conversion from weighted K-means to Ncut seems
infeasible. Hence, we focus on K-means and Rcut in this paper.} We
will supplement this part and show that this conversion will lead to
a linkage between dimensionality reduction and cluster analysis.

The above relations remain pair-wise and have not yet been
integrated into a unified framework.

\subsection{Over-complete Sparse Representation (OSR)}\label{sec:related work:SR}
The representative work of OSRs include compressed sensing theory, sparse and redundant dictionary
learning and their applications in pattern recognition.

Compressed sensing theory
\cite{donoho2006compressed, candes2008introduction,
bruckstein2009sparse} states that if a signal is sparse under some
basis, then far fewer measurements than Shannon theorem indicates
are required to reconstruct the signal. Over-complete dictionary
is designed and should satisfy good mutual coherence property, i.e.,
atoms of the dictionary are not close to each other. If the signal
is intrinsically sparse enough, the sparse codes are guaranteed to
be solved exactly by an $\ell_0$-norm based greedy algorithm, orthogonal matching pursuit
(OMP) \cite{pati1993orthogonal}, or by an $\ell_1$-norm based
convex optimization, basis pursuit (BP) \cite{chen1998atomic}. In
this case, the signal can be exactly reconstructed. The compressed
sensing theory is concerned with signal compression and
reconstruction, in this paper, we will focus on the semantic aspect, i.e., cluster structure.

%It was found that the atoms of the dictionary are similar to the receptive fields in primary visual cortex.

\cite{olshausen1996emergence}, MOD \cite{Engan1999Method}, and KSVD \cite{aharon2006img} learn an over-complete
dictionary and sparse codes, based on different sparsity penalties and optimizations. Taking advantage of the
reconstruction power of SR, KSVD has achieved good performance on a series of image processing problems
\cite{elad2010role}, e.g., image compression, denoising, deblurring, inpainting, and super resolution. The model and
optimization process of KSVD are generalized from those of K-means \cite{aharon2006img}. However, effective
application of KSVD in clustering is hardly found.

In pattern recognition, after OSRs are solved, the sparse codes are frequently used as features for classification,
e.g., face recognition \cite{wright2009robust}, image classification \cite{yang2009linear, boureau2010learning,
coates2011importance}. \cite{gao2010kernel} developed a kernel SR extending the work of \cite{wright2009robust} and
\cite{yang2009linear}. The kernel trick is applied on the dictionary rather than the data. Since the dictionary is
unknown beforehand, the optimization is complex. On the other hand, there are few work on clustering. Except in model
combination way, e.g., \cite{ramirez2010classification,Dong2011Sparsity}, the most famous one may be sparse subspace
clustering (SSC) \cite{elhamifar2013sparse}, which deals with data that lie in a union of independent low-dimensional
linear/affine subspaces, with each subspace corresponding to a cluster.

In summary, OSRs are mainly
applied to signal compression, image representation, and image
classification. They are not related to cluster analysis and
dimensionality reduction. The applications in clustering are few.
%OSRs were originally derived from signal (image) processing domain where signal compression and reconstruction are the
%concerns. There, sparsity and over-completeness are vital for the theoretical guarantee and empirical success. However,
%when they are applied to pattern recognition, class semantic becomes the prime concerns. The justifications of sparsity
%and over-completeness become less obvious. For classification, controversies once arose
%\cite{rigamonti2011sparse,shi2011face,zhang2011sparse}. For clustering, the applications are few.
%Finally, almost all
%SRs insist on over-completeness, we suggest that it should be reexamined. First, although over-completeness ensures up
%to perfect reconstruction, reconstruction is not the focus of classification and clustering; second, according to the
%objectives of most OSRs, over-completeness is a choice rather than a necessary working condition. The procedures run as
%usual if the dictionary is set to be under-complete.

\subsection{Spectral Clustering Methods}\label{sec:related work:Scut}
Spectral clustering usually consists of two steps: first some eigenvectors are solved, then some post-processing
techniques are employed to recover the discrete indicator vectors from the eigenvectors, i.e., finishing clustering.
Scut follows this manner. Besides the most simple post-processing technique, K-means, as classical spectral clusterings
used \cite{von2007tutorial}, there are some other efforts have been made.

\cite{zelnik2004self} is the closet to Scut. It tried to find a rotation matrix through which rows of the eigenvectors
(column-wise) best align with the canonical coordinate system. Then as Scut, non-maximum suppression was applied to
finish clustering. However, the rotation matrix is solved by gradient descent under some objective, whose computational
cost is high when the number of clusters becomes somewhat large. \cite{yu2003multiclass} directly found a set of
discrete indicator vectors and a rotation matrix such that the discrete indicator vectors approximate the
row-normalized eigenvectors after the rotation. Rather than finding a rotation matrix, \cite{zass2005unifying} found a
set of soft indicator vectors by nonnegative factorization of a transformed similarity matrix, then non-maximum
suppression was applied.

The post-processing step is important for spectral clustering.
Although many recent variants of spectral clustering were proposed,
producing various relaxed indicator matrices, e.g.,
\cite{wang2009clustering, nie2011spectral, yang2012clustering},
they still rely on the above techniques, especially K-means, to
finish the clustering. Little progress was observed in dealing with
the post-processing.

\section{Reviews: PCA, K-means, Rcut, and LE}\label{sec:review}

The section will introduce the classical formulations of PCA, K-means, Rcut, and LE, as well as the most appropriate forms for the unification: the dictionary representation forms for PCA and K-means, the trace optimization forms for Rcut and LE. Along with them, important properties and interpretations are elaborated.

\subsection{Principal Component Analysis (PCA)}\label{sec:pca}
Given mean-removed data matrix $A\in \mathbb{R}^{p\times n}$
($A\mb{1}=\mb{0}$), PCA \cite{jolliffe2002principal} approximates
the data by representing them in another basis, called loadings
$Q=[Q_1,\dots,Q_r]\in \mathbb{R}^{p\times r}$ ($r\ll p$):
$A_i\approx QY_i$, $\forall i$, where $Y_i\in \mathbb{R}^r$ are the
low-dimensional codes, called principal components (PCs). The
loadings are computed sequentially by maximizing the scaled
covariance matrix $C=AA^T$:
\begin{equation}
\max_{Q_i}\;Q_i^TCQ_i,\;\st\;\|Q_i\|_2=1,\;Q_i^TQ_j=0,\;j=1,\dots,i-1.
\end{equation}
Let $A=U\Sigma V$ be the compact SVD of $A$,\footnote{By default we
use the compact form of SVD in this paper, and assume a descending
order of the singular values.} where $\Sigma\in\mathbb{R}^{m\times
m}$, and $m$ is the rank of $A$. Then $C=U\Sigma^2U^T$, and the
solution is $Q^*=U_{1:r}$, $Y^*=Q^{*T}A=\Sigma_{1:r}V_{1:r}$, where
$\Sigma_{1:r}$ is a diagonal matrix containing the first $r$
singular values.

PCA can also be derived from the well-known Eckart-Young theorem
\cite{eckart1936approximation}, where it takes a dictionary
representation form:

\begin{theorem}\label{theo:Eckart-Young}
(\textbf{Eckart-Young Theorem})
%\footnote{It holds for general matrix not necessarily mean-removed.}
Let $A=U\Sigma V$ be the compact SVD of $A$. A particular solution
of the rank $r$ ($r\leq \rank(A)$) approximation of $A$
\begin{equation}\label{equ:Eckart-Young}
\min_{D,X} \|A-DX\|^2_F,\;\st\,XX^T=I,
\end{equation}
where $D\in\mathbb{R}^{p\times r}$ and $X\in\mathbb{R}^{r\times n}$,
is provided by
\begin{equation}\label{equ:solution of Eckart-Young}
X^*=V_{1:r},\;D^*=AX^{*T}=U_{1:r}\Sigma_{1:r}.
\end{equation}
\end{theorem}
In fact, for any rotation matrix $R\in\mathbb{R}^{r\times r}$,
$R^TR=I$,  $RX^*$ and $D^*R^T$ also constitutes a solution.
%It is
%easy to see that if we impose normality on $D$ rather than on $X$,
%we get $D^*=U_{1:r}$, i.e., loadings of PCA, and
%$X^*=\Sigma_{1:r}V_{1:r}$, i.e., PCs of PCA.
%Henceforth, we view (\ref{equ:Eckart-Young}) as the main formulation
%of PCA.
$V_{1:r}$ are the normalized PCs, where the weights $\Sigma_{1:r}$
have been transferred to the loadings, and $U_{1:r}\Sigma_{1:r}$ becomes
the scaled loadings. If normality were imposed on $D$ rather than $X$, PCA could be exactly recovered.

In this paper, we take (\ref{equ:Eckart-Young}) as the formulation
of PCA. It has the following interpretations: 1) each sample
is approximated by a dictionary representation, e.g., $A_i\approx
DX_i$, namely a linear combination of the atoms (columns) of
dictionary $D$ with the codes $X_i$. 2) The dictionary in turn comes from
the linear combinations of the samples with transpose of the codes, $D^*=AX^{*T}$. 3) Eliminating
the dictionary, we have $A_i\approx A(X^{*T}X_i^*)$, i.e., $A_i$ can
be approximated by a linear combination of the whole data set with
weight vector $X^{*T}X_i^*$. The Gram matrix of codes, $X^{*T}X^*$,
encodes a linear relationship of the data within rank-$r$ limitation.

\subsection{K-means}\label{sec:k-means}
Given data matrix $A\in \mathbb{R}^{p\times n}$,
%\footnote{K-means is translation invariant, we may assume $A$ is mean-removed.}
K-means \cite{macqueen1967some, ding2004k} aims to partition the
data into $K$ clusters via minimizing the within-cluster variance:
\begin{equation}
\min_{C_1,\dots,C_K,D} \sum_{k=1}^{K}\sum_{A_i\in
C_k}\|A_i-D_k\|_2^2,
\end{equation}
where $C_k$ is the $k$th cluster, $D=[D_1,\dots,D_K]\in
\mathbb{R}^{p\times K}$ are the $K$ cluster centers.
%The global optimal solution is hard to obtain.
After initializing $D$, K-means finds a locally optimal solution via
alternating two steps: given $D$, assign each sample to the nearest
cluster center; given the current partition, update $D_k$ by the
mean of its members.

Using indicator matrix, the above objective can be written
in a dictionary representation form \cite{ding2004k}:
\begin{equation}\label{equ:k-means A-DF}
\begin{aligned}
\min_{D,F}\,&\sum_{i=1}^n\|A_i-DF_i\|_2^2=\|A-DF\|^2_F,\st\,F\in
\mathcal{F}.
\end{aligned}
\end{equation}
It can be solved alternately. Given $D$, solving $F_i$ corresponds
to the nearest-center search; given $F$, it becomes a linear
regression problem, and $D^*=AF^{\dag}=AF^T(FF^T)^{-1}$. Since
$FF^T$ is a diagonal matrix of the cluster sizes, the atoms of $D^*$ are
the averages of cluster members. Hence, the solution process is
exactly identical to that of traditional K-means.
%The problem can be solved alternately with respect to $D$ and $F$.
%It can be shown that the solution process turns out to be exactly
%the above K-means algorithm. We would not elaborate.

Substituting $D^*=AF^T(FF^T)^{-1}$ into (\ref{equ:k-means A-DF}), we
obtain $\min_{F}\,\|A-AF^T(FF^T)^{-1}F\|^2_F$, $\st\,F\in
\mathcal{F}$. Using the normalized indicator instead, it is
equivalent to
\begin{equation}\label{equ:k-means A-AX^TX}
\begin{aligned}
\min_{X}\,\|A-AX^TX\|^2_F,\,\st\,X\in \mathcal{H},
\end{aligned}
\end{equation}
and
\begin{equation}\label{equ:k-means A-DX}
\begin{aligned}
\min_{D,X}\,\|A-DX\|^2_F,\,\st\,X\in \mathcal{H},
\end{aligned}
\end{equation}
with $D^*=AX^{*T}$.\footnote{For simplicity, we will frequently use
the symbol $D$ to represent dictionaries in this paper. However,
they may not be the same variable when appearing in different
objectives, e.g., (\ref{equ:k-means A-DF}) and (\ref{equ:k-means
A-DX}).}

In this paper, we take (\ref{equ:k-means A-DX}) as the formulation
of K-means. It has the following interpretations: 1) Each sample is
allowed to be represented by only one atom, $A_i\approx DX_i$,
$\|X_i\|_0=1$. The representation error is thus large. 2) Each atom
is a weighted average of the cluster members, $D^*_k=AX_k^{*T}$. But note that it is not a proper cluster center now, since $X_k$ does not sum to 1. 3)
Eliminating the dictionary, we get $A_i\approx A(X^{*T}X_i^*)$. The weight vector $X^{*T}X_i^*$ takes the form of, e.g.,
$[\frac{1}{n_k},\dots,\frac{1}{n_k},0,\dots,0]^T$, and they always
sum to 1, $\mb{1}^T(X^{*T}X_i^*)=1$. It implies that $A_i$ is
approximated by the mean of the cluster members, i.e., cluster
center. The Gram matrix of the codes $X^{*T}X^*$ reflects the
cluster structure of the data.

\subsection{Ratio Cut (Rcut)}\label{sec:ratio cut}

Given an undirected graph of $n$ vertices (data points), with the adjacency matrix defined to be a similarity matrix $W\in \mathbb{R}^{n\times n}$, measuring the pairwise similarities between data points, $W_{ij}=W_{ji}\geq 0$, Rcut \cite{chan1994spectral, von2007tutorial} seeks a partition via minimizing the one-versus-rest weights:
\begin{equation}
\min_{C_1,\dots,C_K} \sum_{k=1}^K \frac{g(C_k,\bar{C}_k)}{n_k},
\end{equation}
where $C_k$ is the $k$th cluster, and $\bar{C}_k$ is the complement
of it, $g(C_k,\bar{C}_k)=\sum_{i\in C_k} \sum_{j\in \bar{C}_k}
W_{ij}$, $n_k$ is the size of $C_k$.

The objective can be expressed more explicitly by the indicator
notation together with the graph Laplacian matrix
\cite{von2007tutorial}:
%\begin{equation}
\begin{align}
&\min_{C_1,\dots,C_K} \sum_{k=1}^K
\frac{g(C_k,\bar{C}_k)}{n_k}\label{equ:ratio cut 1}\\
=&\min_{F} \sum_{k=1}^K \frac{F_kLF_k^T}{F_kF_k^T},\,\st\,
F\in \mathcal{F}\label{equ:ratio cut 2}\\
=&\min_{X} \sum_{k=1}^K
X_kLX_k^T,\,\st\,X_k=(F_kF_k^T)^{-1/2}F_k\\
=&\min_{X}\;\tr\{XLX^T\},\,\st\,X\in \mathcal{H}.\label{equ:ratio
cut}
\end{align}
%\end{equation}
$L$ is the Laplacian matrix defined as $L=S-W$, where $S$ is a
diagonal degree matrix with the $i$th diagonal element being the sum
of weights on the $i$th row, $s_i=\sum_{j=1}^n W_{ij}$. In the above
objectives, the equivalence of (\ref{equ:ratio cut 1}) and
(\ref{equ:ratio cut 2}) as well as the solution of (\ref{equ:ratio
cut}) depend on some properties of the Laplacian matrix
\cite{mohar1991laplacian, mohar1997some} and the ideal graph
condition. These properties and the condition are fundamental to the
unification and SSR. We now introduce.

Firstly, there is an elementary property of Laplacian matrix:
\begin{equation}\label{equ:hLh}
\forall f\in \mathbb{R}^n,\;f^TLf=\sum_{i<j}(f_i-f_j)^2W_{ij} \geq
0.
\end{equation}
Based on this elementary property, the following properties can be
derived:
\begin{enumerate}
\item{} $L$ is positive semi-definite.

\item{} When $f$ is an indicator vector for $C_k$,
$f^TLf=g(C_k,\bar{C}_k)$.

\item{} $\text{null}(L)\neq \{0\}$. Vector $\mb{1}\in \mathbb{R}^n$ is always
an eigenvector of eigenvalue 0.
%Besides, $\mb{1}^T\in
%\text{span}(F)$, $\forall F\in\mathcal{F}$; $\mb{1}^T\in
%\text{span}(X)$ and $X^TX\mb{1}=\mb{1}$, $\forall X\in\mathcal{H}$.

\item{} The multiplicity of eigenvalue 0 equals the number of connected
components of the graph, and the $K$ indicator vectors of the
partition span the eigenspace of eigenvalue 0.
\end{enumerate}

The equivalence of (\ref{equ:ratio cut 1}) and (\ref{equ:ratio cut
2}) is due to property 2. Property 4 plays a key role in the
unification, and it is closely related to the ideal graph condition.
The condition was informally called ``ideal case'' in
the literature \cite{von2007tutorial,
ng2002spectral}. We define it precisely:

\begin{definition}\label{def:ideal}
\tb{(Ideal graph condition)} Targeting for $K$ clusters, if there
are exactly $K$ connected components in the graph, then the graph
(or similarity matrix) is called ideal (with respect to $K$
clusters).
\end{definition}

The condition implies the between-cluster weights are all zero: $W_{ij}=0$, if the $i$th and $j$th points are of different clusters. If members in the same cluster are arranged consecutively,
the similarity matrix would consist of $K$ diagonal blocks, and
there are no sub diagonal blocks in each block.

However, the condition is often met \emph{nearly} in
practice: some nonzero weights exist between clusters. If arranged orderly, the similarity
matrix would consist of $K$ noisy diagonal blocks. Furthermore,
according to matrix perturbation theory \cite{ding2001spectral,
von2007tutorial}: 1) the $K$ smallest eigenvalues are close to 0,
with the smallest one still being 0; 2) the eigenspace is spanned by
the $K$ \emph{noisy} indicator vectors, which do not deviate much
from those in the ideal case; 3) the $K$ eigenvectors, noted by $V\in \mathbb{R}^{K\times n}$, are a
rotation of the $K$ noisy indicator vectors $H$, i.e., $V=RH$, where
$R$ is a unitary matrix.

Based on the above properties, we now introduce the solution scheme
of Rcut. Objective (\ref{equ:ratio cut}) is hard to solve due to the
discrete nature of $X$. For this reason, it is relaxed to
\begin{equation}\label{equ:ratio cut relax}
\min_{X}\;\tr\{XLX^T\},\,\st\,XX^T=I,
\end{equation}
where the discrete constraint is ignored. By the Ky Fan theorem
\cite{fan1961generalization}, the solution set consists of all the
rotations of the $K$ smallest eigenvectors of $L$. Traditionally,
the clustering is finished by applying K-means on the columns of
these eigenvectors. The underlying rationale, which inspires SSR and
Scut, is explained below.

When the graph is ideal, the underlying indicator matrix $H^*$ would
be the smallest eigenvectors, thus it constitutes a solution. Note
that the $i$th column of $H^*$, $H^*_i\in\mathbb{R}^{K}$, is the
indicator vector of the $i$th data point: there is only one nonzero entry,
and the index of it signals the cluster membership. All members of a
cluster share the same pattern, and the indicator vectors of two
data points belonging to different clusters are orthogonal.
%But, any rotation of $H^*$ spans the same null space and
%constitutes an optimal solution too.
However, in numerical computation, the computed eigenvectors may be
a rotated version, $V=RH^*$. The indicator structure disappears.
Nevertheless, members of a cluster still share the same column
pattern, and orthogonality is preserved. K-means can be applied on
these columns to finish clustering. When the graph is nearly ideal,
the eigenvectors are rotated noisy indicators. In this case, to
finish clustering, some post-processing technique, e.g., K-means, is
more necessary.

%\begin{definition}\label{def:near ideal}
%\tb{(Near ideal graph condition)} If the data are nearly separable
%according to the graph of similarity matrix, and the underlying
%components are equivalent to the presumed clusters of data, the
%graph (or similarity matrix) is near ideal. In this case, the
%between-cluster weights are small or only a few of them are
%nonzero.\footnote{If the points in the same cluster are arranged
%consecutively, the similarity matrix is a noisy block-diagonal
%matrix.}
%\end{definition}

\subsection{Laplacian Eigenmap (LE)}\label{sec:laplacian eigenmap}

LE \cite{belkin2003laplacian} is a dimensionality reduction algorithm that works with the same
similarity matrix $W$ as Rcut. It finds low-dimensional codes $X_i\in\mathbb{R}^{r}$ for the $i$th data:
\begin{equation}\label{equ:laplacian eigenmap}
\min_{X}\;\frac{1}{2}\sum_{i,j}\,\|X_i-X_j\|_2^2W_{ij}=\tr\{XLX^T\},\,\st\,XX^T=I,
\end{equation}
where $X_i$ is the $i$th column of $X$, $L=S-W$ is the Laplacian
matrix of $W$. The constraint $XX^T=I$ is to avoid trivial
solution.\footnote{The original LE imposes $XSX^T=I$, whose
counterpart in spectral clustering is Ncut
\cite{shi2000normalized,belkin2003laplacian}. However, the
simplified version here, whose counterpart in spectral clustering is
Rcut, is more elementary, and it makes the connections to PCA and K-means feasible.} The left-hand side of (\ref{equ:laplacian
eigenmap}) implies that if a data pair is close ($W_{ij}$ being
large), their codes should be close too ($\|X_i-X_j\|_2^2$ being
small). In this way, the data relationship contained by $W$ is
approximately preserved by the codes. The equivalence of the
left-hand side and the right-hand side is due to property
(\ref{equ:hLh}) of the Laplacian matrix.\footnote{Note that the
right-hand side takes exactly the form of the relaxed Rcut
(\ref{equ:ratio cut relax}). However, the task here focuses on
representation rather than clustering. It does not care whether the
data has cluster structure or not.}

According to the right hand side of (\ref{equ:laplacian eigenmap}),
any rotation of the $r$ smallest eigenvectors of $L$ is a qualified
solution. Let $L=V^T\Lambda V$ be the full spectral decomposition of
$L$,\footnote{By default we use the full form of spectral
decomposition for the Laplacian matrix in this paper, which includes
all zero eigenvalues, and we assume an ascending order of the
eigenvalues.} where $V_1=\frac{1}{\sqrt{n}}\mb{1}^T$ is the smallest
eigenvector (see property 1 and 3 of the Laplacian matrix). For the
purpose of dimensionality reduction, the constant component $V_1$ is
often omitted. We can either restrict $X\mb{1}=\mb{0}$ to obtain $r$ dimensional codes $X^*=V_{2:r+1}$, or restrict $X=\begin{bmatrix}
\frac{1}{\sqrt{n}}\mb{1}^T\\
\tilde{X}\end{bmatrix}$ to obtain $r-1$ dimensional codes
$\tilde{X}^*=V_{2:r}$. In this paper, we adopt the latter scheme,
and sometimes also use (\ref{equ:laplacian eigenmap}).

%-------------------------------------------------------------------------
\section{Bilateral Conversions: PCA $\leftrightarrow$ LE, K-means $\leftrightarrow$ Rcut}\label{sec:bilateral conversions}
We begin the unification by establishing the bilateral conversions
between PCA and LE, K-means and Rcut. We will convert the objective
of one method into the form of the other. It will turn out that PCA
and K-means are equivalently working on a similarity matrix defined
by the Gram matrix of data. Consequently, they can be
converted to the forms of kernel methods. Briefly, PCA and K-means are linear cases of LE and Rcut respectively. Conversely, the similarity matrix used by LE and Rcut can be
converted to a Gram matrix of some virtual data, enabling the objectives
of them to be written into the forms of PCA and K-means respectively.

\subsection{PCA $\rightarrow$ LE}\label{sec:pca to laplacian
eigenmap}

For consistency, we assume PCA is to find $r-1$ PCs hereafter.
\begin{proposition}\label{theo:pca to laplacian}
(\tb{PCA $\rightarrow$ LE}) PCA is a special LE that uses linear
kernel. The objective of PCA
\begin{equation}\label{equ:pca A-DX}
\min_{D,X} \|A-DX\|^2_F,\;\st\,XX^T=I,
\end{equation}
where $A\mb{1}=\mb{0}$, $X\in\mathbb{R}^{(r-1)\times n}$, $r-1\leq
\rank(A)$, can be converted to the form of LE
\begin{equation}\label{equ:pca-lap}
\min_{X}\,\tr\{\tilde{X}L\tilde{X}^T\},\,\st\,\tilde{X}=\begin{bmatrix}
\frac{1}{\sqrt{n}}\mb{1}^T\\
X\end{bmatrix},\,\tilde{X}\tilde{X}^T=I.
\end{equation}
$L$ is the Laplacian matrix of $W$, and $W=\tilde{A}^T\tilde{A}$,
where $\tilde{A}=
\begin{bmatrix}
\sqrt{\beta}\mb{1}^T\\
A
\end{bmatrix}$ is called augmented data, and
$\beta=-\min_{ij}\,(A^TA)_{ij}$ so that $W_{ij}\geq 0$, $\forall
i,j$. $W$ is a kernel matrix built by the linear kernel function
$\phi(A_i,A_j)=A_i^TA_j+\beta$.
\end{proposition}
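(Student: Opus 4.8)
The plan is to show that both optimization problems are solved by the same matrix, namely the top $r$ right singular vectors $V_{1:r}$ of $A$, by computing the Laplacian $L$ explicitly and diagonalizing it through the SVD $A=U\Sigma V$. For the PCA side this is immediate: the objective (\ref{equ:pca A-DX}) is precisely the Eckart-Young formulation (\ref{equ:Eckart-Young}), so by Theorem~\ref{theo:Eckart-Young} its minimizer is $X^*=V_{1:r}$, with the full solution set being all rotations $RV_{1:r}$, $R^TR=I$. The work is therefore concentrated on the LE side, where I must verify that the constrained trace minimization (\ref{equ:pca-lap}) selects exactly the same eigenvectors.

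First I would write out the similarity matrix. Since $\tilde{A}^T\tilde{A}=A^TA+\beta\mb{1}\mb{1}^T$, we get $W=A^TA+\beta\mb{1}\mb{1}^T$, so $W_{ij}=A_i^TA_j+\beta$ is the claimed linear kernel, and the choice $\beta=-\min_{ij}(A^TA)_{ij}$ guarantees $W_{ij}\geq0$. The decisive simplification comes from mean-removal: each degree is $s_i=\sum_j W_{ij}=A_i^T(A\mb{1})+n\beta=n\beta$, because $A\mb{1}=\mb{0}$, so $S=n\beta I$ and hence
\begin{equation}
L=S-W=n\beta I-\beta\mb{1}\mb{1}^T-A^TA.
\end{equation}
This form shows that, once $\mb{1}$ is accounted for, $L$ is simultaneously diagonalizable with $A^TA$.

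Next I would diagonalize $L$ using the SVD. Writing $A^TA=V^T\Sigma^2V$, each right singular vector satisfies $V_i\mb{1}=0$ (since $A\mb{1}=\mb{0}$ forces $V\mb{1}=\mb{0}$), so $LV_i^T=(n\beta-\sigma_i^2)V_i^T$; moreover $L\mb{1}=\mb{0}$, and any vector orthogonal to $\mb{1}$ and to all $V_i^T$ lies in the eigenspace of eigenvalue $n\beta$. Because $L$ is positive semi-definite (the property following (\ref{equ:hLh})) we have $n\beta\geq\sigma_1^2$, so in ascending order the spectrum is $0\leq n\beta-\sigma_1^2\leq\cdots\leq n\beta-\sigma_m^2\leq n\beta$; the smallest eigenvector is $\frac{1}{\sqrt n}\mb{1}^T$ and the next $r$ are exactly $V_{1:r}$. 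For the objective, expanding the block form and using $L\mb{1}=\mb{0}$ gives $tr\{\tilde{X}L\tilde{X}^T\}=tr\{XLX^T\}$, while the constraint $\tilde{X}\tilde{X}^T=I$ is equivalent to $XX^T=I$ together with $X\mb{1}=\mb{0}$ (orthogonality to the fixed first row). Thus (\ref{equ:pca-lap}) minimizes $tr\{XLX^T\}$ over orthonormal $X$ that are orthogonal to the null eigenvector $\mb{1}$; by the Ky Fan theorem \cite{fan1961generalization} its solution set consists of all rotations of the $r$ smallest nontrivial eigenvectors, i.e. $X^*=V_{1:r}$ up to rotation, matching the PCA side.

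The main obstacle I anticipate is the bookkeeping that ties the two spectra together correctly: one must confirm that the degree matrix truly collapses to $n\beta I$ (which fails without mean-removal) and that the ordering reverses, so that the largest singular values of $A$ correspond to the smallest nontrivial Laplacian eigenvalues while the constant eigenvector $\mb{1}$ is cleanly separated off by the fixed first row of $\tilde{X}$. A secondary technical point is matching the full solution sets rather than a single minimizer: both sides inherit the same rotational ambiguity, and any degeneracy (repeated $\sigma_i$, in particular $\sigma_r=\sigma_{r+1}$) enlarges both sets identically since both are governed by the same singular values, so equality of the solution sets follows, with $V_{1:r}$ a common solution.
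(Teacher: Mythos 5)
Your proposal is correct and matches the paper's own proof in its essentials: the paper, too, identifies $W=\beta\mb{1}\mb{1}^T+A^TA$ as the linear kernel matrix, diagonalizes $L=\beta n I-\tilde{A}^T\tilde{A}$ through the SVD of $A$ (obtaining eigenvalues $0$, $\beta n-\sigma_i^2$, and $\beta n$ with eigenvectors $\frac{1}{\sqrt{n}}\mb{1}^T$, $V$, and the complement $\hat{V}$), invokes positive semi-definiteness of $L$ to get $\beta n-\sigma_1^2\geq 0$, and reads off $V_{1:r}$ as the common solution exactly as you do. The only difference is organizational: the paper additionally carries the objective through an explicit chain of equivalences (the Pythagorean identity, the padding trick, restricting $X\mb{1}=\mb{0}$ via Eckart--Young, and shifting by the constant $\beta n\,tr\{\tilde{X}\tilde{X}^T\}$) so that the two objectives literally coincide up to additive constants on the feasible set, whereas you solve the two problems independently via Eckart--Young and Ky Fan and match their solution sets --- a logically sufficient route to the stated solution-equivalence claim.
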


\begin{proof}
In the following, we will eliminate the dictionary of PCA and
convert PCA to the trace form, after that we define a similarity
matrix to be the Gram matrix of data, and then apply a padding trick
to make it nonnegative, finally we convert it to the Laplacian form.

We now go into details. Substituting $D=AX^{T}$ into (\ref{equ:pca A-DX}), we
obtain $\min_{X}$ $\|A-AX^TX\|^2_F$, $\st$ $XX^T=I$. Since $XX^T=I$, $X^TX$ is a projection matrix,
%it means to find a $r$
%dimensional subspace spanned by the rows of $X$ so that after
%projecting rows of $A$ onto it, the reconstructions $AX^TX$ are
%closest to their original ones, i.e., the lengths are best
%preserved.
then by the Pythagorean theorem,
\begin{equation}\label{equ:Pythagorean}
\|A-AX^TX\|^2_F=\|A\|^2_F-\|AX^TX\|^2_F=\tr\{A^TA\}-\tr\{XA^TAX^T\}.
\end{equation}
Hence the objective is equivalent to
%\footnote{Applying kernel trick on the Gram matrix $A^TA$ we get kernel PCA \cite{scholkopf1998nonlinear}.}
$\min_{X}$ $\tr\{X(-A^TA)X^T\}$, $\st\,XX^T=I$. In this form, PCA is close to
LE. Define the similarity matrix to be $W=A^TA$, which is in fact
built by linear kernel. By $A\mb{1}=\mb{0}$,
$S=\diag(W\mb{1})=\mb{0}$, so $(-A^TA)$ is a Laplacian matrix,
%\footnote{Applying kernel trick on $L=\diag(A^TA\mb{1})-A^TA$, we get traditional LE (\ref{equ:laplacian eigenmap}).}
except that $W$ may not be nonnegative. Again, by
$A\mb{1}=\mb{0}$, $\mb{1}$ is a singular vector of $A$ with singular value 0. Then according to the Eckart-Young theorem, $X^*$ must be orthogonal to $\mb{1}$, so we can restrict
$X\mb{1}=\mb{0}$, and the objective is equivalent to
\begin{equation}\label{equ:pca to laplacian}
\begin{aligned}
&\min_X\,-\tr\{X(\beta
\mb{1}\mb{1}^T+A^TA)X^T\}-(\frac{1}{\sqrt{n}}\mb{1})^T(\beta
\mb{1}\mb{1}^T+A^TA)(\frac{1}{\sqrt{n}}\mb{1}),\,\st\,\begin{split}&X^TX=I\\&X\mb{1}=\mb{0}\end{split}\\
%=&\min_{X}\,-\tr\{\tilde{X}(\beta
%\mb{1}\mb{1}^T+A^TA)\tilde{X}^T\},\,\st\,\tilde{X}=\begin{bmatrix}
%\frac{1}{\sqrt{n}}\mb{1}^T\\
%X\end{bmatrix},\,\tilde{X}\tilde{X}^T=I\\
=&\min_{X}\,\tr\{\tilde{X}(-\tilde{A}^T\tilde{A})\tilde{X}^T\},\,\st\,\tilde{X}=\begin{bmatrix}
\frac{1}{\sqrt{n}}\mb{1}^T\\
X\end{bmatrix},\,\tilde{X}\tilde{X}^T=I.
\end{aligned}
\end{equation}
In above, firstly we padded $A^TA$ uniformly with
$\beta\geq 0$ to make $A^TA$ nonnegative, e.g., choosing
$\beta=-\min_{ij}\,(A^TA)_{ij}$ (Since $A\mb{1}=\mb{0}$,
surely $\min_{ij}\,(A^TA)_{ij}\leq 0$); secondly we augmented $X$ to
be $\tilde{X}$. The padding trick can be
interpreted as translating $A$ along a new dimension so that the
inner products of all data-pairs become nonnegative.

Finally, since $S=\diag(\tilde{A}^T\tilde{A}\mb{1})=\beta n I$ and
$\tilde{X}\tilde{X}^T=I$, we can turn (\ref{equ:pca to laplacian})
into a problem with respect to the Laplacian matrix $L=\beta n
I-\tilde{A}^T\tilde{A}$, which is (\ref{equ:pca-lap}), a LE that uses linear kernel.
\end{proof}

By the equivalence of the objectives, we can obviously obtain the
equivalence of the solutions.
\begin{corollary}\label{theo:pca to laplacian solution}
The solutions of $X$ in (\ref{equ:pca A-DX}) and (\ref{equ:pca-lap})
are equivalent. Let the SVD of $A$ be $A=U\Sigma V$, one common
solution is $X^*=V_{1:r-1}$.
\end{corollary}

During the conversions, the right singular vectors of $A$ are preserved. It can be shown that
\begin{equation}\label{equ:svd tildeA}
\tilde{A}=\tilde{U}\tilde{\Sigma}\tilde{V}=\begin{bmatrix}1&\\&U\end{bmatrix}\begin{bmatrix}\sqrt{\beta
n}&\\
&\Sigma\end{bmatrix}\begin{bmatrix}\frac{1}{\sqrt{n}}\mb{1}^T\\V\end{bmatrix},
\end{equation}
i.e., the SVD of $\tilde{A}$ is also an augmentation of that of $A$.
Further, we obtain the spectral decomposition of $L$:
\begin{equation}\label{equ:sd L}
\begin{split}
L=\beta n I-\tilde{A}^T\tilde{A}&=
\begin{bmatrix}\tilde{V}^T&\hat{V}^T\end{bmatrix}\begin{bmatrix}\beta n
I-\tilde{\Sigma}^2&\\
&\beta
nI\end{bmatrix}\begin{bmatrix}\tilde{V}\\\hat{V}\end{bmatrix}\\& =
\begin{bmatrix}\frac{1}{\sqrt{n}}\mb{1}&V^T&\hat{V}^T\end{bmatrix}\begin{bmatrix}0&&\\
&\beta nI-\Sigma^2&\\&&\beta
nI\end{bmatrix}\begin{bmatrix}\frac{1}{\sqrt{n}}\mb{1}^T\\V\\\hat{V}\end{bmatrix},
\end{split}
\end{equation}
%since $L$ is positive semi-definite we know that $\beta n-\sigma_1^2\geq0$.
where $\hat{V}$ is the complement of $\tilde{V}$, i.e., $[\tilde{V}^T, \hat{V}^T]$ forms a unitary matrix. By
(\ref{equ:sd L}), one solution of LE is $X^*=V_{1:r-1}$, equivalent to that of PCA.

\subsection{K-means $\rightarrow$ Rcut}\label{sec:k-means to ratio cut}

\begin{proposition}\label{theo:kmeans to ratio cut}
(\tb{K-means $\rightarrow$ Rcut}) Concerning objectives, K-means is
a special Rcut that uses linear kernel. The objective of K-means
\begin{equation}\label{equ:kmeans A-DX}
\min_{D,X} \|A-DX\|^2_F,\;\st\,X\in \mathcal{H},
\end{equation}
where $X\in\mathbb{R}^{K\times n}$, can be converted to the form of
Rcut
\begin{equation}\label{equ:kmeans-ratio cut}
\min_{X}\,\tr\{XLX^T\},\,\st\,X\in \mathcal{H},
\end{equation}
where $L$ is a Laplacian matrix defined as Proposition~\ref{theo:pca
to laplacian}.
\end{proposition}

\begin{proof}
It follows most of the steps of Proposition~\ref{theo:pca
to laplacian}. Since $XX^T=I$, $X^TX$ is a projection matrix, by
(\ref{equ:Pythagorean}), (\ref{equ:k-means A-AX^TX}) is equivalent
to $\max_{X}\, \tr\{XA^TAX^T\},\;\st\,X\in \mathcal{H}$. Applying the padding trick, and since $\tr\{X\mb{1}\mb{1}^TX^T\}=n$,
it is equivalent to
\begin{equation}\label{equ:k-means to ratio cut}
\begin{aligned}
\min_X\,\tr\{X(\beta n I-(\beta
\mb{1}\mb{1}^T+A^TA))X^T\},\;\st\,X\in \mathcal{H}
%=&\min_X\,\tr\{X(\beta n I-\tilde{A}^T\tilde{A})X^T\},\;\st\,X\in
%\mathcal{H},\,where\,\tilde{A}=
%\begin{bmatrix}
%\sqrt{\beta}\mb{1}^T\\
%A
%\end{bmatrix}\\
=\min_X\,\tr\{XLX^T\},\;\st\,X\in \mathcal{H},
\end{aligned}
\end{equation}
where $\beta=-\min_{ij}\,(A^TA)_{ij}$. This is exactly the form of
Rcut that uses linear kernel.
\end{proof}

\begin{corollary}\label{theo:kmeans to ratio cut solution}
The solutions of $X$ in (\ref{equ:kmeans A-DX}) and
(\ref{equ:kmeans-ratio cut}) are equivalent, whereas the algorithmic
results of applying K-means to (\ref{equ:kmeans A-DX}) and applying
Rcut to (\ref{equ:kmeans-ratio cut}) may be different. Let
$A=U\Sigma V$ be the SVD of $A$, where $\Sigma\in\mathbb{R}^{m\times
m}$. Assume $K\leq m+1$. In (\ref{equ:kmeans A-DX}), K-means
algorithm equivalently works with the full PCs $\Sigma V$. In
(\ref{equ:kmeans-ratio cut}), Rcut algorithm applies K-means
algorithm on the leading $K-1$ normalized PCs $V_{1:K-1}$.
\end{corollary}

\begin{proof}
Since K-means is rotation-invariant, working with $U\Sigma
V$ is equivalent to working with $\Sigma V$ for K-means. On the other hand, Rcut
algorithm applies K-means on the smallest eigenvectors of $L$, which
is $[\frac{1}{\sqrt{n}}\mb{1},V_{1:K-1}^T]^T$ by (\ref{equ:sd L}),
and since K-means is translation-invariant too, it is equivalent to
applying K-means on $V_{1:K-1}$, i.e., the first $K-1$ PCs with
singular values ignored.
\end{proof}

When the graph, embodied by $W=\tilde{A}^T\tilde{A}$, is ideal (Definition~\ref{def:ideal}), ignoring or keeping the
singular values makes no difference, since all of them are equal (by property 4 of Laplacian matrix and (\ref{equ:sd
L})). In this case, Rcut algorithm is equivalent to applying PCA to reduce the dimensionality of the data first, and
then applying K-means to finish clustering. However, in practice, this condition can hardly be met even nearly, which
essentially requires that after translating along a new dimension different clusters become near orthogonal. As a
consequence, the indicator structure underlying the PCs may deviate much from the ideal one, and the singular values
diverge. In this case, following exact PCA and applying K-means on the PCs, $\Sigma_{1:K-1}V_{1:K-1}$, may be
preferable.
%Note that, by the inherent relation, the reduced dimensionality $K-1$ is deterministic. It is interesting
%to find a counterpart in the supervised learning domain: LDA for classification \cite{duda2012pattern}, where the
%target dimension is also $K-1$. In contrast, in previous work that use dimensionality reduction as preprocessing, the
%target dimension is computed according to some considerations, e.g., fidelity \cite{Cohen2014Dimensionality}.

%There are two justifications. Firstly the
%singular values weight their corresponding singular vectors,
%%\footnote{The weighting is impossible in the original Rcut formulation, since the ``weights" include zero.}
%secondly the PCs are the best rank-$K$ approximation to the data by
%the Eckart-Young theorem.

\subsection{LE $\rightarrow$ PCA}

\begin{proposition}\label{theo:laplacian to pca}
(\tb{LE $\rightarrow$ PCA}) The objective of LE
\begin{equation}\label{equ:laplacian}
\min_{\tilde{X}}\,\tr\{XLX^T\},\,\st\,X=\begin{bmatrix}
\frac{1}{\sqrt{n}}\mb{1}^T\\
\tilde{X}\end{bmatrix},\,XX^T=I,
\end{equation}
where $X\in\mathbb{R}^{r\times n}$, can be converted to the form of
dictionary representation
\begin{equation}\label{equ:laplacian-dictionary}
\min_{D,\tilde{X}} \|\tilde{A}-DX\|^2_F,\;\st\,X=\begin{bmatrix}
\frac{1}{\sqrt{n}}\mb{1}^T\\
\tilde{X}\end{bmatrix},\,XX^T=I,
\end{equation}
and further to the form of PCA
\begin{equation}\label{equ:laplacian-pca}
\min_{D,\tilde{X}}
\|\bar{A}-D\tilde{X}\|^2_F,\;\st\,\tilde{X}\tilde{X}^T=I,
\end{equation}
Let $L=V^T\Lambda V$ be the spectral decomposition of $L$, where
$V_1=\frac{1}{\sqrt{n}}\mb{1}^T$ and $\lambda_n$ is the largest
eigenvalue, $\tilde{A}$ is defined as $\tilde{A}=(\lambda_n I-\Lambda)^{\frac{1}{2}}V$,
called virtual data, and $\bar{A}$ is the mean-removed version of $\tilde{A}$.
\end{proposition}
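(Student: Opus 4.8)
The plan is to reverse the PCA $\rightarrow$ LE conversion of Proposition~\ref{theo:pca to laplacian}: instead of padding a Gram matrix to manufacture a Laplacian, I would manufacture a Gram matrix out of the given Laplacian. The starting point is the algebraic identity implied by the definition of the virtual data. Writing the spectral decomposition $L = V^T\Lambda V$ and $\tilde{A} = (\lambda_n I - \Lambda)^{1/2}V$, a direct computation gives
$$\tilde{A}^T\tilde{A} = V^T(\lambda_n I - \Lambda)V = \lambda_n I - L,$$
so that $L = \lambda_n I - \tilde{A}^T\tilde{A}$. The square root is well-defined precisely because $\lambda_n$ is the largest eigenvalue, so $\lambda_n I - \Lambda$ is positive semi-definite. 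This is exactly the structural role played by $\beta n I - \tilde{A}^T\tilde{A}$ in the forward direction.

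With this identity in hand, the first reduction is immediate. Using the embedded constraint $XX^T = I$,
$$tr\{XLX^T\} = \lambda_n\,tr\{XX^T\} - tr\{X\tilde{A}^T\tilde{A}X^T\} = \lambda_n r - \|\tilde{A}X^T\|_F^2,$$
so minimizing the LE objective is equivalent to maximizing $\|\tilde{A}X^T\|_F^2$ subject to $XX^T = I$. Since $X^TX$ is then an orthogonal projection, the same Pythagorean argument used for (\ref{equ:pca A-AX^TX}) gives $\|\tilde{A} - \tilde{A}X^TX\|_F^2 = \|\tilde{A}\|_F^2 - \|\tilde{A}X^T\|_F^2$, so the maximization is equivalent to $\min_X\|\tilde{A} - \tilde{A}X^TX\|_F^2$; with the optimal dictionary $D^* = \tilde{A}X^T$ this is precisely the dictionary-representation form (\ref{equ:laplacian-dictionary}).

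The step I expect to be the crux is passing from (\ref{equ:laplacian-dictionary}) to the genuine PCA form (\ref{equ:laplacian-pca}), since this is where the fixed first row $\frac{1}{\sqrt{n}}\mb{1}^T$ must be absorbed into a mean-removal rather than mirrored from the forward proof. I would split the dictionary as $D = [d_0,\,\tilde{D}]$ so that $DX = \frac{1}{\sqrt{n}}d_0\mb{1}^T + \tilde{D}\tilde{X}$, and minimize over the free column $d_0$ first. For fixed $\tilde{D},\tilde{X}$ this is a row-wise least-squares fit of a constant, whose optimum subtracts the row-mean, so the residual becomes $(\tilde{A} - \tilde{D}\tilde{X})(I - \tfrac{1}{n}\mb{1}\mb{1}^T)$. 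Here the constraint does the real work: the embedded $XX^T = I$ forces $\tilde{X}\mb{1} = \mb{0}$ (orthogonality to the fixed first row), whence $\tilde{D}\tilde{X}(I - \tfrac{1}{n}\mb{1}\mb{1}^T) = \tilde{D}\tilde{X}$, while $\tilde{A}(I - \tfrac{1}{n}\mb{1}\mb{1}^T) = \bar{A}$ by definition. The residual collapses to $\bar{A} - \tilde{D}\tilde{X}$, yielding (\ref{equ:laplacian-pca}) after renaming $\tilde{D}$ to $D$.

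Finally, to pin down the solution I would read off the SVD of $\bar{A}$. Because $V_1 = \frac{1}{\sqrt{n}}\mb{1}^T$ is the zero-eigenvalue eigenvector of $L$, projecting out the $\mb{1}$ direction annihilates the first row of $\tilde{A}$ and leaves $\bar{A}$ with rows $\sqrt{\lambda_n - \lambda_i}\,V_i$ for $i \geq 2$; hence the leading $r-1$ right singular vectors of $\bar{A}$ are $V_2,\dots,V_r$, and the Eckart--Young Theorem gives $\tilde{X}^* = V_{2:r}$. The same computation shows $\bar{A}\mb{1} = \mb{0}$, so the PCA optimum automatically satisfies $\tilde{X}\mb{1} = \mb{0}$ and the constraint dropped in (\ref{equ:laplacian-pca}) costs nothing, confirming that the solutions of $\tilde{X}$ in all three formulations coincide.
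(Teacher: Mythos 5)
Your proof is correct, and its skeleton coincides with the paper's: the same identity $\lambda_n I - L = V^T(\lambda_n I-\Lambda)V = \tilde{A}^T\tilde{A}$, the same Pythagorean conversion of the trace objective into $\mathop{min}_{X}\|\tilde{A}-\tilde{A}X^TX\|^2_F$ and thence (\ref{equ:laplacian-dictionary}) with $D^*=\tilde{A}X^{T}$, and the same read-off of $\tilde{X}^*=V_{2:r}$ from the explicit SVD $\bar{A}=I(\lambda_n I-\Lambda_{2:n})^{\frac{1}{2}}V_{2:n}$ via the Eckart--Young Theorem. The one place you genuinely diverge is the passage from (\ref{equ:laplacian-dictionary}) to (\ref{equ:laplacian-pca}): the paper substitutes the optimal dictionary and factors the complementary projector, $I-X^TX=(I-\frac{1}{n}\mb{1}\mb{1}^T)(I-\tilde{X}^T\tilde{X})$ (valid because the constraint forces $\tilde{X}\mb{1}=\mb{0}$), turning the residual into $\|\bar{A}-\bar{A}\tilde{X}^T\tilde{X}\|^2_F$; you instead keep $D$ free, split it as $D=[d_0,\tilde{D}]$, and eliminate $d_0$ by partial least-squares minimization (a variable-projection step), with the same relation $\tilde{X}\mb{1}=\mb{0}$ collapsing the residual to $\bar{A}-\tilde{D}\tilde{X}$. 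The two mechanisms are equivalent, but yours has a small expository advantage: it works directly at the level of the dictionary variables and makes explicit why the constraint $\tilde{X}\mb{1}=\mb{0}$ may be dropped in (\ref{equ:laplacian-pca}) --- since $\bar{A}\mb{1}=\mb{0}$, the Eckart--Young optimum satisfies it automatically, so restricting to it costs nothing --- a point the paper handles only implicitly through the SVD of $\bar{A}$ (and, for the related restriction step, in a footnote about $r$ versus the null-space dimension of $L$). Conversely, the paper's projector factorization is the form that gets reused elsewhere (e.g.\ in Section~\ref{sec:k-means to ratio cut} and in Theorem~\ref{theo:original ssr}), so its choice pays off later in the development even though your route is equally rigorous here.
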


\begin{proof}
Since $\lambda_n I-L=V^T(\lambda_n
I-\Lambda)V=\tilde{A}^T\tilde{A}$, we have
\begin{equation}\label{equ:laplacian eigenmap to pca}
\begin{aligned}
\min_{X}\;\tr\{XLX^T\} \Leftrightarrow \max_{X}\;\tr\{X(\lambda_n
I-L)X^T\} =\max_{X}\;\tr\{X\tilde{A}^T\tilde{A}X^T\},\,\st\,XX^T=I.
\end{aligned}
\end{equation}
Further, by (\ref{equ:Pythagorean}), (\ref{equ:laplacian eigenmap to
pca}) is equivalent to $\min_{X}$ $\|\tilde{A}-\tilde{A}X^TX\|^2_F$, $\st$ $XX^T=I$,
and also $\min_{D,X}$ $\|\tilde{A}-DX\|^2_F$, $\st$ $XX^T=I$, with $D^*=\tilde{A}X^{*T}$. There remains minor difference to PCA: $\tilde{A}$ is not mean-removed. We now tackle this problem. Since $X=[
\frac{1}{\sqrt{n}}\mb{1}, \tilde{X}^T]^T$, the objective is equivalent to $\min_{\tilde{X}}$ $
\|\tilde{A}(I-\frac{1}{\sqrt{n}}\mb{1}\frac{1}{\sqrt{n}}\mb{1}^T)(I-\tilde{X}^T\tilde{X})\|^2_F
$ $=\min_{\tilde{X}}$ $\|\bar{A}-\bar{A}\tilde{X}^T\tilde{X}\|^2_F,\;\st\,\tilde{X}\tilde{X}^T=I$,\footnote{In precise, it holds when
$r\geq$ the null-space dimension of $L$ so that $\mb{1}$ must be a
component of the solution.}
where
$\bar{A}=\tilde{A}(I-\mb{1}\mb{1}^T/n)$
removes the mean of $\tilde{A}$. The objective finally
is equivalent to (\ref{equ:laplacian-pca}), for
$D^*=\bar{A}\tilde{X}^{*T}$.
\end{proof}

Keeping track of the eigenvectors of $L$ during the conversions, and by the Eckart-Young theorem, we have
\begin{corollary}\label{theo:laplacian to pca solution}
The solutions of $\tilde{X}$ in (\ref{equ:laplacian}),
(\ref{equ:laplacian-dictionary}), and (\ref{equ:laplacian-pca}) are
equivalent, one of them is $V_{2:r}$.
\end{corollary}
%Since $\lambda_n I-L=V^T(\lambda_n I-\Lambda)V$, the eigenvalues of
%$\lambda_n I-L$ reverse those of $L$. Especially, the eigenvalue 0
%in $L$ becomes the largest $\lambda_n$ in $\lambda_n I-L$.
%Nevertheless, $\lambda_n I-L$ remains positive semi-definite,
%%\footnote{Though any value larger than $\lambda_n$ meets our purpose, we have chosen the minimal one.}
%and the eigenvectors remain the same. Note that the SVD of
%$\tilde{A}$ is trivial: $\tilde{A}=I(\lambda_n
%I-\Lambda)^{\frac{1}{2}}V$. By the Eckart-Young theorem, one
%solution of $X$ in (\ref{equ:eigenmap A-DX}) is $V_{1:r}$. $\bar{A}$
%removes the largest component of $\tilde{A}$, its SVD is
%$\bar{A}=I(\lambda_n I-\Lambda_{2:n})^{\frac{1}{2}}V_{2:n}$. Again
%by the Eckart-Young theorem, one solution of PCA
%(\ref{equ:laplacian-pca}) is $V_{2:r}$, identical to that of LE.

\subsection{Rcut $\rightarrow$ K-means}\label{sec:ratio cut to k-means}

\begin{proposition}\label{theo:ratio to kmeans}
(\tb{Rcut $\rightarrow$ K-means}) The objective of Rcut
\begin{equation}\label{equ:ratio}
\min_{X}\,\tr\{XLX^T\},\,\st\,X\in \mathcal{H},
\end{equation}
can be converted to the form of K-means
\begin{equation}\label{equ:ratio-kmeans}
\min_{D,X} \|\tilde{A}-DX\|^2_F,\;\st\,X\in \mathcal{H},
\end{equation}
where $\tilde{A}$ is defined as Proposition~\ref{theo:laplacian to
pca}.
\end{proposition}

%\begin{proof}
The conversion resembles last section except that an
additional constraint $X\in \mathcal{H}$ is added. The proof is omitted. At last, we also have
%We briefly
%present them. (\ref{equ:ratio}) is equivalent to
%\begin{equation}\label{equ:ratio A-AX^TX}
%\min_{X} \|\tilde{A}-\tilde{A}X^TX\|^2_F,\;\st\,X\in \mathcal{H},
%\end{equation}
%and (\ref{equ:ratio-kmeans}) with $D^*=\tilde{A}X^{*T}$.
%\end{proof}

\begin{corollary}\label{theo:ratio to kmeans solution}
The solutions of $X$ in (\ref{equ:ratio}) and
(\ref{equ:ratio-kmeans}) are equivalent, whereas the algorithmic
results of applying Rcut to (\ref{equ:ratio}) and applying K-means
to (\ref{equ:ratio-kmeans}) may be different. Rcut algorithm relaxes
(\ref{equ:ratio}) to (\ref{equ:ratio cut relax}), and applies
K-means algorithm on the leading $K$ normalized PCs of $\tilde{A}$,
while K-means algorithm works with the full PCs, faithfully following the objective.
\end{corollary}

%The relation is similar to that in Section~\ref{sec:k-means to ratio
%cut}. K-means algorithm works with $\tilde{A}$, it exactly follows
%the original objective (\ref{equ:ratio}), since
%(\ref{equ:ratio-kmeans}) is equivalent to (\ref{equ:ratio}). In
%contrast, Rcut algorithm relaxes (\ref{equ:ratio}) to
%(\ref{equ:ratio cut relax}) and applies K-means algorithm on the
%smallest eigenvectors of $L$, or the leading normalized PCs of
%$\tilde{A}$. But unlike Section~\ref{sec:k-means to ratio cut}, the
%ideal graph condition can be met nearly or even perfectly here. When
%the data set is moderately large, the smallest eigenvalues of $L$
%are about the same (zero) compared with the largest eigenvalue. Thus
%the largest singular values of $\tilde{A}$ do not differ much, and
%applying K-means on the exact PCs dose not make much sense.

\section{Unification of PCA, K-means, LE, and Rcut}\label{sec:unification}

In Section~\ref{sec:bilateral conversions}, the bilateral relations between PCA and
LE, K-means and Rcut have been established. Now, under the ideal
graph condition (Definition~\ref{def:ideal}), we will establish the
equivalence of LE and Rcut, and then the equivalence of PCA and K-means
is automatically established. Therefore, the four methods are
unified together. Roughly speaking, the fundamental principle of the unification lies in that, under the condition,
the indicator vectors become the leading singular vectors, which means the target of the cluster analysis methods,
K-means and Rcut, coincides with that of the dimensionality reduction methods, PCA and LE.

Given a data set, we have two choices: one is to work with the
original data, the other is to work with the similarity matrix
constructed from the data. Therefore the theory has two versions: a
linear version and a kernel version. Both of the conditions
of the two versions concern the similarity matrices. The similarity matrix of the
kernel version is usually built by the K-Nearest Neighbor (KNN)
graph or the $\varepsilon$-neighborhood graph with Gaussian kernel
\cite{von2007tutorial}, while that of the linear version is
defined by the Gram matrix of data, in fact built by
linear kernel.\footnote{The two versions can be integrated into
one general kernel version, which works with a similarity matrix
that can be built by linear kernel, nonlinear kernel, or any other
nonnegative and symmetric similarity measure. However, to make things
clear, we distinguish them.}

We now go into details. By spectral graph theory
\cite{chung1997spectral, mohar1991laplacian, mohar1997some}, as
elaborated in Section~\ref{sec:ratio cut}, under the ideal graph
condition, the indicator vectors become the leading eigenvectors of
the Laplacian matrix, so the equivalence of LE and Rcut is manifest.
\footnote{By the same rationale, there is a counterpart in the
literature \cite{belkin2003laplacian}, i.e., normalized LE and
Ncut.}

\begin{theorem}\label{theo:ratio-laplacian}
(\textbf{LE $\Leftrightarrow$ Rcut}) If the ideal graph condition is satisfied and $K=r$, then the solution of LE (\ref{equ:laplacian}), $X^*$, attached with $\frac{1}{\sqrt{n}}\mb{1}$, and the solution of Rcut (\ref{equ:ratio}), $H^*$, are
related by a rotation transform: $\exists R\in\mathbb{R}^{r\times r}$, $R^TR=I$, such that
\begin{equation}\label{equ:X=RH}
X^*=RH^*.
\end{equation}
\end{theorem}

By Corollary~\ref{theo:pca to laplacian solution},
\ref{theo:kmeans to ratio cut solution}, and
Theorem~\ref{theo:ratio-laplacian}, we can obtain the linear
version of the unification.

\begin{theorem}\label{theo:unification original}
(\textbf{Unification: linear version}) Working with the original data, if the ideal graph condition is satisfied and $K=r\leq \rank(A)+1$, then the solution of PCA (\ref{equ:pca A-DX}) and LE (\ref{equ:pca-lap}), e.g., $X^*=V_{1:r-1}$, and the normalized indicator solution of K-means (\ref{equ:kmeans A-DX}) and Rcut (\ref{equ:kmeans-ratio cut}), $H^*$, are related by a rotation transform: $\exists R\in\mathbb{R}^{r\times r}$, $R^TR=I$, such that
\begin{equation}\label{equ:unification original}
\begin{bmatrix}\frac{1}{\sqrt{n}}\mb{1}^T\\V_{1:r-1}\end{bmatrix}=RH^*.
\end{equation}
\end{theorem}
The theorem establishes the exact equivalence of PCA and
K-means, together with the condition when it holds.
%It also suggests
%that, as an alternative to K-means, when $K\leq \rank(A)+1$, we can
%perform clustering by linear Rcut algorithm: applying K-means on the
%normalized PCs $V_{1:K-1}$. However, as argued in
%Section~\ref{sec:k-means to ratio cut}, the ideal condition can
%hardly be met even nearly, thus, except efficiency, it may not
%promise better performance.

Finally, by Corollary~\ref{theo:laplacian to pca solution},
\ref{theo:ratio to kmeans solution}, and
Theorem~\ref{theo:ratio-laplacian}, the kernel version of the
unification is obtained.

\begin{theorem}\label{theo:unification kernel}
(\textbf{Unification: kernel version}) Working with the similarity matrix, if the ideal graph condition is satisfied and $K=r$, then the solution of LE (\ref{equ:laplacian}) and PCA (\ref{equ:laplacian-pca}), e.g., $\tilde{X}^*=V_{2:r}$, and the normalized indicator solution of Rcut (\ref{equ:ratio}) and K-means (\ref{equ:ratio-kmeans}), $H^*$, are related by a rotation transform: $\exists R\in\mathbb{R}^{r\times r}$, $R^TR=I$, such that
\begin{equation}\label{equ:unification kernel}
\begin{bmatrix}\frac{1}{\sqrt{n}}\mb{1}^T\\V_{2:r}\end{bmatrix}=RH^*.
\end{equation}
\end{theorem}

A diagram of the framework is shown in Figure~\ref{fig:framework}.

\section{Spectral Sparse Representation (SSR)}\label{sec:SSR}
Under the ideal graph condition, PCA/LE, K-means/Rcut are unified.
When this condition is met nearly but may not exactly, it leads to
SSR (cf. Figure~\ref{fig:framework}).

We provide a brief overview first. Ignoring minor factors, PCA
and LE can be written in a form that finds a dictionary and codes to
approximately represent the data:
\begin{equation}
\min_{D,X} \|A-DX\|^2_F,\;\st\,XX^T=I,
\end{equation}
where $A$ is either transformed from the original data or factored
from the Laplacian matrix. K-means and Rcut additionally impose the
indicator constraint on $X$. Let $A=U\Sigma V$, one solution of
PCA/LE is $X^*=V_{1:r}$, and any rotation of these eigenvectors
constitutes a solution. When the graph is ideal, some rotation of
the eigenvectors turns into indicator vectors (assume $H^*$),
$V_{1:r}=RH^*$. Thus PCA/LE, K-means/Rcut are unified, and the data
can be represented as $A\approx D^*V_{1:r}=\tilde{D}H^*$, where
$D^*=AV_{1:r}^T$, $\tilde{D}=AH^{*T}$. The first representation form
$D^*V_{1:r}$ is of PCA/LE, while the second $\tilde{D}H^*$ is of
K-means/Rcut. When the graph is nearly ideal, the rotation of
eigenvectors can only lead to noisy indicator vectors (assume $H$),
$V_{1:r}=RH$. When the data is represented by these noisy indicator
vectors, which we call sparse codes, we have $A\approx
D^*V_{1:r}=\hat{D}H$, where $\hat{D}=AH^T$. The representation form
$\hat{D}H$ is the spectral sparse representation.

SSR can be seen as a new representation method. It represents data
in a dimensionality reduced way while achieving the same
representation fidelity as PCA/LE. Meanwhile, the codes are sparse
and approximate to the optimal indicator matrix of K-means/Rcut, so
the underlying cluster structure can be revealed. In contrast to the
hard clustering nature of K-means/Rcut, SSR is soft and descriptive.
It describes the underlying clusters as well as the overlapping
status of them. SSR lies in the intermediate of the two kinds of
methods and combines some of the merits of both sides.

The perturbed indicator matrix is called sparse codes, since each
column of it corresponds to a data point and is usually dominated by
a single entry. A measure of sparsity can be defined as follows. For
a vector $x\in \mathbb{R}^{n}\neq \mb{0}$,
\begin{equation}\label{equ:sparsity}
sparsity(x)=\|x\|_2/\|x\|_1.
\end{equation}
$1/\sqrt{n}\leq
sparsity(x) \leq 1$. The higher the value, the sparser the vector. The minimum is achieved when the magnitudes of
the entries are uniform. The indicator matrix has only one nonzero entry in each column, achieving the maximum sparsity 1.

We provide a qualitative interpretation of the cluster information
revealed by the sparse codes. The values of the code vector suggest
how likely the data point belongs to different clusters. 1) If there
is only one positive entry, then the index of it indicates its
cluster membership. 2) If there are several positive entries, then
the data is an overlapping point of some clusters. 3) If the sizes
of the clusters are similar, the larger the entry is, the closer the
point is to the corresponding cluster. We now analyze. Assume there
are two clusters $C_1$ and $C_2$, and exactly one overlapping point $A_m$. From the view of LE, we are to
minimize $\sum_{i,j}\,\|X_i-X_j\|^2W_{ij}$, $\st$, $XX^T=I$. There
is a sub term $\sum_{i\in \mathcal{N}_m}\|X_m-X_i\|^2W_{mi}$
($\mathcal{N}_m$ is the neighborhood of $A_m$), which requires that
$X_m^*=H_m$, should be close to both $[1/\sqrt{n_1}, 0]^T$
and $[0, 1/\sqrt{n_2}]^T$ (ideal indicator vectors of neighboring
points). Hence there will be a positive value in each row of $H_m$,
with magnitudes less than $1/\sqrt{n_1}$ and $1/\sqrt{n_2}$
respectively. In order to meet the orthogonality between rows of
$H$, small negative values must appear in the other columns of $H$.
%The largest values of each row will be paired with larger negative
%values.
This is our first impression to the sparse codes.

SSR has two versions: the linear version (SSRl) and the
kernel version (SSRk).

\subsection{Kernel Version (SSRk): Similarity Matrix as Input}\label{sec:sparse kernel version}
For convenience, we repeat some formulations. Let
$L=V^T\Lambda V$, where $V_1=\frac{1}{\sqrt{n}}\mb{1}^T$,
$\lambda_1=0$ and $\lambda_n$ is the largest eigenvalue. Define
virtual data $\tilde{A}=(\lambda_n I-\Lambda)^{\frac{1}{2}}V$. LE is
equivalent to
\begin{equation}\label{equ:kernel sparse A-DX}
\min_{D\in\mathbb{R}^{n\times r},X\in\mathbb{R}^{r\times n}}
\|\tilde{A}-DX\|^2_F,\;\st\,XX^T=I.
\end{equation}
The solution set includes $X^*=V_{1:r}$, $D^*=\tilde{A}V_{1:r}^T$,
and any rotation of them.

\begin{theorem}\label{theo:kernel ssr}
(\textbf{SSRk}) When the graph is nearly ideal, there is a sparse
representation
\begin{equation}\label{equ:kernel sparse A=DH=AHH}
\tilde{A}\approx \hat{D}H=\tilde{A}(H^TH).
\end{equation}
The approximation accuracy is optimal in the Frobenius-norm sense.
$H$ is the matrix of sparse codes, i.e., noisy indicator matrix,
which satisfies
\begin{equation}\label{equ:kernel X=RH}
V_{1:r}=RH,
\end{equation}
for some rotation matrix $R$. $\hat{D}$ is a dictionary defined as
\begin{equation}\label{equ:kernel D}
\hat{D}=\tilde{A}H^T=\tilde{A}V_{1:r}^TR=\begin{bmatrix}(\lambda_n
I-\Lambda_{1:r})^{\frac{1}{2}}R\\\mb{0}\end{bmatrix}\approx
\begin{bmatrix}\lambda_n^{\frac{1}{2}}R\\\mb{0}\end{bmatrix},
\end{equation}
which has property $\hat{D}^T\hat{D}\approx \lambda_n I$, i.e., the
atoms of $\hat{D}$ are near-orthogonal and have similar lengths. The
Gram matrix of codes $H^TH\in\mathbb{R}^{n\times n}$ reflects the linear
relationship of data, which has property
\begin{equation}\label{equ:1HTH}
\mb{1}^TH^TH=\mb{1}^TV_{1:r}^TV_{1:r}=\mb{1}^T,
\end{equation}
i.e., the sum of each column is one.
\end{theorem}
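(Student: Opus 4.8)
The plan is to treat the near-ideal hypothesis as supplying, through the matrix perturbation theory quoted in Section~\ref{sec:ratio cut}, a single unitary matrix $R$ together with a noisy indicator matrix $H$ satisfying $V_{1:r}=RH$ as in (\ref{equ:kernel X=RH}); every remaining assertion is then a direct algebraic consequence of this relation together with $R^TR=I$ and the explicit factorization $\tilde{A}=(\lambda_n I-\Lambda)^{\frac{1}{2}}V$ from Proposition~\ref{theo:laplacian to pca}. So the proof splits into one structural input (existence of the rotation) and a handful of short computations.

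First I would establish the representation and its optimality. By Proposition~\ref{theo:laplacian to pca}, LE is equivalent to the dictionary problem (\ref{equ:kernel sparse A-DX}), whose Eckart-Young solution is $X^*=V_{1:r}$, $D^*=\tilde{A}V_{1:r}^T$, with reconstruction $\tilde{A}V_{1:r}^TV_{1:r}$. Substituting $V_{1:r}=RH$ and using $R^TR=I$ gives $V_{1:r}^TV_{1:r}=H^TR^TRH=H^TH$, hence $\tilde{A}\approx\tilde{A}(H^TH)=\hat{D}H$ with $\hat{D}=\tilde{A}H^T$, which is exactly (\ref{equ:kernel sparse A=DH=AHH}). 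Because the trivial SVD $\tilde{A}=I(\lambda_n I-\Lambda)^{\frac{1}{2}}V$ noted in Proposition~\ref{theo:laplacian to pca} shows that $V_{1:r}$ are the top $r$ right singular vectors of $\tilde{A}$, the projector $H^TH=V_{1:r}^TV_{1:r}$ yields the best rank-$r$ approximation, so the error is optimal in the Frobenius-norm sense.

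Next I would derive the closed form of $\hat{D}$. From $V_{1:r}=RH$ we have $H^T=V_{1:r}^TR$, so $\hat{D}=\tilde{A}V_{1:r}^TR$. The one nonroutine step is the identity $VV_{1:r}^T=\begin{bmatrix}I_r\\ \mb{0}\end{bmatrix}$, which follows from the orthonormality of the rows of $V$; this reduces $\tilde{A}V_{1:r}^T=(\lambda_n I-\Lambda)^{\frac{1}{2}}VV_{1:r}^T$ to the top block $\begin{bmatrix}(\lambda_n I-\Lambda_{1:r})^{\frac{1}{2}}\\ \mb{0}\end{bmatrix}$, giving precisely (\ref{equ:kernel D}); the approximation $\hat{D}\approx\begin{bmatrix}\lambda_n^{1/2}R\\ \mb{0}\end{bmatrix}$ then uses $\Lambda_{1:r}\approx\mb{0}$. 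The mutual-coherence claim is immediate: $\hat{D}^T\hat{D}=R^T(\lambda_n I-\Lambda_{1:r})R\approx\lambda_nI$.

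Finally, for the Gram-matrix property (\ref{equ:1HTH}) I would again replace $H^TH$ by $V_{1:r}^TV_{1:r}$ and exploit that the constant eigenvector $V_1=\frac{1}{\sqrt{n}}\mb{1}^T$ sits among the leading rows: orthonormality gives $V_{1:r}\mb{1}=\sqrt{n}\,e_1$ (where $e_1$ is the first standard basis vector), whence $V_{1:r}^TV_{1:r}\mb{1}=\sqrt{n}\,V_1^T=\mb{1}$, and transposing yields $\mb{1}^TH^TH=\mb{1}^T$. The hard part will not be any of these computations, which are exact, but rather the structural input itself: the whole statement rests on the assertion that a single rotation $R$ simultaneously turns all of $V_{1:r}$ into a genuine noisy indicator matrix, and on the quantitative $\Lambda_{1:r}\approx\mb{0}$ behind every ``$\approx$''. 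Both are only guaranteed under the near-ideal condition, and realizing the former is exactly the purpose of NSCrt (Algorithm~\ref{alg:NSCrt}).
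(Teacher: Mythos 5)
Your proposal is correct and follows essentially the same route as the paper: you take $V_{1:r}=RH$ as the structural input from the matrix perturbation theory of Section~\ref{sec:ratio cut}, obtain the representation and its Frobenius-optimality from the rotation-invariance of the Eckart--Young solution of (\ref{equ:kernel sparse A-DX}) via the trivial SVD $\tilde{A}=I(\lambda_n I-\Lambda)^{\frac{1}{2}}V$, and derive (\ref{equ:kernel D}) and (\ref{equ:1HTH}) by the same direct computations the paper sketches. Your write-up is in fact slightly more explicit than the paper's (e.g.\ the identity $VV_{1:r}^T=\begin{bmatrix}I_r\\ \mb{0}\end{bmatrix}$ and the derivation $V_{1:r}\mb{1}=\sqrt{n}\,e_1$ behind (\ref{equ:1HTH})), and you correctly identify that the only genuinely nonalgebraic ingredient is the perturbation-theoretic existence of a single rotation $R$ with $\Lambda_{1:r}\approx\mb{0}$, which the paper likewise imports rather than proves.
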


\begin{proof}
The key lies in (\ref{equ:kernel X=RH}), the others are easy to
obtain. Assume the graph is ideal, and the underlying normalized
indicator matrix is $H^*$. According to the properties of Laplacian
matrix in Section~\ref{sec:ratio cut}, $H^*$ spans the eigenspace of
eigenvalue 0, and vector $\mb{1}$ always belongs to this space.
Define $\tilde{V}=RH^*$, then $\tilde{V}$ remains to be the ideal
eigenvectors of eigenvalue 0, where $R$ is a rotation matrix so that
$\tilde{V}_1=\frac{1}{\sqrt{n}}\mb{1}^T$. Assume the graph becomes
noisy. According to the matrix perturbation theory, the eigenvalues
become $\Lambda_{1:r}\approx \mb{0}$, and the eigenvectors $V_{1:r}$
becomes a perturbed version of $\tilde{V}$, i.e., $V_{1:r}=RH^*+E$,
where $E$ is some noise. Then we obtain the relationship
(\ref{equ:kernel X=RH}), where $H=H^*+R^TE$ is the noisy indicator
matrix.

If $V_{1:r}$ and $D^*=\tilde{A}V_{1:r}^T$ are a solution of
(\ref{equ:kernel sparse A-DX}), then the rotated version
$H=R^TV_{1:r}$ and $\hat{D}=D^*R=\tilde{A}H^T$ are also a solution.
Thus we obtain SSRk (\ref{equ:kernel sparse A=DH=AHH}). The
approximation accuracy is optimal in the Frobenius-norm sense, as
indicated by the Eckart-Young theorem. Substituting the definition
of $\tilde{A}$ into $\tilde{A}V_{1:r}^TR$, we obtain the third
equality of (\ref{equ:kernel D}). The fourth approximation holds,
due to $\Lambda_{1:r}\approx \mb{0}$. Finally, since $V_{1:r}$ is
orthonormal and $V_1=\frac{1}{\sqrt{n}}\mb{1}^T$, (\ref{equ:1HTH})
is obtained.
\end{proof}

SSRk has the following interpretations (it would be better to
compare with those of PCA and K-means in Section~\ref{sec:pca} and
Section~\ref{sec:k-means} respectively):
\begin{enumerate}
\item The data can be sparsely represented by the dictionary.
$\tilde{A}_i\approx \hat{D}H_i$, i.e., $\tilde{A}_i$
is approximated by a linear combination of a few atoms
of dictionary $\hat{D}$.

\item The dictionary comes from the data clusters.
$\hat{D}_k=\tilde{A}H_k^T$, implying $\hat{D}_k$ mainly comes from the linear combination of samples in cluster $C_k$.
$\hat{D}_k$ can be seen as a quasi cluster center. However, compared with K-means, firstly, the weights are not
uniform. They distribute according to the relevance of the data to the center. Moreover, the weights include small
negative values, which imply least relevance. Secondly, since $\hat{D}^T\hat{D}\approx \lambda_n I$, the dictionary is
incoherent, a desirable property in compressed sensing \cite{bruckstein2009sparse}, and the mutual coherence
$\mu(\hat{D})=\max_{i\neq j}\,|\hat{D}_i^T\hat{D}_j|/(\|\hat{D}_i\|_2\cdot\|\hat{D}_j\|_2)$ is about zero.

\item The data are eventually represented by the data themselves according to relevance.
$\tilde{A}_i\approx \tilde{A}(H^TH_i)$, implying $\tilde{A}_i$ can be
represented by a linear combination of the relevant samples. The sum
of the weights, including negative values, is always 1. This
coincides with K-means.
\end{enumerate}

There is another representation form analogous to the un-normalized indicator representation form of K-means,
which we will call un-normalized SSR:
\begin{equation}\label{equ:unnormalize ssr}
\tilde{A}\approx D_UH_U,
\end{equation}
where $D_U=\hat{D}\,S_H^{-1}$, $H_U=S_HH$, and $S_H=\diag(H\mb{1})$. $H_U$ is the analogy of un-normalized
indicator matrix, for $\mb{1}^TH_U=\mb{1}^T$ as can be verified by (\ref{equ:1HTH}). $D_U$ are the proper cluster
centers, as $D_U=\tilde{A}(H^T\,S_H^{-1})$ and $\mb{1}^TH^T\,S_H^{-1}=\mb{1}^T$, i.e., the weights sum to 1.

\subsection{Linear Version (SSRl): Original Data as Input}\label{sec:sparse original
version}

After turning PCA (\ref{equ:pca A-DX}) to LE (\ref{equ:pca-lap}), we turn it back to the dictionary representation
form:\footnote{First turn (\ref{equ:pca-lap}) back to the
second line of (\ref{equ:pca to laplacian}), and then by
(\ref{equ:Pythagorean}), it leads to (\ref{equ:original sparse
A-DX}).}
\begin{equation}\label{equ:original sparse A-DX}
\min_{D\in\mathbb{R}^{(p+1)\times r},X\in\mathbb{R}^{(r-1)\times n}}
\|\tilde{A}-D\tilde{X}\|^2_F,\,\st\,\tilde{X}=\begin{bmatrix}
\frac{1}{\sqrt{n}}\mb{1}^T\\
X\end{bmatrix},\,\tilde{X}\tilde{X}^T=I.
\end{equation}
By the SVD of $\tilde{A}$, (\ref{equ:svd tildeA}), the
solution set includes $\tilde{X}^*=\tilde{V}_{1:r}=\begin{bmatrix}
\frac{1}{\sqrt{n}}\mb{1}^T\\
V_{1:r-1}\end{bmatrix}$, $D^*=\tilde{A}\tilde{V}_{1:r}^T$, and any
rotation of them.

\begin{theorem}\label{theo:original ssr}
(\textbf{SSRl}) When the graph is nearly ideal, there is a sparse
representation
\begin{equation}\label{equ:original sparse A=DH=AHH}
A\approx \hat{D}H=A(H^TH).
\end{equation}
The approximation accuracy is optimal in the Frobenius-norm sense.
$H\in\mathbb{R}^{r\times n}$ ($r\leq \rank(A)+1$) is the matrix of
sparse codes, i.e., noisy indicator matrix, which satisfies
\begin{equation}\label{equ:original X=RH}
\begin{bmatrix}
\frac{1}{\sqrt{n}}\mb{1}^T\\
V_{1:r-1}\end{bmatrix}=RH,
\end{equation}
for some rotation matrix $R$. $\hat{D}\in\mathbb{R}^{p\times r}$ is
a dictionary defined as
\begin{equation}\label{equ:original D}
\hat{D}=AH^T=A[\frac{1}{\sqrt{n}}\mb{1},
V_{1:r-1}^T]R=U_{1:r-1}\Sigma_{1:r-1}R_{2:r},
\end{equation}
where $R_{2:r}$ denotes the second to the $r$th rows of $R$. The
Gram matrix of codes $H^TH\in\mathbb{R}^{n\times n}$ reflects the
linear relationship of data, which has property
$\mb{1}^TH^TH=\mb{1}^T\tilde{V}_{1:r}^T\tilde{V}_{1:r}=\mb{1}^T$,
i.e., the sum of each column is one.
\end{theorem}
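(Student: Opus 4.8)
The plan is to mirror the proof of SSRk (Theorem~\ref{theo:kernel ssr}), since the original-data version is just the linear-kernel specialization, while paying attention to the two features that are genuinely different here: the augmented SVD of $\tilde{A}$ in (\ref{equ:summary svd tildeA}) and the mean-removed condition $A\mb{1}=\mb{0}$.

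First I would establish the central identity (\ref{equ:original X=RH}), namely $\tilde{X}^*=RH$ for a rotation $R$. Exactly as in the SSRk argument, assume first the ideal graph condition on $W=\tilde{A}^T\tilde{A}$. By Theorem~\ref{theo:unification original} the augmented leading eigenvector matrix $\tilde{X}^*=[\tfrac{1}{\sqrt{n}}\mb{1}^T;\,V_{1:r-1}]^T$ equals $RH^*$ for the underlying normalized indicator matrix $H^*$ and some rotation $R$. Relaxing to the near-ideal case, matrix perturbation theory (as collected in Section~\ref{sec:ratio cut}) gives $\tilde{X}^*=RH^*+E$ for a small perturbation $E$, so writing $H=H^*+R^TE$ for the noisy indicator matrix yields $\tilde{X}^*=RH$. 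I expect this step to carry the weight of the theorem, because everything else is algebra; its validity rests entirely on importing the unification result and the perturbation statement.

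Next I would read off the dictionary and the reconstruction. Since $R^TR=I$, inverting gives $H=R^T\tilde{X}^*$ and hence $H^TH=\tilde{X}^{*T}\tilde{X}^*=\tfrac{1}{n}\mb{1}\mb{1}^T+V_{1:r-1}^TV_{1:r-1}$. The dictionary follows from $\hat{D}=AH^T=A\tilde{X}^{*T}R=A[\tfrac{1}{\sqrt{n}}\mb{1},\,V_{1:r-1}^T]R$; using $A\mb{1}=\mb{0}$ to kill the first block and the compact SVD $A=U\Sigma V$ to evaluate $AV_{1:r-1}^T=U_{1:r-1}\Sigma_{1:r-1}$, this collapses to $\hat{D}=U_{1:r-1}\Sigma_{1:r-1}R_{2:r}$, matching (\ref{equ:original D}). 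For the reconstruction, $\hat{D}H=AH^TH=A(\tfrac{1}{n}\mb{1}\mb{1}^T+V_{1:r-1}^TV_{1:r-1})$; the mean-removed condition annihilates the constant term, leaving $AV_{1:r-1}^TV_{1:r-1}$, which is precisely the rank $r-1$ projection of $A$. Optimality in the Frobenius norm is then immediate from the Eckart-Young Theorem (Theorem~\ref{theo:Eckart-Young}), so $A\approx\hat{D}H=A(H^TH)$ attains the same fidelity as PCA.

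Finally the Gram-matrix property is a one-line consequence: $\tilde{X}^*\mb{1}=[\sqrt{n};\,\mb{0}]^T$ because $V_{1:r-1}\mb{1}=\mb{0}$ (as $A\mb{1}=\mb{0}$ forces $\mb{1}\in null(A)$, hence $\mb{1}$ is orthogonal to the right singular vectors making up the rows of $V$), whence $H^TH\mb{1}=\tilde{X}^{*T}\tilde{X}^*\mb{1}=\mb{1}$, i.e. $\mb{1}^TH^TH=\mb{1}^T$. The main obstacle is conceptual rather than computational: I must make explicit that although $H$ carries $r$ rows, including the one inherited from the constant $\tfrac{1}{\sqrt{n}}\mb{1}^T$, the $A\mb{1}=\mb{0}$ collapse forces the reconstruction to use only the $r-1$ genuine principal directions, so that SSRo remains under-complete yet exactly as faithful as PCA. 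Once that collapse is stated, the remaining identities are routine.
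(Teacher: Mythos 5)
Your proposal is correct and takes essentially the same route as the paper: the key identity (\ref{equ:original X=RH}) is obtained exactly as in the SSRk proof (ideal-case unification via Theorem~\ref{theo:unification original} plus matrix perturbation), and the dictionary formula, the reconstruction $A(H^TH)=AV_{1:r-1}^TV_{1:r-1}$, and Eckart--Young optimality are read off from the augmented SVD (\ref{equ:summary svd tildeA}) together with $A\mb{1}=\mb{0}$, which is precisely how the paper reduces the augmented representation to (\ref{equ:original sparse A-DX compact}). The only cosmetic differences are that you compute directly with $A$ instead of first stating the intermediate augmented-data representation $\tilde{A}\approx\tilde{D}H$, and that you spell out the verification of $\mb{1}^TH^TH=\mb{1}^T$ (via $V\mb{1}=\mb{0}$), which the paper asserts without proof.
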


\begin{proof}
Following similar reasoning of SSRk, (\ref{equ:original X=RH}) can
be obtained, and there is a sparse representation of the
\emph{augmented data}: $\tilde{A}\approx \tilde{D}H=\tilde{A}H^TH$,
where by the SVD of $\tilde{A}$, (\ref{equ:svd tildeA}), and
(\ref{equ:original X=RH}), $\tilde{D}=\tilde{A}H^T=\begin{bmatrix}
1 &\\
&U_{1:r-1}\end{bmatrix}\begin{bmatrix}
\sqrt{\beta}n &\\
&\Sigma_{1:r-1}\end{bmatrix}R$.

We now focus on (\ref{equ:original sparse A=DH=AHH}), where the
representation is for the \emph{original data}. First, we reduce
(\ref{equ:original sparse A-DX}) to a form that involves only the
original data. Substituting $D=\tilde{A}\tilde{X}^T$ into
(\ref{equ:original sparse A-DX}), note that the first row of
$\tilde{A}\tilde{X}^T\tilde{X}$ is always $\sqrt{\beta}\mb{1}^T$,
equal to the first row of $\tilde{A}$. Thus we can remove this
artificial component:
\begin{equation}\label{equ:original sparse A-DX compact}
\min_{\bar{D}\in\mathbb{R}^{p\times r},X}
\|A-\bar{D}\tilde{X}\|^2_F,\,\st\,\tilde{X}=\begin{bmatrix}
\frac{1}{\sqrt{n}}\mb{1}^T\\
X\end{bmatrix},\,\tilde{X}\tilde{X}^T=I.
\end{equation}
The solution of the codes remains the same. Hence, we have $A\approx
\bar{D}^*\tilde{X}^*=A\tilde{X}^{*T}\tilde{X}^*$. Combining with the
solution of (\ref{equ:original sparse A-DX}), and (\ref{equ:original
X=RH}), we obtain SSRl (\ref{equ:original sparse A=DH=AHH}). Finally
it is easy to verify that (\ref{equ:original D}) holds.
\end{proof}

Though augmenting $X$ with a constant component
$\frac{1}{\sqrt{n}}\mb{1}$, (\ref{equ:original sparse A-DX compact})
is equivalent to PCA (\ref{equ:pca A-DX}), due to
$A\mb{1}=\mb{0}$ and $\|A(I-\tilde{X}^{T}\tilde{X})\|^2_F
=\|A(I-\frac{1}{\sqrt{n}}\mb{1}\frac{1}{\sqrt{n}}\mb{1}^T)(I-X^{T}X)\|^2_F
=\|A(I-X^{T}X)\|^2_F$. However, PCA cannot explicitly reveal the
cluster structure, because in the absence of the redundant
$\frac{1}{\sqrt{n}}\mb{1}$, the sparse codes cannot be recovered
through rotating $V_{1:r-1}$ only.

The interpretations are similar to those of SSRk and an un-normalized SSRl exists, except that the dictionaries
are usually not near-orthogonal. This is because the singular values are not uniform, which usually decay very
fast. The more fundamental reason underlying this phenomenon is that, as argued before, the ideal graph condition
essentially requires that after translating along a new dimension the clusters become orthogonal, which can hardly
be met in practice, except perhaps for some high-dimensional data. Nevertheless, linear models, as elementary
components in the family of machine learning, usually lay down important parts of the theoretical foundation and
provide support for more advanced methods, therefore should not be underestimated.

Finally, we mention that the virtual data in the kernel version and the augmented data including the auxiliary
constant $\beta$ in the linear version would not play actual roles in practice, their functions are to
establish theory and facilitate understanding. However, the constant vector $\mb{1}$ discarded by dimensionality
reduction indeed is indispensable for revealing cluster structure and therefore plays essential roles in SSR and
cluster analysis.

\subsection{SSR for Out-of-sample Data}
Within unsupervised learning domain, SSR solves sparse codes for a given sample set. When new data (called out-of-sample data) come, solving the sparse codes of them is not necessarily straightforward, especially for the kernel version \cite{bengio2004out}. To make SSR fully useful, we have to be able to address this problem.

\subsubsection{Kernel Version}
There are some prior work dealing with the out-of-sample problem for Ncut
\cite{bengio2004out, bengio2004learning}. However, the kernel function of
LE/Rcut is different, we will solve the problem in our context.

Given a new point $b$, we denote its out-of-sample LE codes by $V_b\in\mathbb{R}^{r}$, and its sparse codes by
$H_b$. By the rotation relation between sparse codes and codes of LE (\ref{equ:kernel X=RH}), if we can get
$V_{b}$, then $H_b$ can be obtained using the same rotation matrix:
\begin{equation}\label{equ:Hb}
H_b=R^TV_b.
\end{equation}
First, we study the ideal case, and solve $V_b$.
\begin{theorem}
Given a new point $b$ with similarities to the data set $W_b=[W_{1,b},\cdots,W_{n,b}]^T$, and denoting the sum of
$W_b$ by $s_b$, if the ideal graph condition is satisfied and $b$ is not an overlapping point, then $b$'s LE codes
 are
\begin{equation}\label{equ:SSRk V_b Lambda}
V_b=(s_bI-\Lambda_{1:r})^{-1}V_{1:r}W_b,
\end{equation}
or, since $\Lambda_{1:r}=\mb{0}$,
\begin{equation}\label{equ:SSRk V_b}
V_b=V_{1:r}(W_b/s_b).
\end{equation}
\end{theorem}

\begin{proof}
By assumption and properties of Laplacian matrix, $\tilde{H}=[H,H_b]$ are eigenvectors of the augmented Laplacian
matrix:
\begin{equation}\label{equ:agumented L}
\tilde{L}=
\begin{bmatrix}
\tilde{S}-W&-W_b\\
-W_b^T& s_b
\end{bmatrix}.
\end{equation}
That is $\tilde{L}\tilde{H}^T=\tilde{H}^T\Lambda_{1:r}$. $\Lambda_{1:r}$ are virtually zero, so we also have
$\tilde{L}\tilde{H}^TR^T=\tilde{H}^TR^T\Lambda_{1:r}$, which implies $\tilde{V}_{1:r}=R\tilde{H}=[V_{1:r},V_{b}]$
are also eigenvectors. We see that the eigenvectors of sample set, $V_{1:r}$, are preserved during the extension.
Substituting (\ref{equ:agumented L}) into $\tilde{L}\tilde{V}_{1:r}^T=\tilde{V}_{1:r}^T\Lambda_{1:r}$ and
considering the last row, we have $-W_b^TV_{1:r}^T+s_bV_{b}^T=V_{b}^T\Lambda_{1:r}$. Therefore, (\ref{equ:SSRk
V_b Lambda}) is obtained.
\end{proof}

When the graph is nearly ideal, we will still estimate the LE codes by (\ref{equ:SSRk V_b Lambda}).\footnote{When
$s_b=0$, as the sample set of kernel PCA satisfy, (\ref{equ:SSRk V_b Lambda}) reduces to the out-of-sample codes
of kernel PCA \cite{scholkopf1998nonlinear}. Besides, similar results can be obtained by studying the normalized
Laplacian matrix $S^{-1}W$ (or optionally applying the Nystr\"{o}m formula \cite{Williams2001Using,
bengio2004learning}), since the eigenvectors of $L$ with eigenvalue zero (smallest) are the eigenvectors of
$S^{-1}W$ with eigenvalue one (largest) \cite{von2007tutorial}. (\ref{equ:SSRk V_b Lambda}) will be replaced with
$V_b=(I-\Lambda_{1:r})^{-1}V_{1:r}(W_b/s_b)$, while (\ref{equ:SSRk V_b Lambda}) remains unchanged.} Further, since
$\Lambda_{1:r}\approx \mb{0}$, especially orders of magnitude smaller than $s_b$ in practice, we can still use
(\ref{equ:SSRk V_b}) as approximation. By (\ref{equ:Hb}), finally we have:
\begin{theorem}
The sparse codes of $b$ can be estimated as
\begin{equation}\label{equ:SSRk H_b}
H_b=H(W_b/s_b).
\end{equation}
\end{theorem}

Both (\ref{equ:SSRk V_b}) and (\ref{equ:SSRk H_b}) have very clear interpretation: the out-of-sample codes are
obtained by the weighted combination of the codes of sample set, and the weights are nonnegative and sum to 1.
Besides, note that when the graph is ideal, replacing the new data with sample set, (\ref{equ:SSRk V_b}) and
(\ref{equ:SSRk H_b}) lead to exactly the sample-set codes, $V_{1:r}$ and $H$. This is due to the properties of
normalized Laplacian matrix $S^{-1}W$, cf. footnote~12. Finally, as the derivation does not depend on the
row-length of $H$, if we use un-normalized SSR (\ref{equ:unnormalize ssr}), there is an alternative:
$H_b=H_U(W_b/s_b)$, and $\mb{1}^TH_b=1$ too.

\subsubsection{Linear Version}
This version is simpler, since the out-of-sample codes of PCA is easy to
obtain, and a rotation of it leads to sparse codes. Nevertheless, we investigate it systematically for deeper
understanding.

First, note the solution of the linear regression problem:
\begin{equation}\label{equ:DT}
\min_{D_T\in\mathbb{R}^{r\times (p+1)}} \|\tilde{X}-D_T\tilde{A}\|^2_F.
\end{equation}
$(D_T)^*=\tilde{X}\tilde{A}^{\dag}$. When $\tilde{X}$ is the normalized PCs, $\tilde{V}_{1:r}$, by the SVD of
$\tilde{A}$ (\ref{equ:svd tildeA}), $(D_T)^*=\tilde{\Sigma}_{1:r}^{-1}\tilde{U}_{1:r}^T=(D^*)^{\dag}$ and
$\tilde{X}=(D_T)^*\tilde{A}$, where $D^*$ is the solution of (\ref{equ:original sparse A-DX}). Similarly, when
$\tilde{X}$ is the sparse codes, $H$, we have solution
$\hat{D}_T=R^T\tilde{\Sigma}_{1:r}^{-1}\tilde{U}_{1:r}^T=\hat{D}^{\dag}$, and $\tilde{X}=\hat{D}_T\tilde{A}$,
where $\hat{D}$ is the dictionary of SSRl. The two dictionaries are related by a rotation $\hat{D}_T=R^T(D_T)^*$,
which is the counterpart of $\hat{D}=D^*R$. The above principle suggests that
\begin{theorem}
Given a new point $b$ (with mean removed as $A$), and denoting its augmented data by $\tilde{b}=[\sqrt{\beta},b^T]^T$, then its augmented
PCA codes and sparse codes can be obtained by
\begin{equation}\label{equ:SSRl V_b}
\tilde{V}_b=(D^*)^{\dag}\tilde{b}=\begin{bmatrix}
\frac{1}{\sqrt{n}}\\
\Sigma_{1:r-1}^{-1}U_{1:r-1}^Tb\end{bmatrix},
\end{equation}
\begin{equation}\label{equ:SSRl H_b}
H_b=\hat{D}^{\dag}\tilde{b}=R^T\tilde{V}_b.
\end{equation}
\end{theorem}
The proof of the final expression of $\tilde{V}_b$ involves tedious expansion of $\tilde{A}$'s SVD (\ref{equ:svd
tildeA}), which we will omit. Note that, $\Sigma_{1:r-1}^{-1}U_{1:r-1}^Tb$ is exactly the PCA codes of $b$, and
the auxiliary constant $\beta$ does not play actual role. Besides, by the orthonormality of $\tilde{V}_{1:r}$ and
$H$, we have:
\begin{corollary}
\begin{equation}
\tilde{V}_b=\begin{bmatrix}
\frac{1}{\sqrt{n}}\\
V_b\end{bmatrix}.
\end{equation} The out-of-sample codes can also be written in terms of sample-set codes:
$\tilde{V}_b=\tilde{V}_{1:r}(\tilde{V}_{1:r}^T\tilde{V}_b)$, $H_b=H(\tilde{V}_{1:r}^T\tilde{V}_b)$, where vector
$\tilde{V}_{1:r}^T\tilde{V}_b=\frac{1}{n}\mb{1}+V_{1:r}^TV_b$ and $\mb{1}^T(\tilde{V}_{1:r}^T\tilde{V}_b)=1$.
\end{corollary}
The corollary implies that the out-of-sample codes is a weighted combination of the codes of sample set. The
weights are similarities defined by the inner product of PCs, and they sum to 1. These expressions are consistent
with those of kernel version. Original codes can be recovered by replacing the new data with sample set, and an
alternative by normalizing $H_b$ to sum 1 exists. Finally, in view of (\ref{equ:SSRl V_b}) and (\ref{equ:SSRl
H_b}), the codes can be directly obtained by applying a linear transform to the data. It implies that SSRl is both
synthesis SR and analysis SR/sparsifying transform \cite{Elad2007Analysis,Nam2013The,Ravishankar2013Learning}.

\subsection{NSCrt: to Solve Sparse Codes}\label{sec:NSCrt}

To make SSR practical, we have to find the rotation matrices in (\ref{equ:kernel X=RH}) and (\ref{equ:original
X=RH}) accurately. (\ref{equ:kernel X=RH}) or (\ref{equ:original X=RH}) essentially requires to find a rotation
matrix and sparse codes such that the sparse codes match the normalized PCs after the rotation. We employ a
modified version of SPCArt \cite{Hu2016Sparse} to accomplish this task. SPCArt is a sparse PCA algorithm designed
to solve sparse loadings. It finds a rotation matrix and sparse loadings such that the sparse loadings approximate
the PCA loadings after the rotation. Replacing the PCA loadings with the normalized PCs, SPCArt meets our need.
Considering our sparse codes are noisy indicators, the dominant values of which are nonnegative, we additionally
impose nonnegative constraint on the codes. The modified algorithm is called NSCrt (nonnegative sparse coding via
rotation and truncation).

Given a row-wise orthonormal matrix $X\in \mathbb{R}^{r\times n}$,
the objective of NSCrt is
%\footnote{This version is based on
%$\ell_0$-norm penalty. Extensions to other versions that are based
%on e.g., $\ell_1$-norm penalty, $\ell_0$-norm constraint are
%trivial. Empirically, we observed that the current version performs
%best.}
\begin{equation}\label{equ:NSCrt}
\min_{R\in\mathbb{R}^{r\times r},\bar{H}\in \mathbb{R}^{r\times n}}
\|X-R\bar{H}\|^2_F+\lambda^2\|\bar{H}\|_0,\,\st\,R^TR=I,\,\bar{H}_{ij}\geq
0,\forall i,j,
\end{equation}
where $R$ is the rotation matrix, $\bar{H}$ is the matrix of
truncated sparse codes, $\|\bar{H}\|_0$ counts the number of nonzero
entries of $\bar{H}$, $\lambda$ is a small threshold in $(0,1)$.
Since we want to find an equivalence relation between $X$ and $RH$,
as (\ref{equ:kernel X=RH}) and (\ref{equ:original X=RH}) require,
rather than an approximation relation, after $R$ is solved, we
obtain sparse codes as $H=R^TX$.

% algorithm NSCrt ------------------------
\begin{algorithm}[h]
   \caption{NSCrt for solving sparse codes}
   \label{alg:NSCrt}
\begin{algorithmic}[1]
   \REQUIRE ~~\\
   row-wise orthonormal eigenvectors $X\in\mathbb{R}^{r\times n}$, threshold
   $\lambda\in (0,1)$
   \ENSURE ~~\\
   sparse codes $H\in\mathbb{R}^{r\times
   n}$ and its truncated version $\bar{H}$, rotation matrix $R\in\mathbb{R}^{r\times
   r}$

   \STATE Initialize $R$: $R\leftarrow I$

   \REPEAT
   \STATE Rotation: $H\leftarrow R^TX$
   \STATE Truncation: $\forall i,j$, $\bar{H}_{ij}\leftarrow H_{ij}$ if $H_{ij}\geq
   \lambda$,
and $\bar{H}_{ij}\leftarrow 0$ otherwise
   \STATE Update $R$: compute SVD of $X\bar{H}^T$: $U\Sigma V$, then $R\leftarrow UV$
   \UNTIL{convergence}
   \STATE Obtain final sparse codes: $H\leftarrow R^TX$
\end{algorithmic}
\end{algorithm}
%----------------------------------------

% algorithm SSR-K ------------------------
\begin{algorithm}[htb]
   \caption{SSR of kernel version (SSRk)}
   \label{alg:SSR-K}
\begin{algorithmic}[1]
   \REQUIRE ~~\\
   similarity matrix $W\in\mathbb{R}^{n\times
   n}$, code dimension $r$, threshold
   $\lambda$, out-of-sample similarity vector $W_b\in\mathbb{R}^{n}$ (optional)
   \ENSURE ~~\\
   sparse codes $H\in\mathbb{R}^{r\times
   n}$, rotation matrix $R\in\mathbb{R}^{r\times
   r}$, out-of-sample sparse codes $H_b\in\mathbb{R}^{r}$ (optional)

   \STATE Compute Laplacian matrix: $L\leftarrow \diag(W\mb{1})-W$
   \STATE Compute the $r$ smallest eigenvectors of $L$: $V\in\mathbb{R}^{r\times
   n}$
   \STATE Solve sparse codes: $\{H,R\}\leftarrow$ NSCrt($V,\lambda$)
   \STATE If $W_b$ is provided, $H_b\leftarrow HW_b/(\mb{1}^TW_b)$
\end{algorithmic}
\end{algorithm}
%----------------------------------------

% algorithm SSR-O ------------------------
\begin{algorithm}[htb]
   \caption{SSR of linear version (SSRl)}
   \label{alg:SSR-O}
\begin{algorithmic}[1]
   \REQUIRE ~~\\
   mean-removed data set $A\in\mathbb{R}^{p\times
   n}$, code dimension $r$ ($r\leq \rank(A)+1$), threshold
   $\lambda$, mean-removed out-of-sample data $b\in\mathbb{R}^{p}$ (optional)
   \ENSURE ~~\\
   sparse codes $H\in\mathbb{R}^{r\times
   n}$, rotation matrix $R\in\mathbb{R}^{r\times
   r}$, out-of-sample sparse codes $H_b\in\mathbb{R}^{r}$ (optional)

   \STATE Compute rank $r-1$ SVD of $A$: $U\Sigma V$
   \STATE Solve sparse codes: $\{H,R\}\leftarrow$ NSCrt($\begin{bmatrix}
\frac{1}{\sqrt{n}}\mb{1}^T\\ V\end{bmatrix},\lambda$)
   \STATE If $b$ is provided, $H_b\leftarrow R^T\begin{bmatrix}
\frac{1}{\sqrt{n}}\\
\Sigma^{-1}U^Tb\end{bmatrix}$
\end{algorithmic}
\end{algorithm}
%----------------------------------------

Note (\ref{equ:NSCrt}) itself is a dictionary learning problem, but the characteristics of its operands make the
solution elegant. Following SPCArt, a local optimum can be solved by alternately optimizing $R$ and $\bar{H}$.
When initializing $R=I$, the solution process results into operations of alternately rotating and truncating
$X$.

1) Fixing $R$, (\ref{equ:NSCrt}) becomes
\begin{equation}\label{equ:NSCrt H}
\min_{\bar{H}}
\|R^TX-\bar{H}\|^2_F+\lambda^2\|\bar{H}\|_0,\,\st,\,\bar{H}_{ij}\geq
0,\forall i,j.
\end{equation}
Denote $H=R^TX$, note that $H$ is a rotation of $X$, which is
orthonormal and spans the same subspace as $X$. It is not hard to
see that the solution of $\bar{H}$ is: $\bar{H}_{ij}^*=H_{ij}$ if
$H_{ij}\geq \lambda$, and $\bar{H}_{ij}^*=0$ otherwise, i.e., it is
obtained by truncating small values of $H$ that are below $\lambda$.

2) Fixing $\bar{H}$, (\ref{equ:NSCrt}) becomes a Procrustes problem
\begin{equation}\label{equ:NSCrt R}
\min_{R} \|X\bar{H}^T-R\|^2_F,\,\st\,R^TR=I.
\end{equation}
Let $X\bar{H}^T=U\Sigma V$ be the SVD, then $R^*=UV$.

The NSCrt algorithm is presented in Algorithm~1, and those of SSRk and
SSRl are presented in Algorithm~2 and Algorithm~3 respectively. The
time complexity of NSCrt is $O(nr^2)$, which scales linearly with
the data size. It is efficient if $r$ is not too large.

%According to the performance-guarantee analysis in SPCArt, we may
%choose $\lambda$ around $1/\sqrt{n}$. In our case, we found
%$\lambda=0.6/\sqrt{n}$ consistently performs well.

% algorithm sparse cut ------------------------
\begin{algorithm}[htbp]
   \caption{Sparse cut (Scut)}\label{alg:sparse cut}
\begin{algorithmic}[1]
   \REQUIRE ~~\\
   sparse codes $H\in\mathbb{R}^{r\times n}$ output by SSR (or out-of-sample sparse codes)
   \ENSURE ~~\\
   cluster labels $c\in\mathbb{N}^{1\times n}$
   \STATE $c_i\leftarrow \mathop{\arg\max}_{k=1,\dots,r}\,H_{ki}$, $i=1,\dots,n$
\end{algorithmic}
\end{algorithm}
%----------------------------------------

\subsection{Sparse Cut (Scut): Application of SSR in Clustering}\label{sec:scut}
As an application of SSR, the sparse codes $H$ can be directly used
for clustering. Since the sparse codes are noisy indicator vectors,
where usually only one dominant value appears in each column, we can
check the maximal entry in each column and assign its index as the
cluster label:\footnote{There is a more concrete interpretation for
Scut in the case of SSRk, details are included in
Appendix~\ref{app:scut}.}
\begin{equation}
c_i \leftarrow \mathop{\arg\max}_k\,H_{ki},\,1\leq i\leq n.
\end{equation}

We call this SSR based clustering, sparse cut (Scut), shown in Algorithm~4. In brief, combining with SSR, Scut performs
the following steps: 1) compute the normalized PCs of data (SSRl),
or the eigenvectors of Laplacian matrix (SSRk), 2) employ NSCrt to
recover the noisy indicator vectors from the eigenvectors, 3) finish
clustering by checking the maximal entries.

There is another alternative with minor difference: normalizing each column of $H$ to sum 1 first, and then $c_i
\leftarrow \mathop{\arg\max}_k\,{H_U}_{ki}$, where $H_U$ is the code matrix of the un-normalized SSR
(\ref{equ:unnormalize ssr}). The quasi-probability interpretation of $H_U$ justifies this scheme. The difference is:
the first scheme weights the smaller clusters more, since $1/\sqrt{n_i}<1/\sqrt{n_j}$ if $n_i>n_j$, while the latter
scheme treats them equally, since all columns sum to 1. This paper adopts the first scheme.

\section{Experiments}\label{sec:experiments}
The experiments consist of: 1) validating the ability of NSCrt in recovering rotation matrix, 2) illustration of the
cluster structure revealed by sparse codes, 3) a comparison between the linear version and the kernel version,
in terms of the ideal graph condition and clustering performance, 4) investigation of the relations between ideal graph
condition, sparsity, and clustering accuracy, 5) the performance of kernel Scut, 6) comparison of the clustering
performance between SSR and OSRs.

\begin{table*}[h]
\caption{Data sets. }\label{tab:dataset} \vskip 0.1in
\begin{center}
%\vskip -0.1in
\begin{scriptsize} %
\begin{tabular}{|m{1.1cm}||m{7.4cm}|l|m{1.4cm}|m{1.9cm}|}
\hline Data set & Description & \#Classes & Size & Sizes of classes\\\hline\hline

G1,G2,G3 & three artificial Gaussian data with more and more heavy
overlaps, shown in Figure~\ref{fig:gaussClassPoints}  & 3 &
2$\times$150 & 50,50,50\\\hline

onion & an artificial data of unbalanced classes, shown in
Figure~\ref{fig:onionClassPoints} & 3 & 2$\times$75 &
5,20,50\\\hline

iris & Fisher's iris flower data set & 3 & 4$\times$150 &
50,50,50\\\hline

wdbc & breast cancer Wisconsin (diagnostic) data set & 2 &
30$\times$569 & 212,357\\\hline

Isolet &  spoken (English) letter recognition data set, subset of
the first set, excluding classes ``c,d,e,g,k,n,s'' & 19 & 617$\times
$1140 & each 60\\\hline

USPS3, USPS8, USPS10 & three subsets of the training set of United
States Postal Service (USPS) handwritten digit database. USPS3:
``0''-``2'', USPS8: excludes ``5'' and ``9'' (as ``3'' and ``5'',
``4'' and ``9'', ``7'' and ``9'' heavily overlap), USPS10:
``0''-``9'' & 3,8,10 & 256$\times$2930, 256$\times$6091,
256$\times$7291 & 1194,1005,731, 658,652,556, 664,645,542,644\\\hline

4News & four groups \{2,9,10,15\} of 20 Newsgroups documents, tf-idf
sparse features are used& 4 & 26214$\times$2372 & 389,398,397,394\\\hline

TDT2 & the largest 30 categories of NIST Topic Detection and
Tracking corpus, documents appearing in more than one categories are removed, tf-idf
sparse features are used& 30 & 36771$\times$9394 &
1844,1828,1222, 811,...,52\\\hline

polb & a relation network (sparse) of books about US Politics
(``liberal'', ``conservative'', or ``neutral''), edges represent
copurchasing of books by the same buyers & 3 & 105$\times$105 &
43,49,13\\\hline

\end{tabular}
\end{scriptsize}
\end{center}
\vskip -0.1in
\end{table*}

The data sets we used, shown in
Table\ref{tab:dataset},\footnote{The sources of the public data sets are listed below.\\
iris: \url{http://archive.ics.uci.edu/ml/datasets/Iris} \\
Isolet: \url{http://archive.ics.uci.edu/ml/datasets/ISOLET}\\
wdbc:
\url{http://archive.ics.uci.edu/ml/machine-learning-databases/}\\\url{breast-cancer-wisconsin/}\\
USPS: \url{http://www-i6.informatik.rwth-aachen.de/~keysers/usps.html}\\
TDT2: \url{http://www.nist.gov/speech/tdt98/tdt98.htm}\\
4News: \url{http://qwone.com/~jason/20Newsgroups/} \\
polb: \url{http://networkdata.ics.uci.edu/data.php?id=8}} contain class labels for each sample, which serve as
ground-truth for the evaluation. To make the evaluation with the
class labels reasonable, the data sets are chosen so that the underlying clusters of the data
are in good accord with the man-assigned class labels. The data sets come from various domains and are of
diverse nature. G1, G2, and G3 have more and more heavy
cluster-overlaps. The class sizes of onion and TDT2 are highly
unbalanced. The number of samples in USPS10 and TDT2 are large.
4News and TDT2 are high-dimensional data sets with $p>n$. 4News and
TDT2 are sparse data. The number of classes in Isolet and TDT2 are
relatively large. Polb is a relational data with only similarity
matrix provided.

Clustering performance is measured by four criteria: 1) accuracy
(percentage of total correctively classified points), 2) normalized
mutual information (NMI) \cite{manning2008introduction}, 3) rand
index (RI) \cite{manning2008introduction}, 4) time cost.
For accuracy, the matching between the output clusters and the
labeled classes is established by the Hungarian algorithm
\cite{kuhn1955hungarian}.

%Similarity matrices of the kernel version (except polb) are built by
%4NN graph (4News use 9NN) \cite{von2007tutorial} and Gaussian
%kernel $\phi(A_i,A_j)=\exp\{-0.5\|A_i-A_j\|_2^2/\upsilon\}$ with
%$\upsilon$. \footnote{$\upsilon$ is set to be the mean of class
%variances computed using the class labels. It is set in this
%convenient way to avoid the boring tuning problem. It would not
%affect the conclusions, so long as all the algorithms receive the
%same input. But for large data sets USPS10, 4News, and TDT2, we
%apply the self-tuning method of \cite{zelnik2004self}, since it is
%faster, though less reasonable.}
The algorithms are implemented using MATLAB, run on a PC with
2.93GHz duo core CPU, 2GB memory. The similarity matrices of the
kernel version (except polb) are built by 4NN graph (4News uses 9NN
to avoid isolated points) with the self-tuning method of
\cite{zelnik2004self}. For NSCrt, we set
$\lambda=0.6/\sqrt{n}$,\footnote{According to the
performance-guarantee analysis in SPCArt \cite{Hu2016Sparse}, it is
recommended to set $\lambda$ around $1/\sqrt{n}$. In our case, we
found $\lambda=0.6/\sqrt{n}$ consistently performed well.} 200
maximal iterations, and $\|R^{(t)}-R^{(t-1)}\|_F/\sqrt{r}\leq 0.01$
to be the convergence condition. We observed it usually converged
within a dozen iterations.

\begin{figure*}[htbp]
\centering{
\subfigure[$r=2$]{\label{fig:acc_NSCrt:r2}\includegraphics[width=3.5cm]{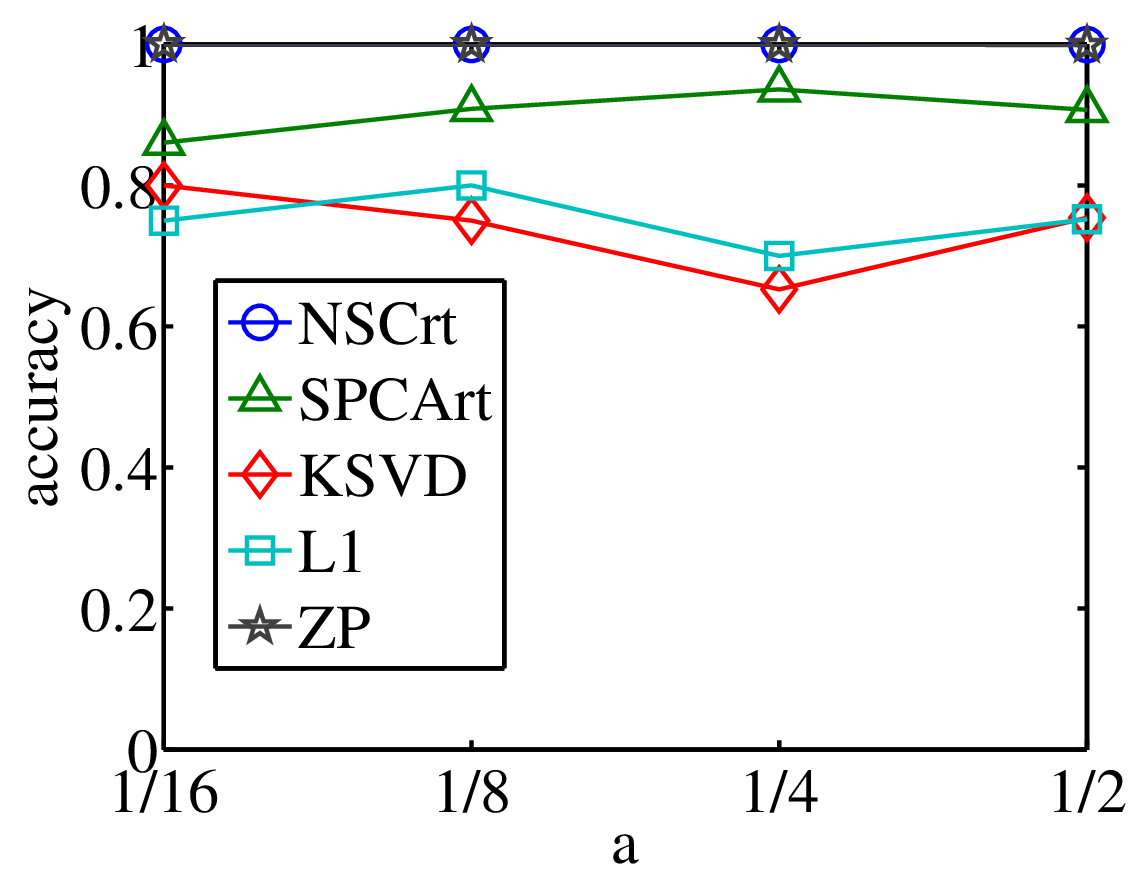}}\hspace{2mm}
\subfigure[$r=16$]{\label{fig:acc_NSCrt:r16}\includegraphics[width=3.5cm]{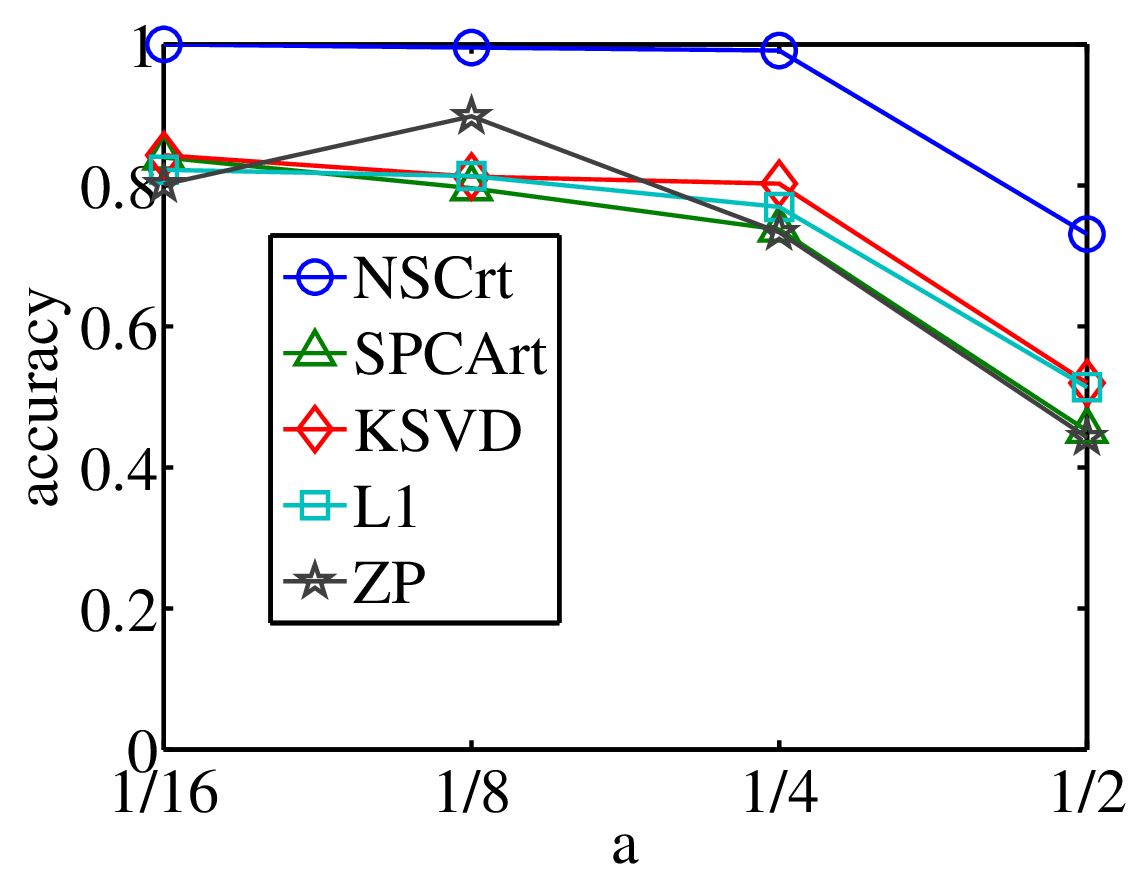}}\hspace{2mm}
\subfigure[$r=128$]{\label{fig:acc_NSCrt:r128}\includegraphics[width=3.5cm]{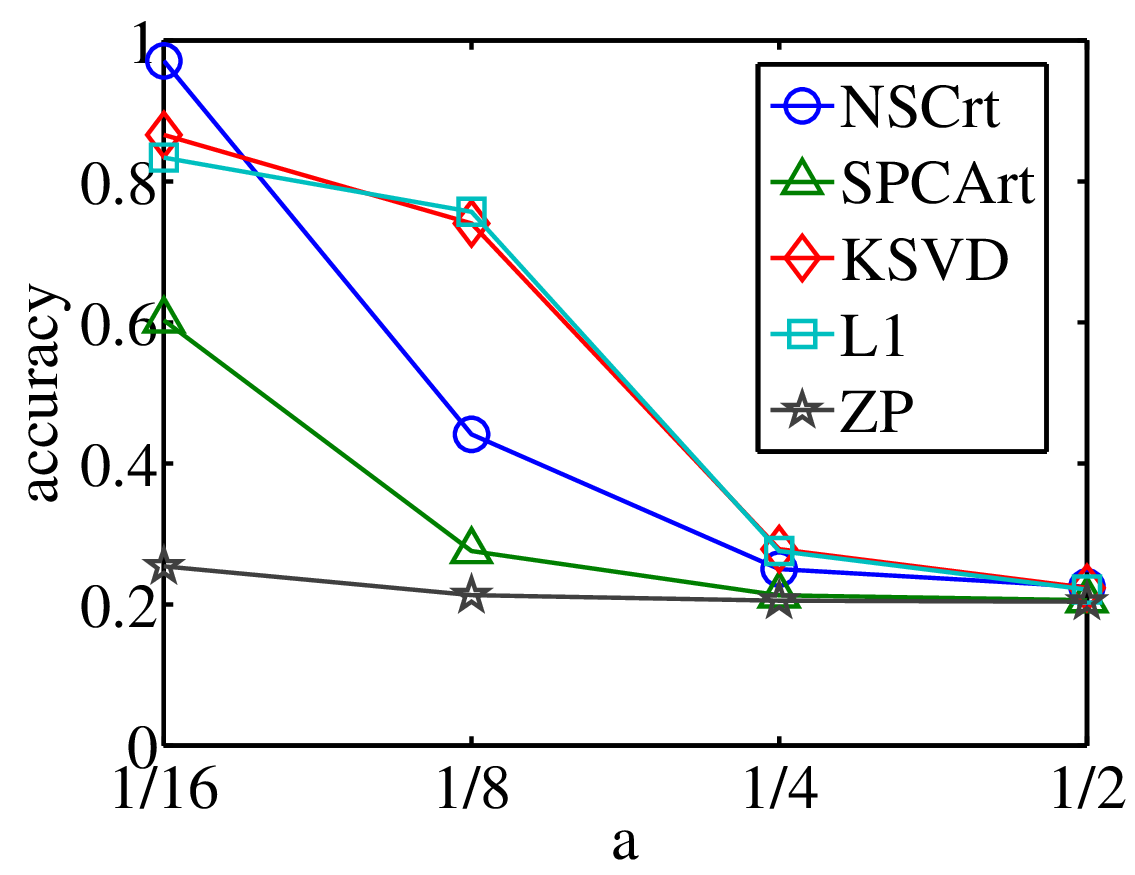}}\hspace{2mm}
\subfigure[$r=9$]{\label{fig:acc_NSCrt:r9}\includegraphics[width=3.5cm]{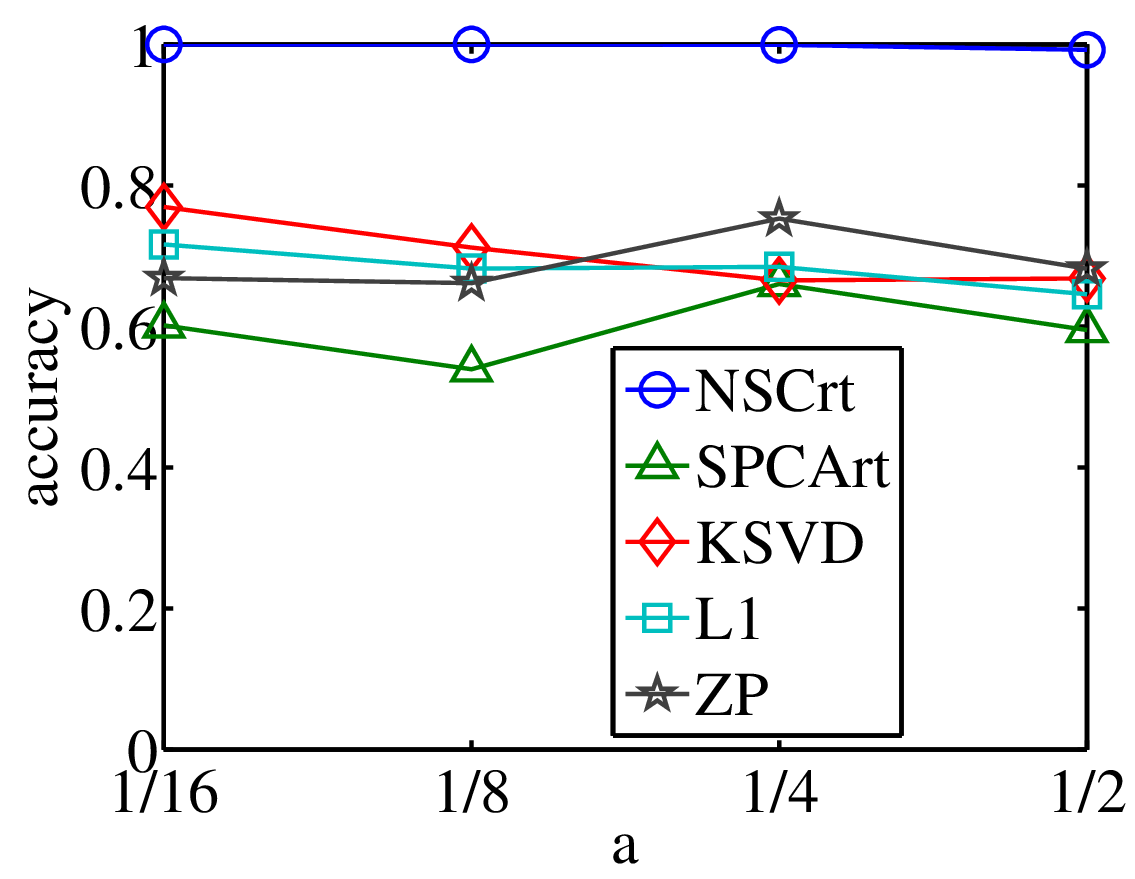}}
} \caption{Accuracy of NSCrt in recovering rotation matrix. The
horizontal axis indicates noise level. (a)-(c) data of uniform
cluster-sizes. (d) data of exponential
cluster-sizes.}\label{fig:acc_NSCrt}
\end{figure*}

\subsection{Accuracy of NSCrt in Recovering Rotation Matrix}\label{sec:acc NSCrt}
SSR is practical and Scut is feasible only if we can find the right
rotation matrix, so we test the accuracy of NSCrt in recovering the
rotation matrix first. Randomly generated rotation matrices and
sparse codes are used for the test. The performance is compared with
those of four other algorithms: $\ell_0$-norm based SPCArt \cite{Hu2016Sparse}, ZP \cite{zelnik2004self},
KSVD \cite{aharon2006img} and an $\ell_1$-norm based KSVD, denoted by ``L1''.\footnote{Codes of KSVD and ZP are downloaded
from \\\url{http://www.cs.technion.ac.il/~ronrubin/software.html}
and\\
\url{http://www.vision.caltech.edu/lihi/Demos/SelfTuningClustering.html}
respectively. KSVD and L1 are dictionary learning methods. They
participate in the comparison for the recovery of $R$ can be
formulated as a dictionary learning problem under sparse
representation framework, as NSCrt does. In KSVD, the $\ell_0$-norm
of each sparse code-vector is constrained to be 1. For L1, the
dictionary update step follows KSVD, while the sparse coding step
adopts the $\ell_1$-norm based SLEP \cite{Liu:2009:SLEP:manual}.}

First, a rotation matrix
$R\in\mathbb{R}^{r\times r}$, a normalized indicator matrix
$H^*\in\mathbb{R}^{r\times 1024}$, and Gaussian noise $E$ of mean zero are randomly generated.
Then, data $X$ is synthesized via $X\leftarrow R(H^*+E)$, where
$H^*+E$ simulates the sparse codes. We input $X$ into the algorithms
and test their accuracies in recovering $R$. Two kinds of data sets
are generated. 1) Data of uniform cluster-sizes. Each row of $H^*$
has the same number of nonzeros $n_k=n/r$, $1\leq k\leq r$, with
entry value $1/\sqrt{n_k}$. Three number of clusters are tested: $r=2, 16, 128$. 2) Data of exponential
cluster-sizes. $r=9$. The $k$th row ($k<9$) of $H^*$ has $n_k=2^k$
nonzero entries with value $1/\sqrt{n_k}$, the last row has
$n_9=(n-\sum_{k=1}^{8} n_k)$ nonzero entries with value
$1/\sqrt{n_9}$. Note that the smallest cluster has only 2 members,
while the largest one has 514 members, the cluster sizes are highly
unbalanced. For each case above, the algorithms are tested under
increasing Gaussian noise $\sigma=a\min_k\,1/\sqrt{n_k}$,
where $a$ is a factor relative to the smallest value of codes, $a=1/16, 1/8, 1/4, 1/2$. On the highest level, the standard deviation of Gaussian has magnitude up to half of that of data.

The accuracy is measured by the mean of cosines between the
estimated rotation $\hat{R}$ and $R$: $1/r\sum_{k=1}^r
|\hat{R}_k^TR_k|$, where the matching of columns is established by
the Hungarian algorithm \cite{kuhn1955hungarian}. The mean
accuracies over 20 runs are shown in Figure~\ref{fig:acc_NSCrt}. We
see that NSCrt outperforms the others. The improvement is
most significant when the cluster sizes are unbalanced
(Figure~\ref{fig:acc_NSCrt:r9}), where the margin is more than 20\%.
Besides, NSCrt frequently obtains mean accuracies over 98\%, which
indicates the standard deviations are very small, in other words,
the performance of NSCrt is stable. Note that although NSCrt intends
to find local minima, in moderate noise level, it recovers the
underlying solutions with high accuracy.

\begin{figure*}[htbp]
\centering{
\subfigure[G1]{\label{fig:gaussClassPoints:g1}\includegraphics[width=4.3cm]{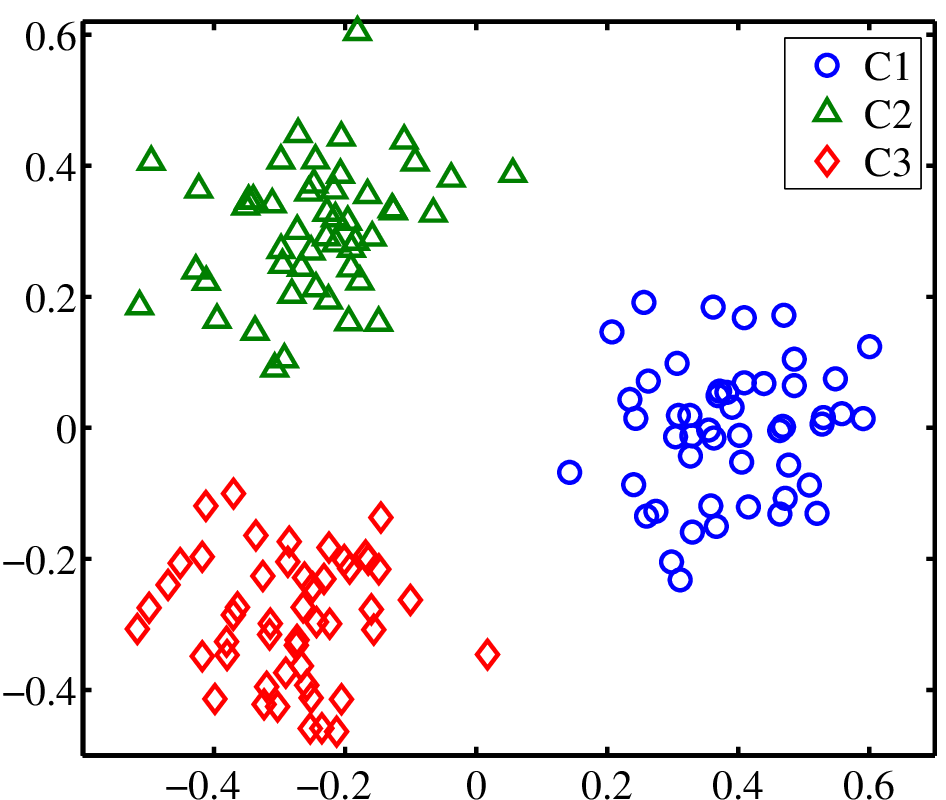}}\hspace{8mm}
\subfigure[G2]{\label{fig:gaussClassPoints:g2}\includegraphics[width=4.3cm]{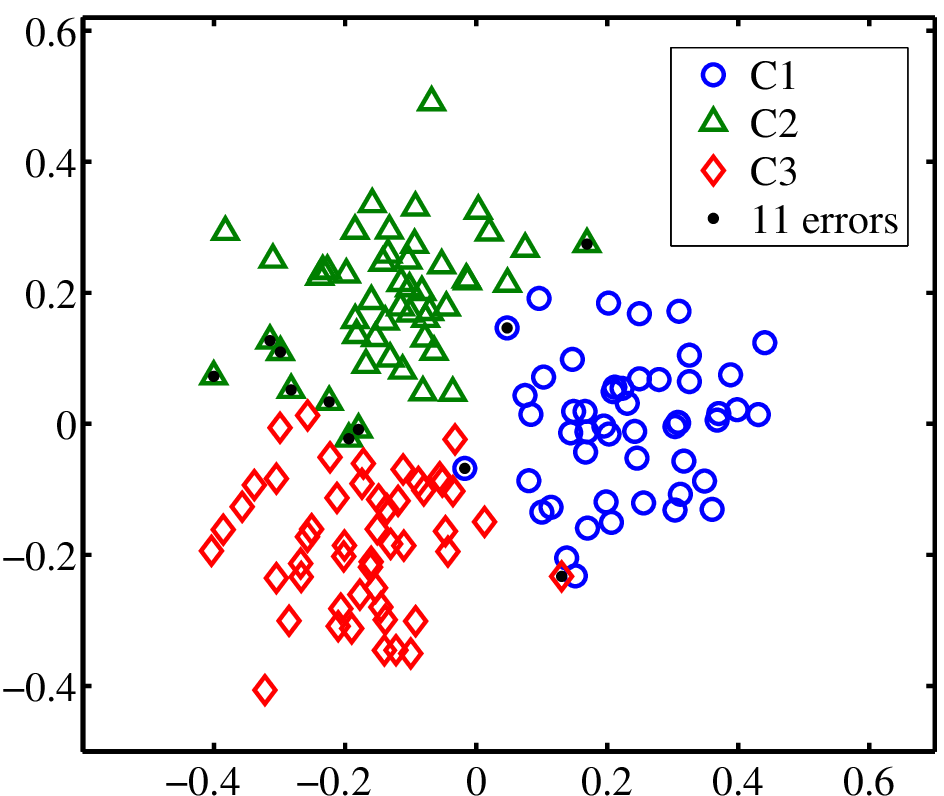}}\hspace{8mm}
\subfigure[G3]{\label{fig:gaussClassPoints:g3}\includegraphics[width=4.3cm]{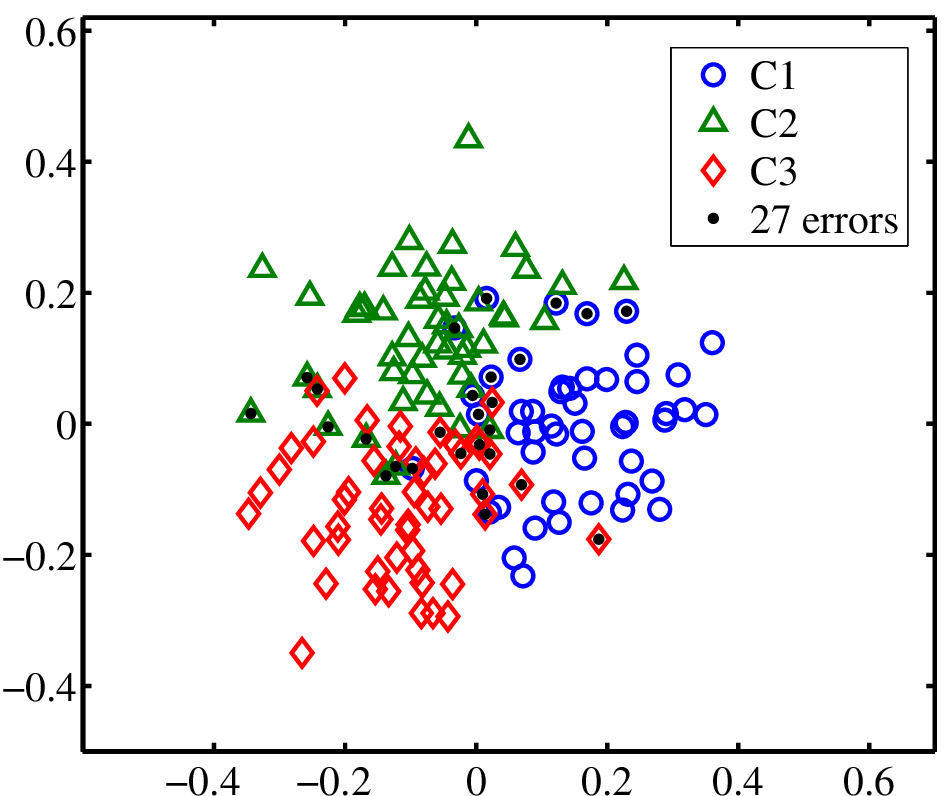}}
} \caption{Gaussian data G1-G3 of different overlaps. Black dots
indicate points misclassified by kernel Scut (see
Section~\ref{sec:cluster info}). These points lie on the overlapping
regions.}\label{fig:gaussClassPoints}
\end{figure*}

\begin{figure*}[htbp]
\centering{ \subfigure[SSRk,
G1]{\label{fig:gausseigenvectorLap:g1}\includegraphics[width=5.0cm]{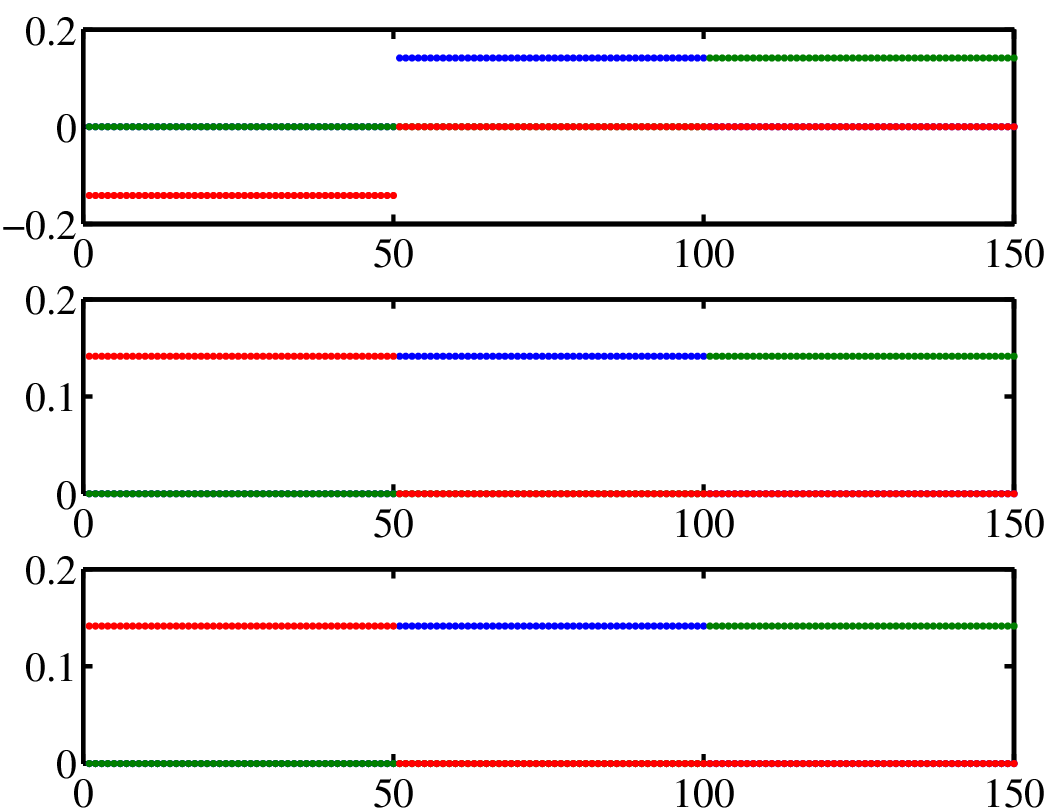}}
\subfigure[SSRk,
G2]{\label{fig:gausseigenvectorLap:g2}\includegraphics[width=5.0cm]{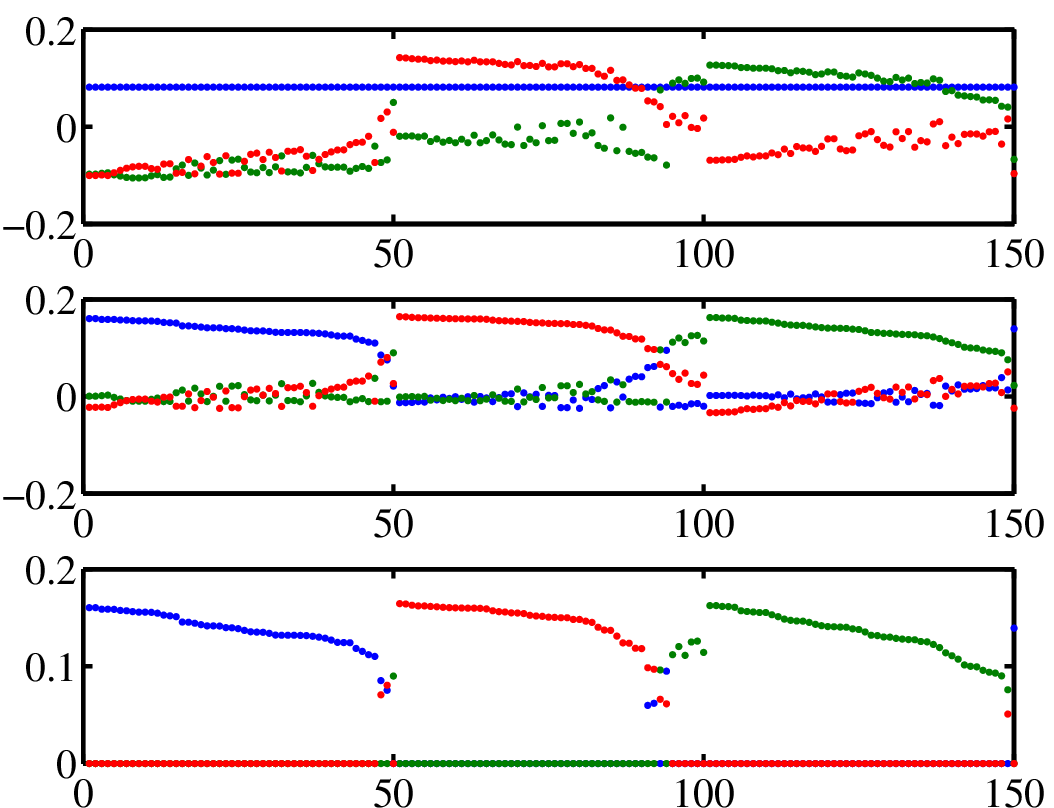}}
\subfigure[SSRk,
G3]{\label{fig:gausseigenvectorLap:g3}\includegraphics[width=5.0cm]{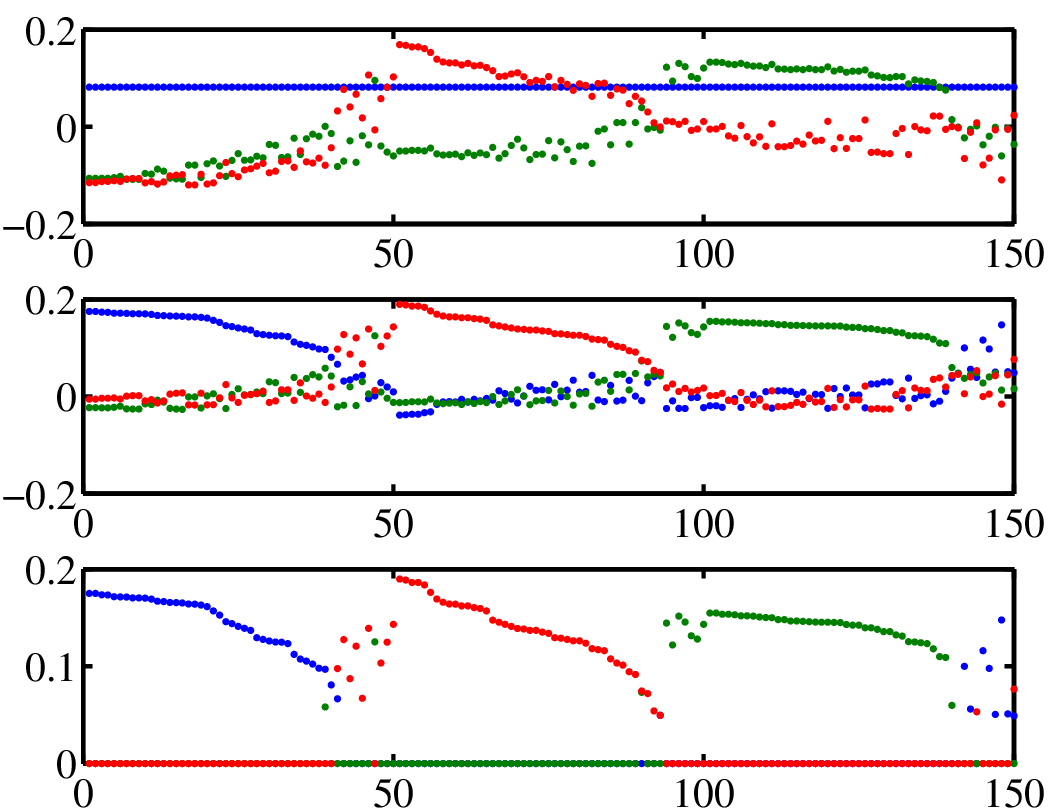}}\\
\subfigure[SSRl,
G1]{\label{fig:gausseigenvectorOri:g1}\includegraphics[width=5.0cm]{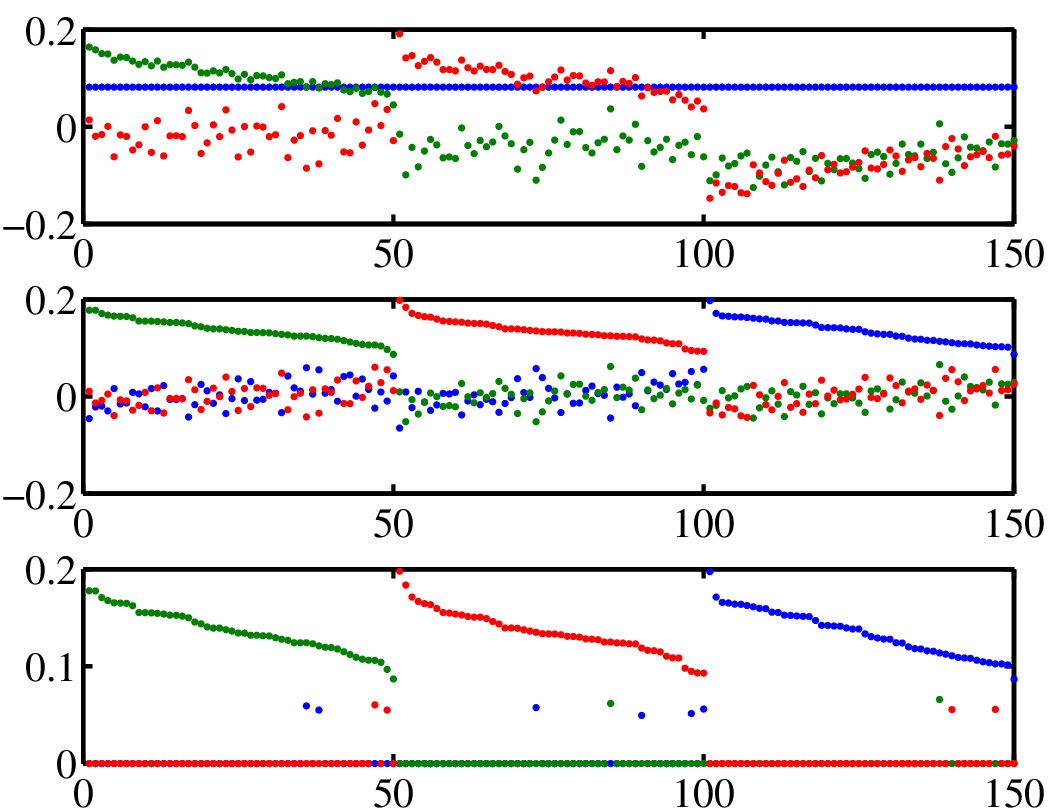}}
\subfigure[SSRl,
G2]{\label{fig:gausseigenvectorOri:g2}\includegraphics[width=5.0cm]{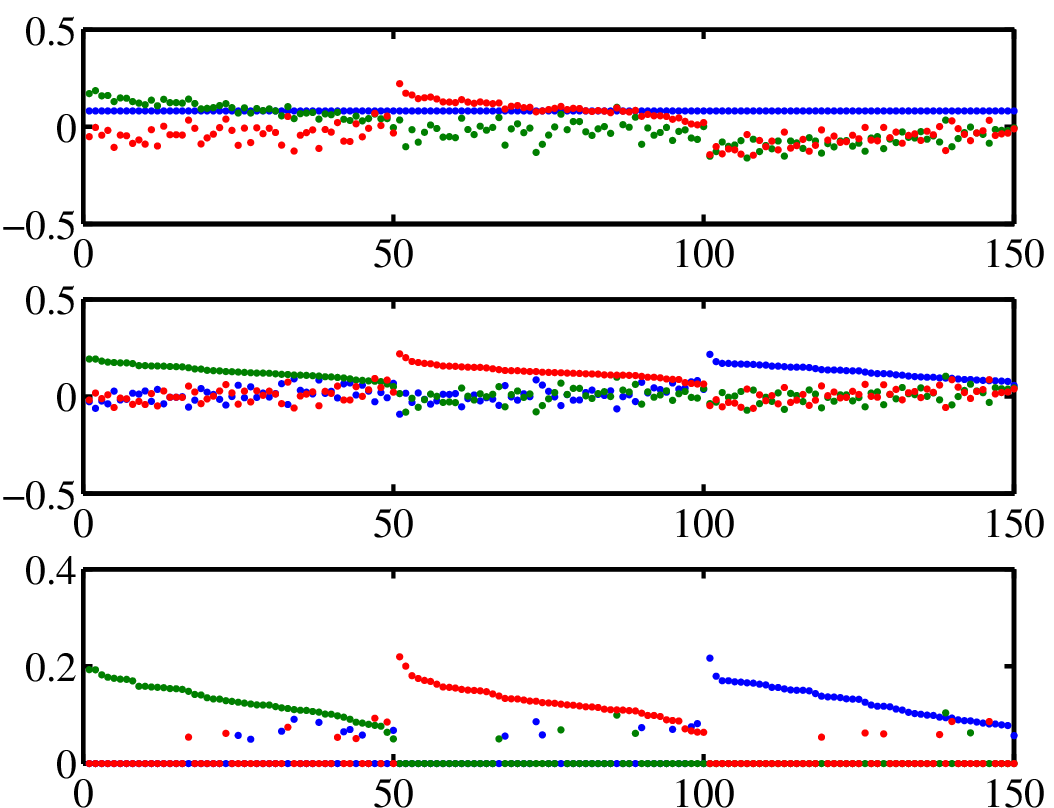}}
\subfigure[SSRl,
G3]{\label{fig:gausseigenvectorOri:g3}\includegraphics[width=5.0cm]{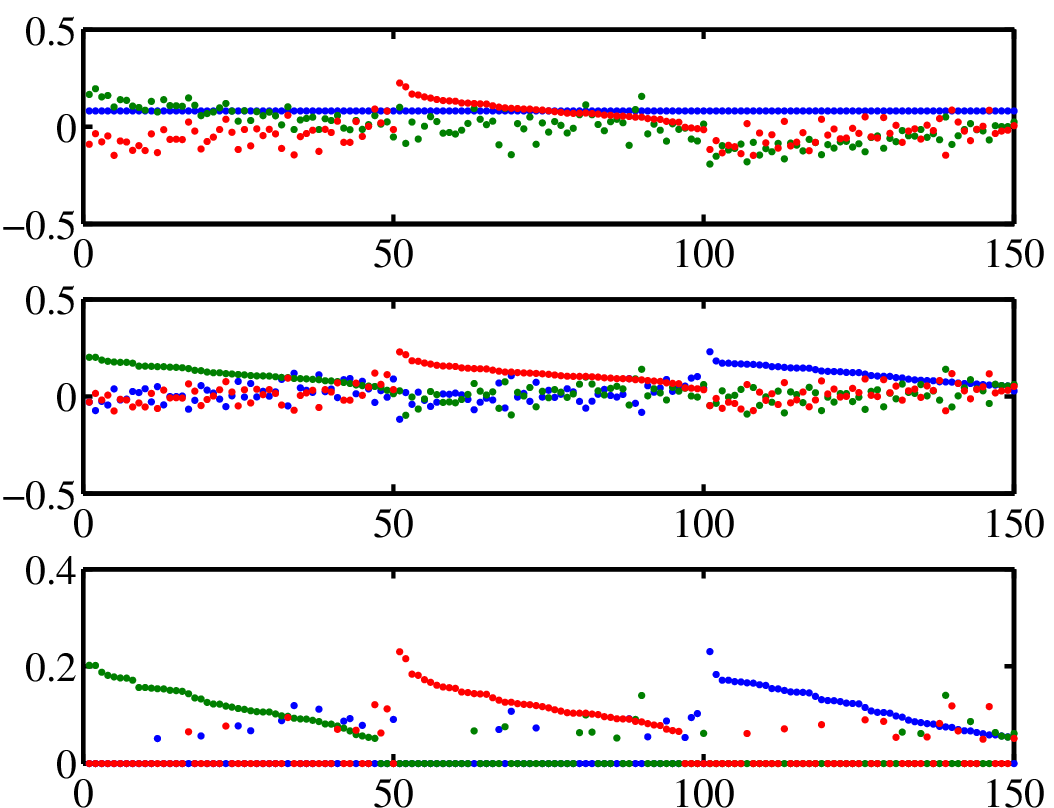}}
} \caption{Row vectors of sparse codes of G1-G3. In each
subfigure, top: eigenvectors $V\in \mathbb{R}^{3\times 150}$ (one
color corresponds to one vector), middle: sparse codes $H=R^TV$ obtained by NSCrt,
bottom: truncated sparse codes $\bar{H}$. The three segments 1-50,
51-100, 101-150 on horizontal axis correspond to the three
ground-truth clusters. Within each segment, samples are
rearranged according to the confidence to its truth cluster. }\label{fig:gausseigenvector}
\end{figure*}

\begin{figure}
  \begin{minipage}[t]{0.33\linewidth}
    \centering
    \includegraphics[width=4.7cm]{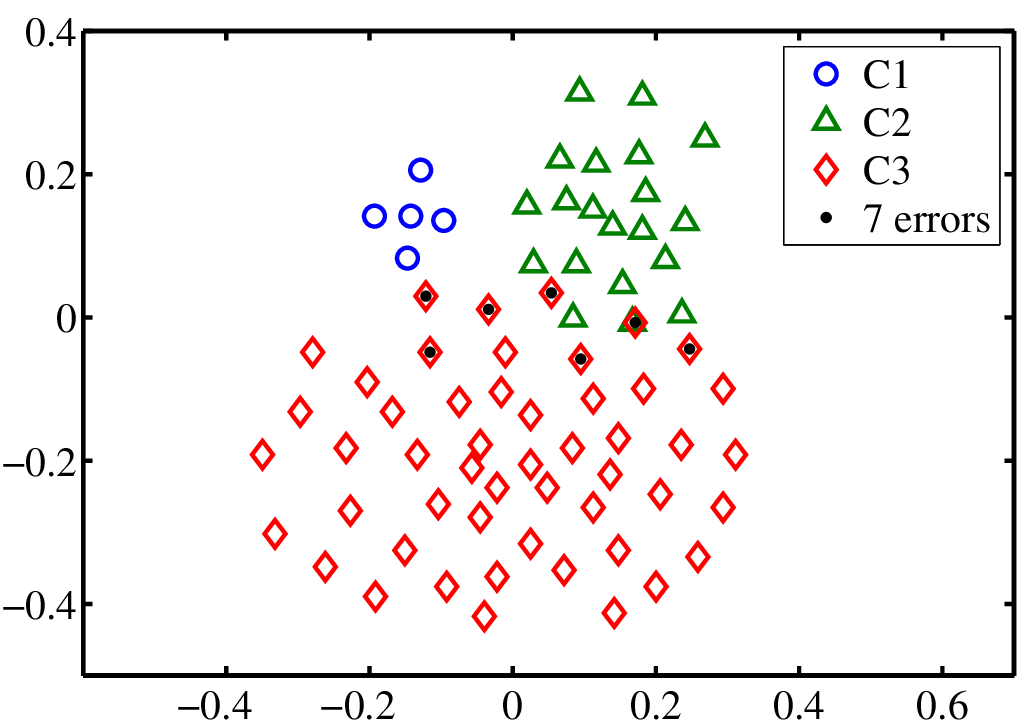}
    \caption{onion data: artificial data with unbalanced classes.}
    \label{fig:onionClassPoints}
  \end{minipage}%
  \hspace{2mm}
  \begin{minipage}[t]{0.67\linewidth}
    \centering
    \subfigure[SSRk]{\label{fig:onioneigenvector:lap}\includegraphics[width=5.0cm]{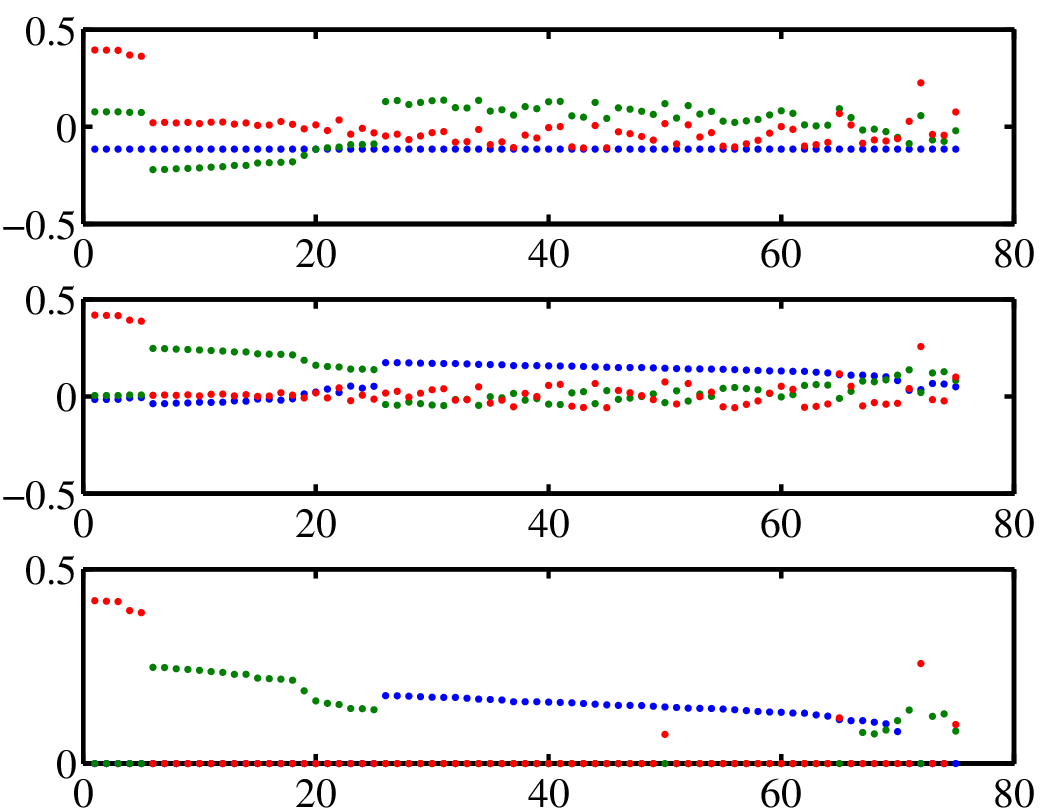}}
    \subfigure[SSRl]{\label{fig:onioneigenvector:ori}\includegraphics[width=5.0cm]{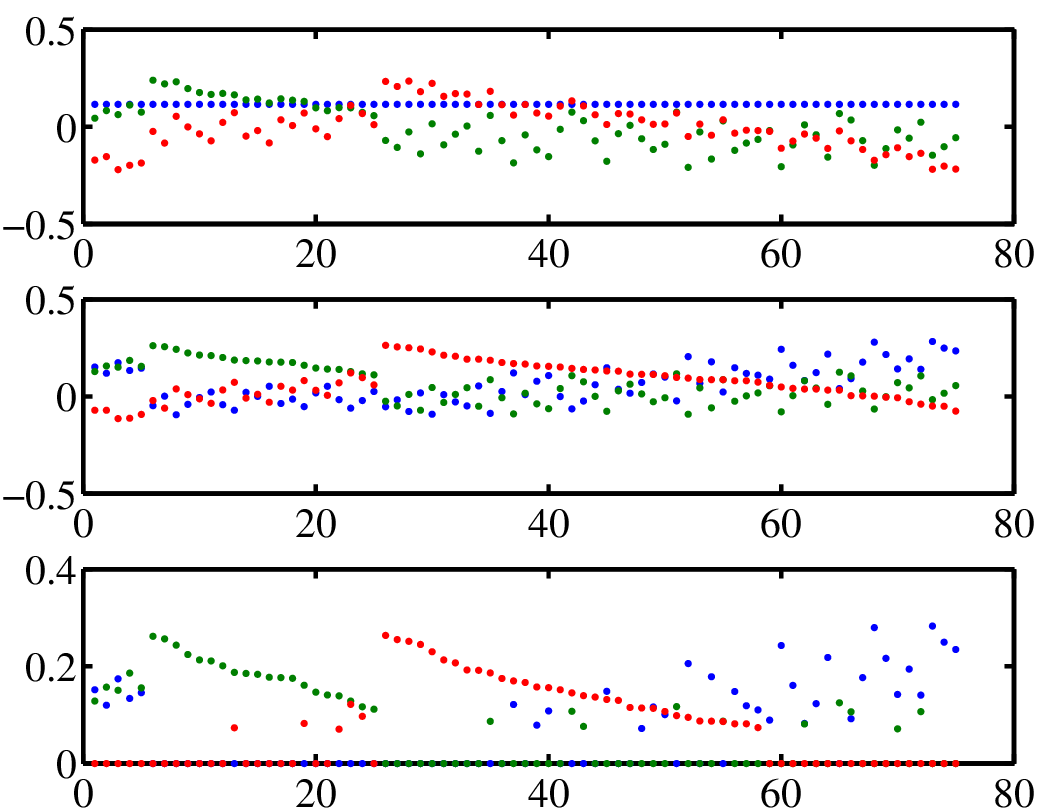}}
    \caption{Row vectors of sparse codes of onion data. Interpretation is similar to that in Figure~\ref{fig:gausseigenvector}.}
    \label{fig:onioneigenvector}
  \end{minipage}
\end{figure}

\begin{figure*}[h]
\centering{ \subfigure[SSRk]{\label{fig:eigenvector_points:ker}\includegraphics[width=10cm]{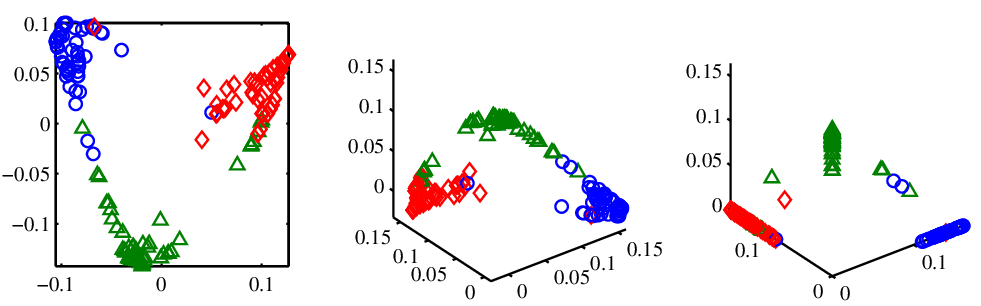}}\\
\subfigure[SSRl]{\label{fig:eigenvector_points:ori}\includegraphics[width=10cm]{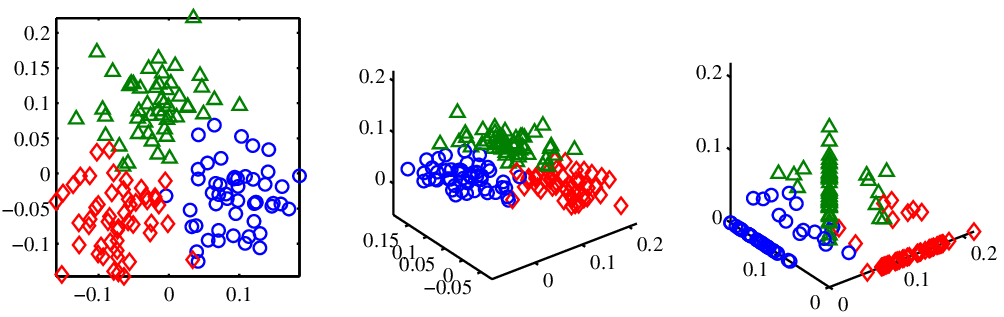}}}
\caption{Column vectors of sparse codes $H\in \mathbb{R}^{3\times
150}$ of G2 data. One point corresponds to one sample. Left:
intrinsic 2D manifold, i.e., $V_{2:3}$, middle: $H$, right:
truncated sparse codes $\bar{H}$.} \label{fig:eigenvector_points}
\end{figure*}

\begin{figure*}[h]
\centering{
\subfigure[SSRk]{\label{fig:gaussGram:ker}\includegraphics[width=7cm]{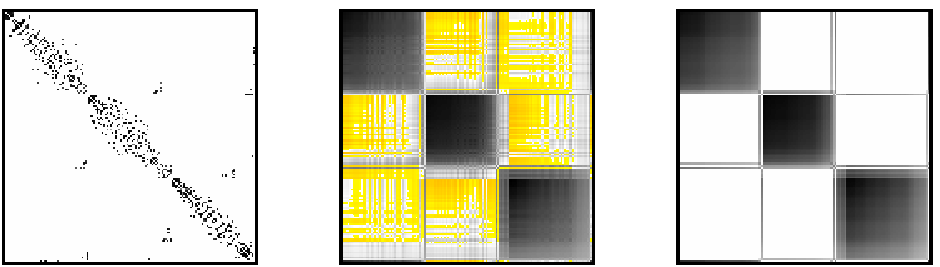}}\hspace{10mm}
\subfigure[SSRl]{\label{fig:gaussGram:ori}\includegraphics[width=7cm]{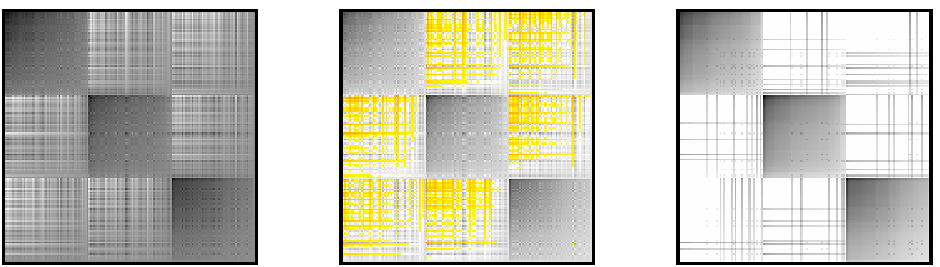}}}
\caption{Gram matrices of G2 data. Yellow color indicates negative
value (the heavier the color, the smaller the value). In each
subfigure, left: similarity matrix/Gram matrix of augmented data,
i.e., $W$, middle: $H^TH$, right: $\bar{H}^T\bar{H}$. The data is
rearranged as Figure~\ref{fig:gausseigenvector}.}
\label{fig:gaussGram}
\end{figure*}

\subsection{Illustrations of Cluster Structure Revealed by Sparse
Codes of SSR}\label{sec:cluster info}

It has been qualitatively analyzed how the sparse codes reveal
cluster structure in Section~\ref{sec:SSR}. We now illustrate it
from row perspective, column perspective, and Gram matrix
perspective. The Gaussian data G1, G2, G3, and onion data are taken
as examples. Both SSRk and SSRl are shown.

The row vectors of sparse codes $H$ are demonstrated in
Figure~\ref{fig:gausseigenvector} and
Figure~\ref{fig:onioneigenvector}. Usually, on each column of $H$,
only one value dominates, which indicates the
cluster membership. The cross sections of these vectors at the end
of each segment indicate the overlapping regions. In
Figure~\ref{fig:gausseigenvector}, the cross sections become more
and more significant from (a) to (c), faithfully reflecting the
overlapping status. Samples in these cross sections will
be misclassified by Scut, which are plotted as black dots in
Figure~\ref{fig:gaussClassPoints} and
Figure~\ref{fig:onionClassPoints}.

The column vectors of sparse codes, taking G2 as example, are
illustrated in Figure~\ref{fig:eigenvector_points}. For SSRk, compared
with the original data in Figure~\ref{fig:gaussClassPoints:g2}, the
codes make the cluster structure prominent. Besides, in contrast to
the 0-1 discrete codes of K-means, the sparse codes are continuous,
which can describe the clusters (if one entry dominates) and the overlaps
(if several comparable entries exist). The truncated sparse codes
try to resolve the overlaps, being closer to the K-means's codes.

The Gram matrix $H^TH$ reveals the linear relation of data: each
sample can be approximated by the linear combination of all samples,
with weights provided by the corresponding column of $H^TH$. Taking
G2 as example, the matrices are shown in Figure~\ref{fig:gaussGram}.
We see the weights in each column of $H^TH$ distribute according to
the relevance of the corresponding sample to all samples.
$\bar{H}^T\bar{H}$ removes the small weights, especially the
negative ones, making the main relation clearer.
For SSRk, the averages of column sums in $\bar{H}^T\bar{H}$ on G1-G3 and onion are 1, 0.9866, 0.9249, and 0.9784 respectively, still close to 1, implying the distortions due to truncations are small.

%Finally, the mutual coherence $\mu(\hat{D})$ of the dictionaries for
%G1-G3 and onion are: $5.5\times 10^{-17}$, 0.0024, 0.0026 and 0.0043
%which imply good incoherent property.

\begin{figure*}[h]
\centering{
\subfigure[G2]{\label{fig:W:G2}\includegraphics[width=4.6cm]{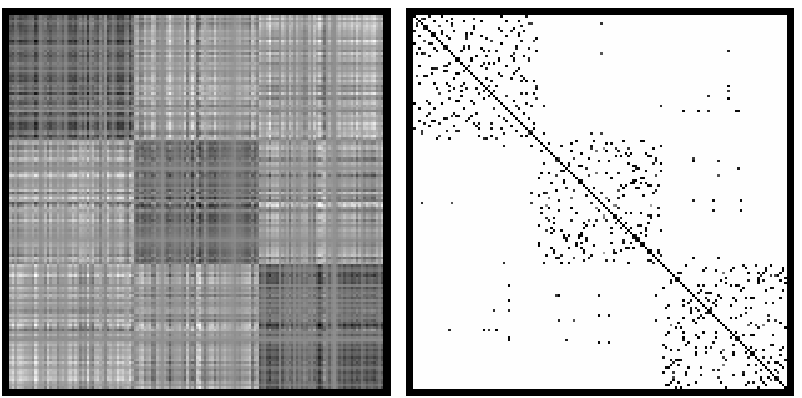}}\hspace{2mm}
\subfigure[USPS3]{\label{fig:W:uspsK3}\includegraphics[width=4.6cm]{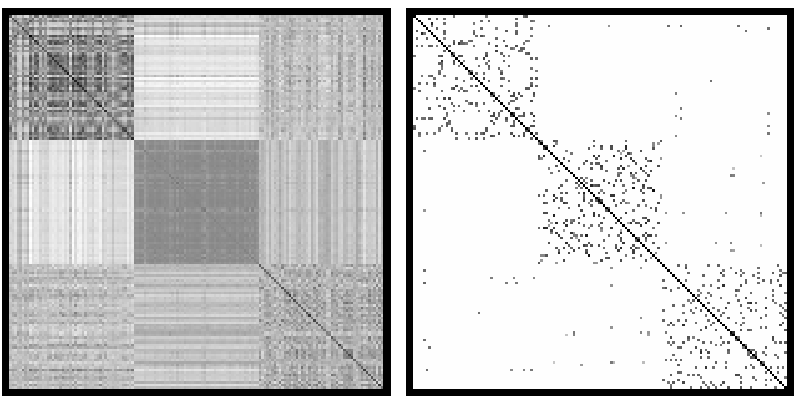}}\hspace{2mm}
\subfigure[iris]{\label{fig:W:iris}\includegraphics[width=4.6cm]{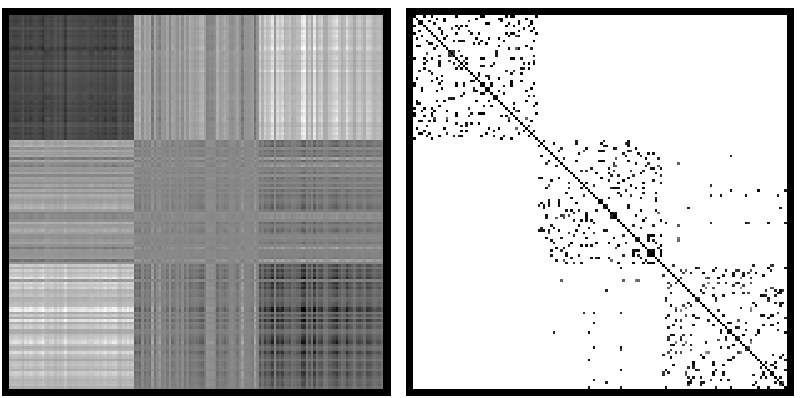}}\\
\subfigure[onion]{\label{fig:W:onion}\includegraphics[width=4.6cm]{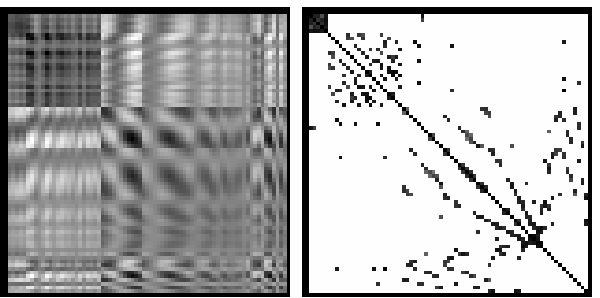}}\hspace{2mm}
\subfigure[wdbc]{\label{fig:W:wdbc}\includegraphics[width=4.6cm]{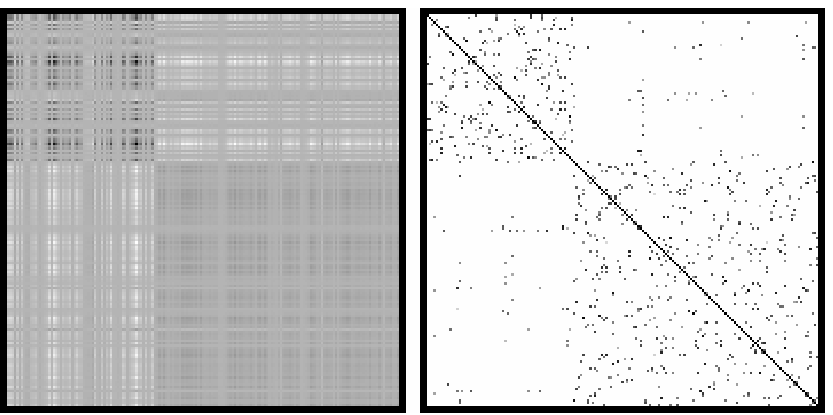}}\hspace{2mm}
\subfigure[USPS8]{\label{fig:W:uspsK8}\includegraphics[width=4.6cm]{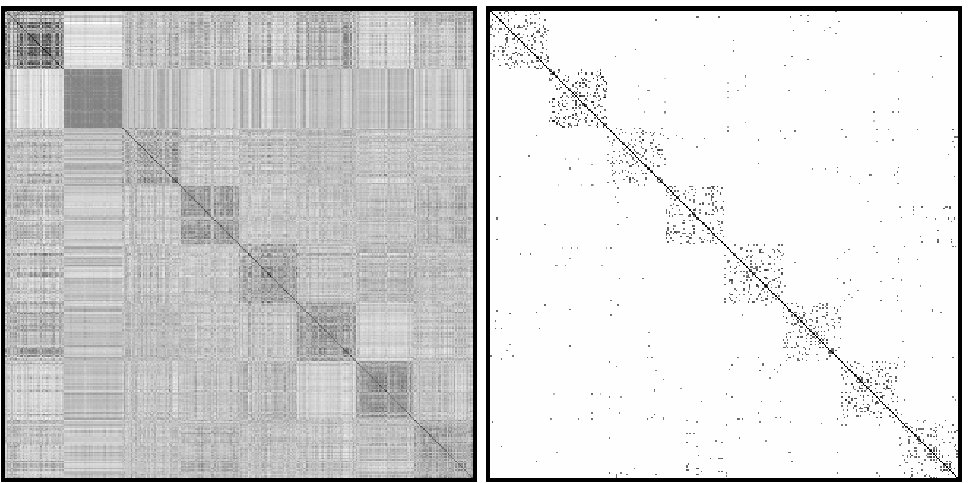}}
} \caption{Linear version v.s. kernel version: a comparison
of the similarity matrices. In each subfigure, left: linear
version, right: kernel version.}\label{fig:W}
\end{figure*}

\begin{figure*}[h]
\centering{
\subfigure[USPS3]{\label{fig:2d:USPS3}\includegraphics[width=4cm]{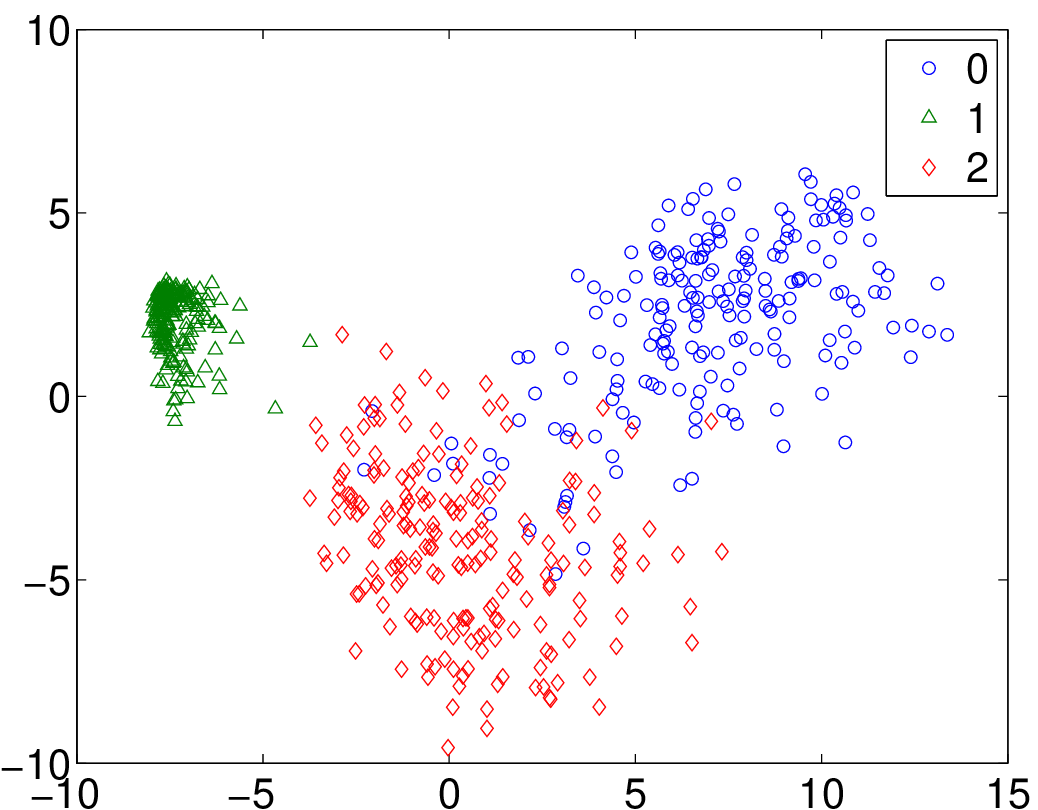}}\hspace{8mm}
\subfigure[iris]{\label{fig:2d:iris}\includegraphics[width=4cm]{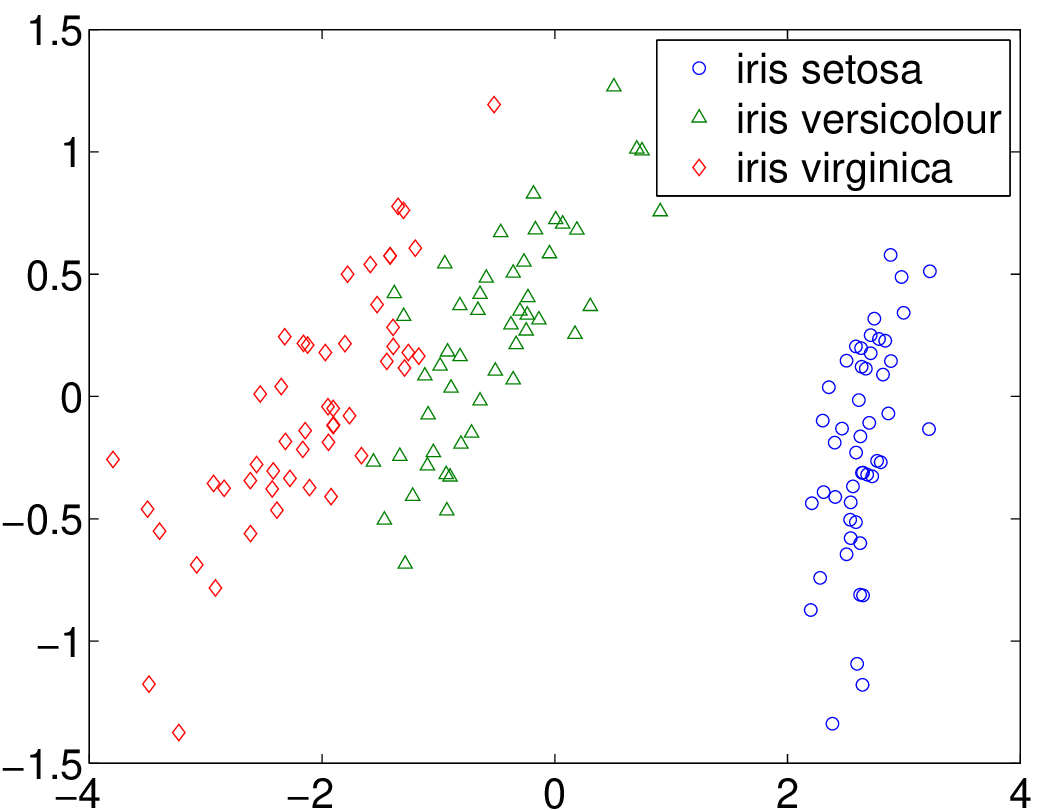}}
} \caption{2D views of USPS3 and iris (the first two PCs of
PCA).}\label{fig:usps3 iris}
\end{figure*}

\begin{table*}[h]
\caption{Linear version v.s. kernel version: $\rho$ values
($0\leq \rho \leq 1$) of the similarity matrices. A higher value
indicates the ideal graph condition is met better.}\label{tab:rho}
\vskip 0.1in
\begin{center}
%\vskip -0.1in
\begin{scriptsize} %
\begin{tabular}{|c||*{12}{c|}}
\hline $\rho$ (\%) & G1 & G2 & G3 & onion & iris & wdbc & Isolet &
USPS3 & USPS8 & USPS10 & 4News & TDT2\\\hline\hline

SSRl & 20.1 & 15.8 & 12.7 & 15.2 & 1.3 & 13.2 & 0.1 & 2.2 & 1.0 &
0.1 & 4.0 & 0.04\\\hline

SSRk & 100.0 & 59.4 & 58.4 & 39.4 & 63.2 & 67.7 & 16.8 & 60.5 & 5.1
& 6.8 & 1.9 & 4.7\\\hline
\end{tabular}
\end{scriptsize}
\end{center}
\vskip -0.1in
\end{table*}

\begin{table*}[h]
\caption{Linear version v.s. kernel version: clustering performance. K-PC: K-means applied on PCs
$\Sigma_{1:r-1}V_{1:r-1}$. Rcutl: K-means applied on $V_{1:r-1}$.
Scutl: Scut of SSRl. Rcut and Scut refer to kernel versions by
default. For the algorithms that depend on random initialization,
mean/standard deviation of the accuracy over 20 trials are reported.
The last row shows the mean over all data sets.
%The mean of time cost
%on two large data sets USPS10 and 4News are shown at the end. The
%time cost of K-PC, Rcutl, and Scutl is eigen-computation + K-means
%or NSCrt; that of Rcut and Scut is similarity-matrix construction +
%eigen-computation + K-means or NSCrt.
}\label{tab:clustering ori vs
ker} \vskip 0.1in
\begin{center}
%\vskip -0.1in
\begin{scriptsize} %
\begin{tabular}{|c||c|c|c|c||c|c|}
\hline

\multirow{2}{*}{Accuracy (\%)}

& \multicolumn{4}{c||}{Linear version} &
\multicolumn{2}{c|}{Kernel version}\\\cline{2-7}

& Kmeans & K-PC & Rcutl & Scutl & Rcut & Scut\\\hline\hline

G1  & 91.0 / 18.5 & 93.5 / 15.9 & 97.6 / 10.6 & \tb{100.0} & 91.4 /
17.6 & \tb{100.0}\\\hline

G2  & 95.3 / 0.0 & 95.3 / 0.0 & \tb{96.0} / 0.0 & 95.3 & 90.8 / 8.3
& 92.7\\\hline

G3  & 82.7 / 0.0 & 82.7 / 0.0 & 82.7 / 0.0 & \tb{83.3} & 82.0 / 0.0
& 82.0\\\hline

onion  & 61.7 / 6.9 & 62.5 / 7.2 & 62.6 / 5.2 & 61.3 & 90.5 / 6.6 &
\tb{90.7}\\\hline

iris  & 81.5 / 13.9 & 80.8 / 13.9 & 77.3 / 0.0 & 78.0 & 87.4 / 9.1 &
\tb{95.3}\\\hline

wdbc  & 85.4 / 0.0 & 85.4 / 0.0 & 85.4 / 0.0 & 87.5 & \tb{88.9} /
0.0 & 88.4\\\hline

Isolet  & 71.7 / 4.9 & 70.8 / 5.6 & 71.9 / 4.6 & 65.0 & 65.0 / 7.5 &
\tb{82.0}\\\hline

USPS3  & 92.0 / 0.0 & 78.3 / 0.0 & 71.7 / 0.0 & 72.6 & 97.2 / 8.6 &
\tb{99.2}\\\hline

USPS8  & 74.7 / 6.3 & 74.3 / 3.5 & 74.2 / 3.8 & 76.3 & 74.1 / 10.6 &
\tb{91.0}\\\hline

USPS10  & 65.6 / 2.8 & 61.8 / 4.0 & 65.5 / 1.5 & 64.9 & \tb{66.6} /
8.8 & 66.4\\\hline

4News  & 88.3 / 10.4 & 88.7 / 6.7 & 82.3 / 15.0 & 94.9 & 91.8 / 10.0
& \tb{96.0}\\\hline\hline

Average & 80.9 / 5.8 & 79.5 / 5.2 & 78.8 / 3.7 & 79.9 & 84.2 / 7.9 &
\tb{89.4}\\\hline
%\hline
%
%Time cost (s) & & & & & &\\\hline
%
%USPS10 & 8.68 & \tabincell{c}{1.60\\(1.38+0.22)} &
%\tabincell{c}{1.69\\(1.38+0.31)} &
%\tabincell{c}{\tb{1.51}\\(1.38+0.13)} &
%\tabincell{c}{7.94\\(7.41+0.43+0.10)} &
%\tabincell{c}{7.96\\(7.41+0.43+0.12)}\\\hline
%
%4News & 584.69 & \tabincell{c}{8.46\\(8.44+0.02)} &
%\tabincell{c}{8.45\\(8.44+0.01)} & \tabincell{c}{8.46\\(8.44+0.02)}
%& \tabincell{c}{\tb{1.42}\\(1.03+0.38+0.01)} &
%\tabincell{c}{1.46\\(1.03+0.38+0.05)}\\\hline
\end{tabular}
\end{scriptsize}
\end{center}
\vskip -0.1in
\end{table*}

\subsection{Linear Version v.s. Kernel Version}\label{sec:ori vs ker}
We first compare how well the ideal graph condition is met for the
two versions qualitatively from the view of the structure of
similarity matrix, and then propose a measure for quantitative
evaluation. Finally we compare the clustering performance of the two versions.

First, a clearer block-diagonal structure of the similarity matrix
indicates the condition is met better. The similarity matrices on
some representative data sets are shown in Figure~\ref{fig:W}. We
see that so long as the data has clear clusters, the similarity
matrix of the kernel version will meet the condition well, but that
of the linear version does not necessarily be so. In
Figure~\ref{fig:W} (a)-(c), the condition is deviated more and
more for the linear version, as can be expected from the data
distributions of these data sets (see
Figure~\ref{fig:gaussClassPoints:g2} and Figure~\ref{fig:usps3
iris}).

Next, based on the
eigengap (Davis-Kahan) theorem of matrix perturbation (see Theorem~7 of \cite{von2007tutorial}), we propose a $\rho$ value,
measuring how well the ideal graph condition is met. The $\rho$
value of a similarity matrix $W\in\mathbb{R}^{n\times n}$ for $1\leq
r<n$ clusters is defined as
\begin{equation}\label{equ:rho}
\rho=(\lambda_{r+1}-\lambda_{r})/\lambda_{r+1},
\end{equation}
where $\lambda_{i}$ is the $i$th smallest eigenvalue of the
Laplacian matrix. If $\lambda_{r+1}=0$, $\rho$ is defined to be 0.
It has the following properties:
\begin{theorem}
$0\leq \rho \leq 1$, and $\rho=1$ if and only if the ideal graph
condition is met exactly.
\end{theorem}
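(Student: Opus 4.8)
The plan is to reduce the statement to elementary facts about the spectrum of the Laplacian $L=S-W$ already collected in Section~\ref{sec:ratio cut}. Order the eigenvalues ascendingly as $\lambda_1\le\lambda_2\le\cdots\le\lambda_n$. Two facts drive the argument: $L$ is positive semi-definite (property~1), so every $\lambda_i\ge 0$ and $\lambda_1=0$; and the multiplicity of the eigenvalue $0$ equals the number of connected components of the graph (property~4). By Definition~\ref{def:ideal}, the ideal graph condition is exactly the assertion that this number is $K$, equivalently $\lambda_1=\cdots=\lambda_K=0<\lambda_{K+1}$.

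First I would prove the bounds $0\le\rho\le 1$ by a short case split on $\lambda_{K+1}$. If $\lambda_{K+1}=0$ then $\rho=0$ by the stated convention, so the bounds hold trivially. If $\lambda_{K+1}>0$, write $\rho=1-\lambda_K/\lambda_{K+1}$; ascending order together with positive semi-definiteness gives $0\le\lambda_K\le\lambda_{K+1}$, so $\lambda_K/\lambda_{K+1}\in[0,1]$ and hence $\rho\in[0,1]$. This settles the first claim.

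For the equivalence I would work through $\rho=1\iff\lambda_K=0<\lambda_{K+1}$. The value $\rho=1$ forces $\lambda_{K+1}\ne 0$, since the convention assigns $\rho=0$ on that branch; we are therefore in the case above, where $1-\lambda_K/\lambda_{K+1}=1$ is equivalent to $\lambda_K=0$. Because $0\le\lambda_1\le\cdots\le\lambda_K=0$, the eigenvalue $0$ then has multiplicity at least $K$, while $\lambda_{K+1}>0$ pins it to exactly $K$; by property~4 this is precisely ``$K$ connected components,'' i.e. the ideal graph condition. Conversely, the ideal condition yields multiplicity exactly $K$, hence $\lambda_K=0<\lambda_{K+1}$ and $\rho=1$.

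There is no hard analytic step here; the only point needing care is the boundary convention $\rho:=0$ when $\lambda_{K+1}=0$. One must verify it cannot spuriously return $1$: the two branches never overlap at the value $1$ except through $\lambda_K=0<\lambda_{K+1}$, which is exactly the case the equivalence singles out. The substantive input is the spectral-graph fact (property~4) that the multiplicity of $0$ counts connected components, read together with Definition~\ref{def:ideal}; everything else is bookkeeping on the ordered spectrum.
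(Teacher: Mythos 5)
Your proof is correct and follows essentially the same route as the paper's: both rest on the positive semi-definiteness of $L$ and the spectral-graph fact that the multiplicity of the eigenvalue $0$ counts connected components, so that the ideal condition is equivalent to $\lambda_1=\cdots=\lambda_K=0<\lambda_{K+1}$. You are in fact slightly more thorough than the paper, which leaves the bounds $0\leq\rho\leq 1$ and the $\lambda_{K+1}=0$ convention implicit, whereas you verify both explicitly.
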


\begin{proof}
When the ideal graph condition is met, according to the properties
of Laplacian matrix in Section~\ref{sec:ratio cut}, $\lambda_{r}=0$
and $\lambda_{r+1}\neq 0$, so $\rho=1$. Conversely, if $\rho=1$,
then $\lambda_{r+1}\neq 0$ and $\lambda_{1}=\cdots=\lambda_{r}=0$,
implying $r$ connected components exist, so the ideal
graph condition is met.
\end{proof}
When the graph is nearly ideal, $\rho\approx 1$. The better the
condition is met, the higher the $\rho$ is. A comparison of the
$\rho$ values between the linear version and kernel version
are shown in Table~\ref{tab:rho}. It is clear that generally the
kernel version meets the condition better than the linear
version.

Finally, the clustering performance are compared in Table~\ref{tab:clustering ori vs ker}. The
results on TDT2 and polb are absent, since the linear version
fails to run on them. %Concerning accuracy,
The kernel version generally performs better, for it meets the ideal
graph condition better. The contrasts are most apparent on onion,
iris, and wdbc, as implied by Figure~\ref{fig:W}. But, the results
are reversed on G2 and G3, maybe the linear version meets the
condition well (see Figure~\ref{fig:W:G2}).
%Besides, we have the following
%additional observations. 1) The time cost of kernel version on
%USPS10, which is mainly contributed by the construction of
%similarity matrix, is comparable to that of Kmeans while larger than
%those of K-PC, Rcutl, and Scutl. However, when dealing with sparse
%data, kernel version may be more efficient, as 4News shows. This is
%because the eigen-computation of linear version applies on
%the mean-removed data, which is no longer sparse, and processing
%dense data is much costly. 2) There is no significant accuracy
%difference between the methods within linear version.
%However, K-PC, Rcutl, and Scutl, which work with eigenvectors, are
%more efficient than K-means.

\begin{figure}
  \begin{minipage}[t]{0.55\linewidth}
    \centering
    \includegraphics[width=6cm]{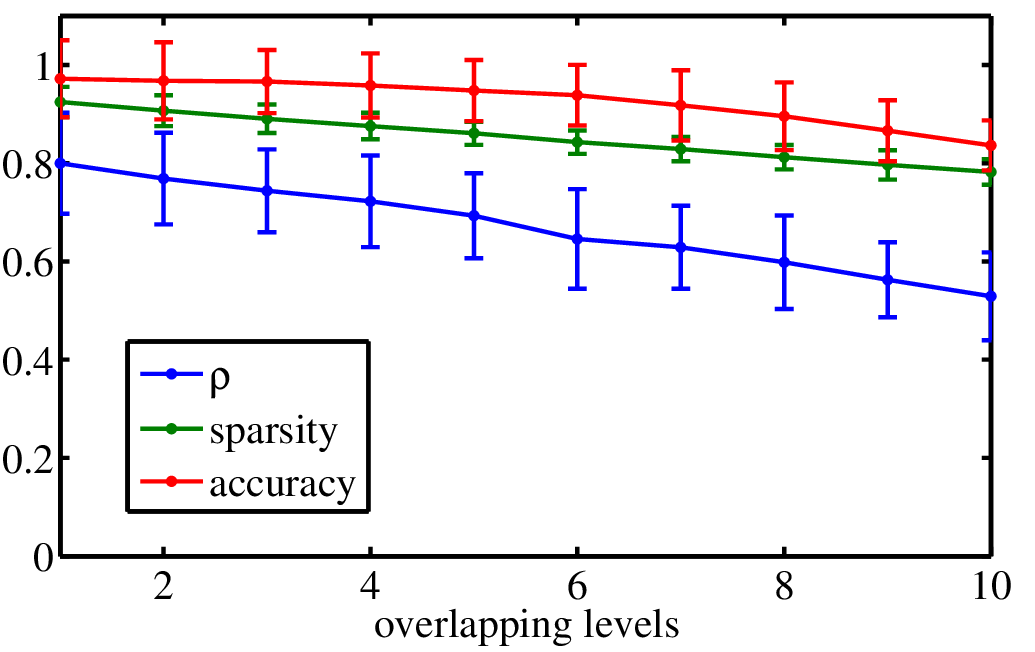}
    \caption{The relations between $\rho$, sparsity of SSRk, and
clustering accuracy of kernel Scut on a set of random Gaussian data with more and more heavy overlaps. The curve and
the bar show the mean and the standard derivation over 50 trials respectively.}\label{fig:rho gauss}
  \end{minipage}%
  \hspace{3mm}
  \begin{minipage}[t]{0.42\linewidth}
    \centering
    \includegraphics[width=5cm]{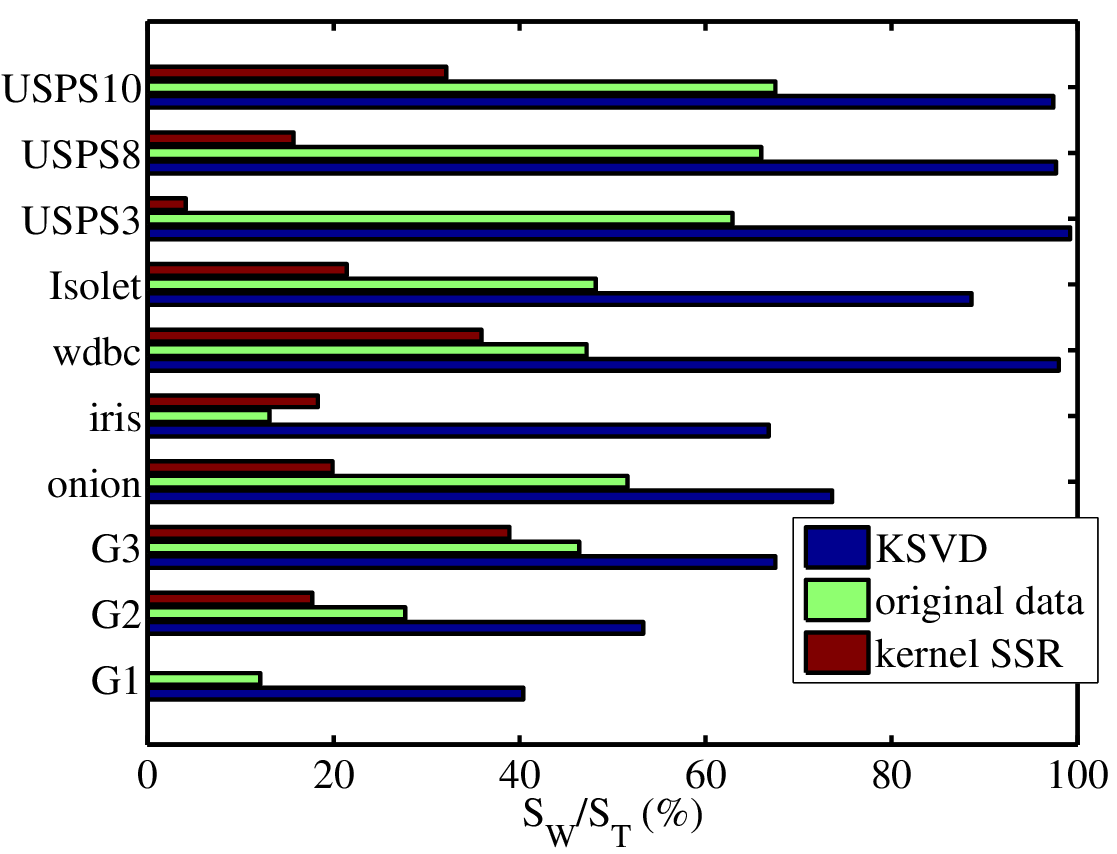}
    \caption{Comparison of the ratios of within-class distance between
the original data, sparse codes of KSVD, and sparse codes of SSRk. On G1, the ratio of $\text{SSRk}$ is
zero.}\label{fig:within class bar}
  \end{minipage}
\end{figure}

\subsection{Relations between $\rho$, Sparsity of SSR, and Clustering Accuracy}

In principle, $\rho$ reflects the separability of the clusters, it
thus relates to the clustering accuracy. On the other hand, it
implies the noise level in the indicator vectors, which is related
to the sparsity of codes. Therefore, there are proportional relations between them. We validate the relations by a set of
Gaussian data with more and more heavy overlaps (like G1-G3).
The result is shown in Figure~\ref{fig:rho gauss},
clear proportional relations can be observed. The relations imply that, potentially, the $\rho$ value, which can be computed
once the data is given, may provide us an estimation of the sparsity
and the clustering performance before SSR and Scut are applied or
when the ground-truth class labels are not available.

\begin{table*}[htbp]
\caption{Comparison of the clustering results between kernel Scut
and Graclus \cite{dhillon2007weighted}, Ncut
\cite{yu2003multiclass}, ZP \cite{zelnik2004self}, GMM-V (Gaussian
mixture model, applied on $V_{1:r}$ of Laplacian matrix), GMM
(applied on original data), Kmeans, NJW \cite{ng2002spectral}, and
Rcut. For algorithms that depend on random initialization (from
GMM-V to Rcut), the score when the objective function obtains the
best value over 20 trials is reported. The last row shows the
average score over all data sets.}\label{tab:scut}\vskip 0.1in
\begin{center}
\begin{scriptsize} %

\subtable[Accuracy]{
\begin{tabular}{|c|*{9}{|c}|}
\hline Accuracy (\%) & Graclus & Ncut & ZP & GMM-V & GMM & Kmeans &
NJW & Rcut & Scut
\\\hline\hline

G1  & \tb{100.0} & \tb{100.0} & \tb{100.0} & \tb{100.0} & \tb{100.0}
& \tb{100.0} & \tb{100.0} & \tb{100.0} & \tb{100.0}\\\hline

G2  & 92.7 & 93.3 & 92.7 & 93.3 & \tb{96.0} & 95.3 & 92.7 & 92.7 &
92.7\\\hline

G3  & \tb{84.0} & 81.3 & 81.3 & 81.3 & 54.7 & 82.7 & 81.3 & 82.0 &
82.0\\\hline

onion  & 62.7 & 90.7 & 90.7 & 73.3 & \tb{93.3} & 57.3 & 90.7 & 92.0
& 90.7\\\hline

iris  & 87.3 & 87.3 & 88.0 & 84.0 & 52.7 & 89.3 & 90.0 & 90.0 &
\tb{95.3}\\\hline

wdbc  & 88.4 & 77.7 & 88.2 & 83.5 & 85.1 & 85.4 & 81.9 & \tb{88.9} &
88.4\\\hline

Isolet1  & 76.8 & 80.9 & 70.7 & 73.5 & 55.6 & 81.4 & 80.5 &
\tb{82.2} & 82.0\\\hline

USPS3  & \tb{99.5} & 60.9 & 99.2 & 99.2 & 95.7 & 92.0 & 99.2 & 99.1
& 99.2\\\hline

USPS8  & 80.2 & 47.9 & 92.1 & 85.7 & 67.8 & 72.2 & 90.8 & \tb{93.3}
& 91.0\\\hline

USPS10  & \tb{78.9} & 67.3 & 64.6 & 75.8 & 49.4 & 67.8 & 66.5 & 66.8
& 66.4\\\hline

4News  & 95.6 & \tb{96.1} & 95.7 & 71.3 & - & 92.2 & \tb{96.1} &
95.8 & 96.0\\\hline

TDT2  & 54.4 & 88.0 & 84.4 & 71.6 & - & - & 70.7 & 76.6 &
\tb{88.5}\\\hline

polb  & 83.8 & 82.9 & 84.8 & 81.0 & - & - & 82.9 & \tb{87.6} &
84.8\\\hline\hline

Average & 83.4 & 81.1 & 87.1 & 82.6 & 75.0 & 83.2 & 86.4 & 88.2 &
\tb{89.0}\\\hline
\end{tabular}}\label{tab:scut:accuracy}

\subtable[NMI]{
\begin{tabular}{|c|*{9}{|c}|}
\hline NMI (\%) & Graclus & Ncut & ZP & GMM-V & GMM & Kmeans & NJW &
Rcut & Scut
\\\hline\hline

G1  & \tb{100.0} & \tb{100.0} & \tb{100.0} & \tb{100.0} & \tb{100.0}
& \tb{100.0} & \tb{100.0} & \tb{100.0} & \tb{100.0} \\\hline

G2  & 76.4 & 78.5 & 75.5 & 78.5 & \tb{83.9} & 80.9 & 75.5 & 75.5 &
75.5 \\\hline

G3  & \tb{56.9} & 49.4 & 49.4 & 49.6 & 26.2 & 48.6 & 49.4 & 50.8 &
50.4 \\\hline

onion  & 44.6 & 68.4 & 68.4 & 37.9 & \tb{74.4} & 46.4 & 68.4 & 71.4
& 68.4 \\\hline

iris  & 75.0 & 75.0 & 75.6 & 72.2 & 65.4 & 75.8 & 77.8 & 77.8 &
\tb{84.6} \\\hline

wdbc  & 49.4 & 35.2 & 49.0 & 42.5 & 37.1 & 46.7 & 39.2 & \tb{49.9} &
49.4 \\\hline

Isolet1  & 79.8 & 88.0 & 82.0 & 83.6 & 62.7 & 84.8 & 87.8 & 87.6 &
\tb{88.2} \\\hline

USPS3  & \tb{97.2} & 58.5 & 95.8 & 95.7 & 84.6 & 80.0 & 95.8 & 95.4
& 95.8 \\\hline

USPS8  & 85.8 & 66.9 & 86.8 & 83.7 & 55.7 & 70.5 & 85.6 & \tb{87.2}
& 86.2 \\\hline

USPS10  & \tb{81.0} & 80.1 & 77.6 & 79.0 & 41.0 & 64.0 & 78.6 & 78.8
& 78.4 \\\hline

4News  & 84.3 & \tb{85.7} & 84.2 & 65.1 & - & 81.4 & \tb{85.7} &
84.8 & 85.4 \\\hline

TDT2  & 74.2 & \tb{85.5} & 83.1 & 76.1 & - & - & 80.8 & 83.5 & 85.0
\\\hline

polb  & 55.4 & 54.2 & 58.6 & 57.8 & - & - & 54.2 & \tb{65.1} & 58.6
\\\hline\hline

Average & 73.8 & 71.2 & 75.8 & 70.9 & 63.1 & 70.8 & 75.3 & \tb{77.5}
& 77.4\\\hline
\end{tabular}}\label{tab:scut:nmi}

\subtable[RI]{
\begin{tabular}{|c|*{9}{|c}|}
\hline RI (\%) & Graclus & Ncut & ZP & GMM-V & GMM & Kmeans & NJW &
Rcut & Scut
\\\hline\hline

G1  & \tb{100.0} & \tb{100.0} & \tb{100.0} & \tb{100.0} & \tb{100.0}
& \tb{100.0} & \tb{100.0} & \tb{100.0} & \tb{100.0}\\\hline

G2  & 90.8 & 91.6 & 90.8 & 91.6 & \tb{94.8} & 94.0 & 90.8 & 90.8 &
90.8\\\hline

G3  & \tb{81.7} & 79.0 & 79.0 & 78.6 & 56.7 & 79.7 & 79.0 & 79.2 &
79.6\\\hline

onion  & 66.7 & 85.3 & 85.3 & 63.8 & \tb{88.7} & 64.1 & 85.3 & 86.5
& 85.3\\\hline

iris  & 86.2 & 86.2 & 86.8 & 83.7 & 72.2 & 88.0 & 88.6 & 88.6 &
\tb{94.2}\\\hline

wdbc  & 79.5 & 65.3 & 79.2 & 72.4 & 74.5 & 75.0 & 70.3 & \tb{80.3} &
79.5\\\hline

Isolet1  & 96.7 & \tb{97.5} & 95.6 & 96.2 & 93.1 & 97.3 & 97.4 &
97.2 & \tb{97.5}\\\hline

USPS3  & \tb{99.4} & 73.9 & 99.0 & 98.9 & 95.0 & 90.8 & 99.0 & 98.9
& 99.0\\\hline

USPS8  & 94.6 & 83.1 & 96.4 & 95.7 & 88.1 & 91.5 & 96.2 & \tb{96.8}
& 96.0\\\hline

USPS10  & \tb{94.6} & 93.5 & 92.9 & \tb{94.6} & 78.8 & 91.7 & 93.3 &
93.2 & 93.2\\\hline

4News  & 95.7 & \tb{96.2} & 95.8 & 83.6 & - & 92.5 & \tb{96.2} &
95.9 & 96.1\\\hline

TDT2  & 91.3 & \tb{96.3} & 95.3 & 93.7 & - & - & 93.8 & 95.1 &
96.2\\\hline

polb  & 83.6 & 83.1 & 85.0 & 81.4 & - & - & 83.1 & \tb{86.6} &
85.0\\\hline\hline

Average & 82.4 & 87.0 & 90.9 & 87.2 & 84.2 & 87.7 & 90.2 & 91.5 &
\tb{91.7}\\\hline
\end{tabular}}\label{tab:scut:ri}

\end{scriptsize}
\end{center}
\vskip -0.1in
\end{table*}

\begin{table*}[htbp]
\caption{Time cost on the three largest data sets, USPS10
(256$\times$7291, 10 classes), 4News (26214$\times$2372, 4 classes),
and TDT2 (36771$\times$9394, 30 classes). The
total time cost includes the construction of similarity matrix,
computation of eigenvectors, and post-processing of eigenvectors.
}\label{tab:time} \vskip 0.1in
\begin{center}
%\vskip -0.1in
\begin{scriptsize} %

\subtable[Total time cost]{
\begin{tabular}{|c|*{9}{|c}|}
\hline Time (s) & Graclus & Ncut & ZP & GMM-V & GMM & Kmeans & NJW &
Rcut & Scut
\\\hline\hline
USPS10  & \tb{7.5} & 8.2 & 12.0 & 11.0 & 101.0 & 8.7 & 7.9 & 7.9 &
8.0\\\hline

4News  & \tb{1.1} & 1.1 & 1.5 & 1.5 & - & 584.7 & 1.1 & 1.4 &
1.5\\\hline

TDT2  & \tb{17.7} & 19.7 & 187.8 & 47.1 & - & - & 22.8 & 22.7 &
19.3\\\hline
\end{tabular}}\label{tab:time:total}

%\subtable[Time cost excluding the construction of similarity
%matrix]{
%\begin{tabular}{|c|*{9}{|c}|}
%\hline Time (s) & Graclus & Ncut & ZP & GMM-V & GMM & Kmeans & NJW &
%Rcut & Scut
%\\\hline\hline
%USPS10  & \tb{0.1} & 0.8 & 4.6 & 3.6 & 101.0 & 8.7 & 0.4 & 0.5 &
%0.5\\\hline
%
%4News  & \tb{$<$0.1} & 0.1 & 0.4 & 0.4 & - & 584.7 & 0.1 & 0.4 &
%0.4\\\hline
%
%TDT2  & \tb{0.3} & 2.3 & 170.4 & 29.7 & - & - & 5.3 & 5.3 &
%1.9\\\hline
%\end{tabular}}\label{tab:time:simi}

\subtable[Time cost of post-processing eigenvectors]{
\begin{tabular}{|c|*{6}{|c}|}
\hline Time (s) & ZP & GMM-V & NJW & Rcut & Scut
\\\hline\hline
USPS10  & 4.20 & 3.13 & \tb{0.08} & 0.10 & 0.12\\\hline

4News  & 0.05 & 0.06 & \tb{0.01} & 0.01 & 0.05\\\hline

TDT2  & 168.80 & 28.10 & 2.66 & 3.69 & \tb{0.34}\\\hline
\end{tabular}}\label{tab:time:simi eig}
\end{scriptsize}
\end{center}
\vskip -0.1in
\end{table*}

\subsection{Clustering via Kernel Scut}\label{sec:scut performance}
Since the superiority of kernel version has been demonstrated in previous section, now we focus
on kernel Scut and compare it with some popular algorithms.

The results of accuracy, NMI, and RI are shown in
Table~\ref{tab:scut}.\footnote{Codes of Graclus and Ncut are
downloaded from
\url{http://www.cs.utexas.edu/users/dml/Software/graclus.html} and
\url{http://www.cis.upenn.edu/~jshi/software/} respectively. GMM and
Kmeans are provided by MATLAB toolbox. The remaining methods are
implemented by us using MATLAB. GMM cannot run on 4News and TDT2 due
to $p>n$, and Kmeans fails to run on TDT2.} According to the mean
scores of the three criteria, Scut performs best overall. Although
the scores of Rcut are comparable to those of Scut, they are the
best ones picked over 20 trials, the mean scores of Rcut are much
worse, and the variances are large, as shown in
Table~\ref{tab:clustering ori vs ker}. The difference between Rcut
and Scut is that Rcut employs Kmeans to recover the indicators,
while Scut employs NSCrt. In view of the comparable results of Scut
to the optimal results of Rcut, it probably indicates
that NSCrt has accurately recovered the underlying indicators. ZP and
Ncut address similar problems too, especially ZP, which tries to
recover indicators via rotation as NSCrt does. However, the results
show that they do not perform as well as Scut. Graclus occasionally
does best on some data sets, however, it performs poorly on some
well-separable data sets, e.g., onion, iris, and TDT2.

The time cost on the three largest data sets, USPS10, 4News, and
TDT2, is shown in Table~\ref{tab:time}. GMM and Kmeans are much
slower. Scut is comparable to the most efficient method, Graclus.
Designed to improve the computational cost of traditional
spectral clustering methods, Graclus avoids the computation of
eigenvectors. However, the total time cost of those methods working
with similarity matrix, including Scut and Graclus, is dominated by
the construction of similarity matrix. As a consequence, the cost of
Scut is still close to that of Graclus. Table~\ref{tab:time}(b) highlights
the cost of the post-processing of eigenvectors. On the largest data
set TDT2, Scut is significantly faster than the others,
since the time complexity of Scut is $O(nr^2)$ scaling linearly with data
size.

Finally, as popular application of spectral clustering, image segmentation is compared between Rcut, Graclus, Ncut,
and Scut. Results on two typical images are shown in Figure~\ref{fig:imgseg}. The images and their similarity matrices
are obtained from the web site of Ncut: \url{http://www.cis.upenn.edu/~jshi/software/}.

\begin{table*}[h]
\caption{KSVD and SSC v.s. SSRl on clustering. KSVD has the same
size of dictionary and codes as SSRl, and non-maximum suppression is
applied for clustering. The mean accuracy of KSVD over 20 trials is
shown. }\label{tab:ksvd vs
Scutl} \vskip 0.1in
\begin{center}
%\vskip -0.1in
\begin{scriptsize} %
\begin{tabular}{|c||*{10}{c|}}
\hline Accuracy (\%) & G1 & G2 & G3 & onion & iris & wdbc & Isolet &
USPS3 & USPS8 & USPS10 \\\hline\hline

% mean err / std
%KSVD       & 8.7 / 0 & 28.2 / 6.3 & 48.7 / 4.9 & 47.9 / 3.0 & 51.3 /
%2.6 & 46.2 / 1.4 & 45.6 / 4.9 & 32.5 / 4.3 & 43.9 / 3.7 & 51.4 / 3.7
%& 25.7 / 10.9\\\hline
KSVD       & 95.9& 86.0& 67.5& 64.7& 62.6& 66.5& 56.3& 67.9& 57.2& 49.2\\\hline

SSC        & \tb{100.0} & 66.7 & 53.3 & 60.0 & \tb{78.0} & 87.3 &
\tb{78.8} & 59.0 & - & - \\\hline

Scutl      & \tb{100.0} & \tb{95.3} & \tb{83.3} & \tb{61.3} &
\tb{78.0} & \tb{87.5} & 65.0 & \tb{72.6} & \tb{76.3} & \tb{64.9} \\\hline
\end{tabular}
\end{scriptsize}
\end{center}
\vskip -0.1in
\end{table*}

\begin{table*}[h]
\caption{KSVD+Kmeans v.s. kernel Scut. For KSVD, the optimal
accuracy with respect to objective function of Kmeans over 20 trials
is shown. }\label{tab:ksvd+kmeans vs scut} \vskip 0.1in
\begin{center}
%\vskip -0.1in
\begin{scriptsize} %
\begin{tabular}{|c||c|c|c|c|c|c|c|c|c|c|}
\hline Accuracy (\%) & G1 & G2 & G3 & onion & iris & wdbc & Isolet &
USPS3 & USPS8 & USPS10\\\hline\hline

KSVD       & 91.3& \tb{93.3}& 74.7& 58.7& 82.0& 67.0& 23.9& 43.1&
19.7& 17.5 \\\hline

Scut       & \tb{100.0}& 92.7& \tb{82.0}& \tb{90.7}& \tb{95.3}&
\tb{88.4}& \tb{82.0}& \tb{99.2}& \tb{91.0}& \tb{66.4}\\\hline
\end{tabular}
\end{scriptsize}
\end{center}
\vskip -0.1in
\end{table*}

\subsection{OSRs v.s. SSR on clustering}
We compare the two most related OSRs: KSVD and SSC \cite{elhamifar2013sparse}, with SSR on clustering. Below, absence
of results on 4News and TDT2 are due to running out of memory of KSVD and SSC.

First, for KSVD, we learn a dictionary and codes with the same sizes
as SSRl, and apply the same clustering strategy as Scut.\footnote{The maximal
$\ell_0$-norm of KSVD is set to be $\lceil r/3 \rceil$. Considering KSVD depends on random
initialization, it is repeated 20 times, and the mean accuracy is
reported, since there is no suitable criterion to pick the optimal result. Codes of SSC is the ADMM version
downloaded from \url{http://www.cis.jhu.edu/~ehsan/code.htm}. Independent
affine subspaces is assumed for better performance.}
The results are shown in Table~\ref{tab:ksvd vs Scutl}.
Scutl outperforms KSVD and SSC significantly. Next, we learn an over-complete KSVD and apply K-means
on the codes.\footnote{The number of atoms and the maximal $\ell_0$-norm are set to be
$m=\min(2p, \lceil n/5 \rceil)$ and $\min(8,\lceil m/3 \rceil)$ respectively.} Comparing with kernel Scut, the
clustering results are shown in Table~\ref{tab:ksvd+kmeans vs scut}. Scut still performs better.

To investigate the reason of the inferior performance of KSVD
against SSR, we compare how well the cluster structure of data is
enhanced by the two kinds of sparse codes. This is measured by the
ratio of within-class distance, $S_W/S_T=\sum_{k=1}^K \sum_{H_i\in
C_k}\|H_i-D_k\|_2^2/\sum_{j=1}^n \|H_j-d\|_2^2$, where $D_k$ is the
mean of class $C_k$, and $d$ is the mean of the whole set. The
ratio is between 0 and 1. Smaller value implies better separability. In the same setting as last experiment, the ratios are
shown in Figure~\ref{fig:within class bar}. We see that the ratios
of SSR are generally smaller than those of original data, which
means SSR enhances the cluster structure. However, those of KSVD are
significantly larger, implying the codes of KSVD are dispersed, which
may not suitable for clustering purpose.

%\begin{figure}[htbp]
%\centering{
%\includegraphics[width=6cm]{fig/winclassBar.eps}
%} \caption{Comparison of the ratios of within-class distance between
%the original data, sparse codes of KSVD, and sparse codes of SSRk.
%On G1, the ratio of SSRk is zero.}\label{fig:within class bar}
%\end{figure}

\begin{figure}[htbp]
\centering{ \subfigure[]{\label{fig:baby:ori}\includegraphics[width=2cm]{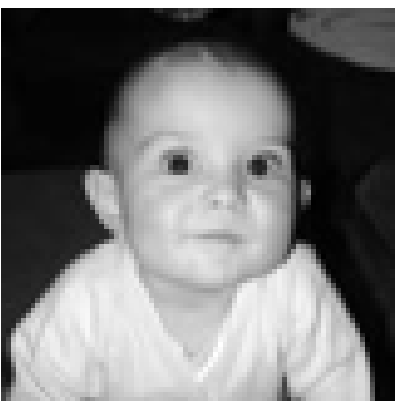}}
\subfigure[]{\label{fig:baby:seg}\includegraphics[width=5cm]{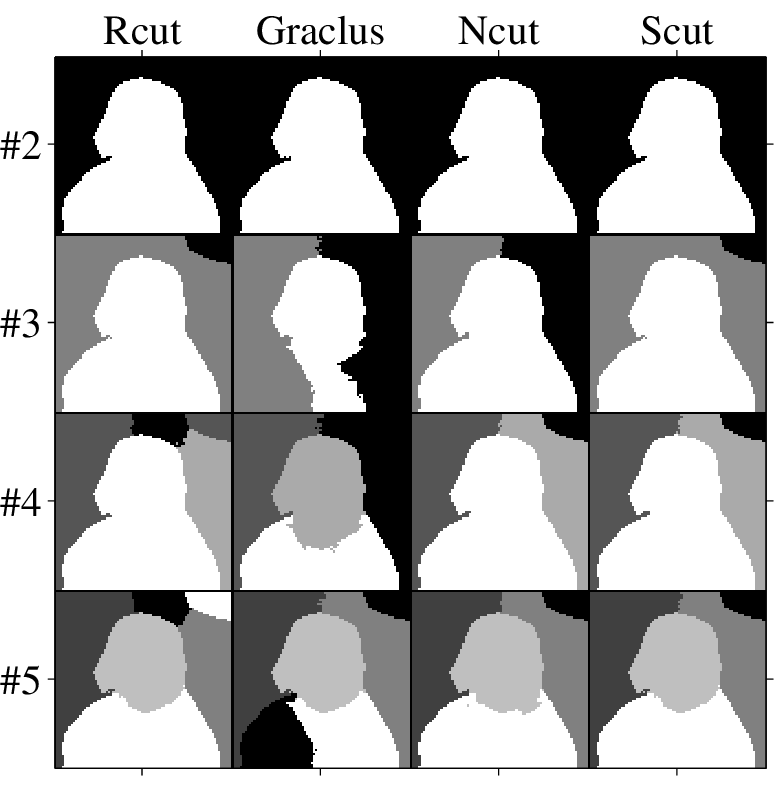}}
\subfigure[]{\label{fig:tiger:ori}\includegraphics[width=2cm]{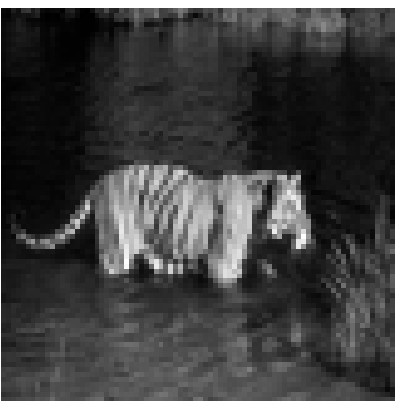}}
\subfigure[]{\label{fig:tiger:seg}\includegraphics[width=5cm]{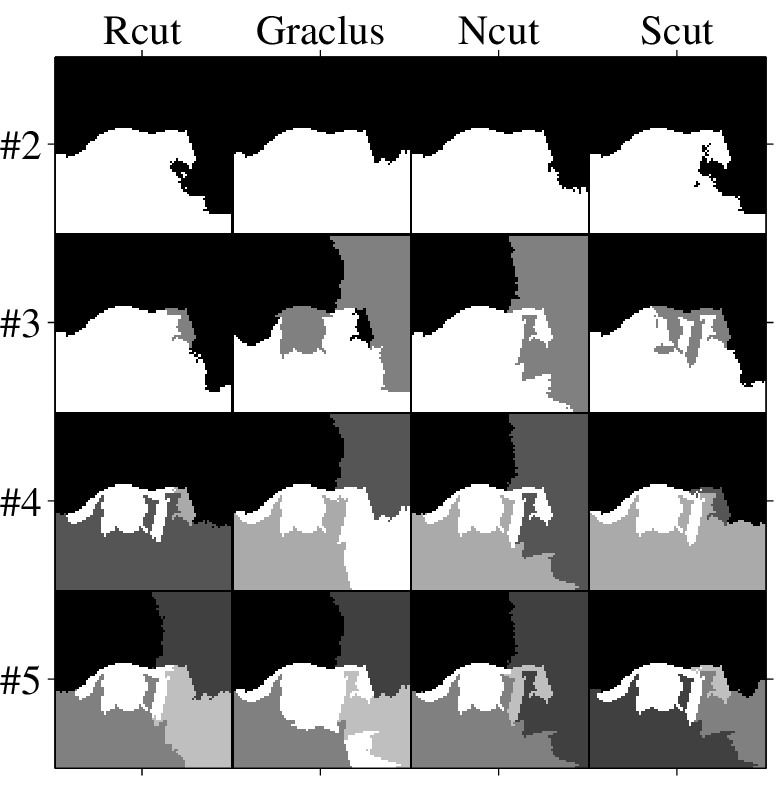}} } \caption{Examples of image segmentation
results. (a), (c) Original $80\times 80$ gray images. (b), (d) Segmentation results. The rows correspond to number of
clusters. }\label{fig:imgseg}
\end{figure}

\section{Extended Relations to Other Methods}\label{sec:discussions}
We have established the inherent relations between PCA/LE, K-means/Rcut, and SSR so far. In this section, we will further link the framework to OSRs, kernel PCA, manifold learning, and subspace clustering.

\subsection{Relation between OSRs and SSRl}\label{sec:OSR vs SSR}

We focus on two classical OSRs: \cite{olshausen1996emergence} and KSVD \cite{aharon2006img}, and compare them with
SSRl. The relation will be explored mainly from the row-space view, rather than the column-space view that is
sample-wise. It will be shown that the commonality of the methods is to find a sparse basis that covers the row space
of data as much as possible. The sparse basis consists of the row vectors of the code matrix. The basis size can be
chosen differently, whose range lies in $[0, n]$. OSRs and SSRl occupy different sections of the range, that is where
the difference comes from, and it leads to distinct solution strategies.

The objectives of \cite{olshausen1996emergence} and KSVD are
\begin{equation}\label{equ:spareL1}
\min_{D\in\mathbb{R}^{p\times r},X\in\mathbb{R}^{r\times
n}}\,\|A-DX\|^2_F+\lambda\|X_i\|_1,\,i=1,\dots,n,
\end{equation}
\begin{equation}\label{equ:ksvd}
\min_{D\in\mathbb{R}^{p\times r},X\in\mathbb{R}^{r\times
n}}\,\|A-DX\|^2_F,\,\st\,\|X_i\|_0\leq L,\,i=1,\dots,n,
\end{equation}
respectively, where, $p<r\leq n$, i.e., over-complete. We assume in this discussion $\rank(A)=p$.

Due to implicit sparsity constraint, the objective of SSRl can be expressed as
\begin{equation}\label{equ:original sparse A-DH compact}
\min_{D\in\mathbb{R}^{p\times r},X\in\mathbb{R}^{r\times n}}
\|A-DX\|^2_F,\,\st\,XX^T=I,\,X\,\text{sparse},
\end{equation}
where $r\leq p+1$, i.e., usually under-complete.

Fixing $X$, $D^*=AX^T(XX^T)^{-1}$. Substituting it into the
objectives, all of them reduce to
\begin{equation}\label{equ:general sparse A-AHH}
\min_{X} \|A-AX^T(XX^T)^{-1}X\|^2_F,\,\st\,X\,\text{sparse},
\end{equation}
with various sparse constraints on $X$. Since $X^T(XX^T)^{-1}X$ is a
projection matrix formed by the row vectors of $X$, the above
objective can be interpreted as: to find a sparse basis so that it
covers the row space of data as much as possible. This is the common
point of the three SRs.

The sparse basis of SSRl is obtained via rotating the normalized PCs (attached with $\mb{1}^T/\sqrt{n}$), showing a
close tie to PCA. SSRl is also related to cluster analysis, due to the one-one correspondence between the normalized
PCs and the eigenvectors of a linear Laplacian matrix. The basis size of SSRl is limited by the number of PCs, or
equivalently the effective eigenvectors of Laplacian matrix. The over-completeness prevents OSRs from the reach of
spectral graph theory as well as PCA, when $r>p+1$. We do not know which vectors to draw from the void subspace that is
complementary to the data row-space, so that attaching to the PCs and via rotation, they turn into sparse basis.

The dilemma is circumvented by resorting to the dictionary representation. Treating the dictionary and codes
independently, OSRs obtain solution by alternate optimization. This strategy depends on a random initialization of
the dictionary, which can be seen as replacing drawing vectors from the complementary row-subspace. The initialization
is usually done via randomly drawing samples from the data set. Expressing $D=AX^T(XX^T)^{-1}$, it is still not clear
what the corresponding sparse codes are. However, if the dictionary is initialized with cluster centers, e.g., of
K-means, the sparse codes are obviously the corresponding indicator vectors. Then following the alternate
optimization, $D$ is fixed, the extreme sparse codes are relaxed and updated to soft ones, via some sample-wise sparse
coding algorithms, e.g., OMP \cite{pati1993orthogonal}, BP \cite{chen1998atomic}, and then $D$ is updated and so on.\footnote{In KSVD, when the maximal $\ell_0$-norm of codes is constrained to 1, KSVD reduces to vector quantization
\cite{Gersho1992Vector}, and if further the nonzero value is constrained to be 1, it reduces to exactly K-means, as
\cite{aharon2006img} had analyzed. The reverse process generalizes K-means to KSVD \cite{aharon2006img}.} Notice that,
K-means itself belongs to the SSR framework, while its solution adopts the alternate optimization. Unlike Rcut,
K-means can obtain clusters more than the number of PCs. The cluster centers of K-means are usually initialized with random samples. Combining the two solution processes together, the above strategy is in fact K-means+OSR,\footnote{Similar
strategies of clustering+OSR have been used in the proofs of exact dictionary recovery \cite{Agarwal2013Exact}, showing
some underlying relation between OSR and cluster analysis. } in hindsight, it is interesting to find their counterpart
PCA+SSR. At this point, the close ties between these methods are clear. A diagram is illustrated in
Figure~\ref{fig:diagram}.

Further more, the four methods can be eventually unified at an extreme point, $r=n$. At $r=n$ there is no choice, the
whole complementary subspace has been included. A rotation of the full basis can lead to the ultimate sparse basis: the
identity matrix. In this case, the dictionary consists of the whole data sets, with each sample serving as an
atom/cluster center. They must be the optimal solution of OSRs and K-means too. This seemly useless extreme sparse
representation had found its application in face recognition \cite{wright2009robust}.

Finally, we comment that attributing to the random initialization, the formal relation of OSRs to cluster analysis is
less obvious than that of SSRl, except a special case of KSVD; and the clustering performance of KSVD is not
satisfactory as the experiments demonstrated. Nevertheless, the basis size SSRl can handle is limited, and OSRs had
demonstrated impressive performance on signal denoising and reconstruction \cite{elad2010role}. As sparse representation
methods, SSRl and OSRs occupy different ranges and complement each other.

\subsection{Relations between Kernel PCA, Manifold Learning, and SSR}
First, we will show that kernel PCA (KPCA) \cite{scholkopf1998nonlinear} can be incorporated into the framework of
SSR. The underlying reason is simple: PCA constitutes a part of SSR framework,
and KPCA can be seen as a PCA, with original data
replaced by high-dimensional features. Next, the relation is extended to manifold learning, including multidimensional scaling (MDS) \cite{cox2001multidimensional}, Isomap \cite{tenenbaum2000global}, and locally linear embedding (LLE) \cite{roweis2000nonlinear}.

Let $G$ be a kernel matrix, and assume to find normalized PCs, the objective of KPCA is
\[
\max_{X}\,
\tr\{X(I-\frac{1}{n}\mb{1}\mb{1}^T)G(I-\frac{1}{n}\mb{1}\mb{1}^T)X^T\},\;\st\,XX^T=I.
\]
Then we have
\begin{theorem}
There is a sparse representation of KPCA: $\begin{bmatrix}
\frac{1}{\sqrt{n}}\mb{1}^T\\
V_{1:r-1}\end{bmatrix}=RH$, where $V_{1:r-1}$ are the eigenvectors
of KPCA.
\end{theorem}

\begin{proof}
Assume LE uses the same kernel matrix,
and the original data $A$ is not mean-removed, then PCA, KPCA,
and LE can be seen as working with the following three similarity
matrices respectively: $W_{PCA}=\beta
\mb{1}\mb{1}^T+\bar{A}^T\bar{A}$, $W_{KPCA}=\beta'
\mb{1}\mb{1}^T+G_{KPCA}$, and $W_{LE}= G$, where $\bar{A}=
A(I-\frac{1}{n}\mb{1}\mb{1}^T)$, $G_{KPCA}=
(I-\frac{1}{n}\mb{1}\mb{1}^T)G(I-\frac{1}{n}\mb{1}\mb{1}^T)$, and
$\beta$, $\beta'$ ensure the nonnegativity of the corresponding
similarity matrices. The solution of PCA is obtained from the
eigenvectors of $\bar{A}^T\bar{A}$, while that of KPCA is
obtained from the eigenvectors of $G_{KPCA}$. By the correspondence
between $\bar{A}^T\bar{A}$ and $G_{KPCA}$, following the conversion
from PCA to SSRl, the assertion is established.
\end{proof}

It implies KPCA is inherently related to cluster analysis too. We may wonder to what extent does KPCA reveal cluster structure, and how the clustering performance will be when using Scut. These questions can be answered by the ideal graph condition. Following the analysis of SSRl, we expect the similarity matrix used by KPCA meets the condition better than that used by PCA while worse than that used by LE. Though not reported, our experiments validated this. Besides, we mention that a significant drawback of KPCA is that it demands a large storage when dealing with large data sets, due to the dense kernel matrix.

Once KPCA has jointed the SSR framework, a series of manifold learning \cite{Ham2004A} or graph embedding
\cite{Yan2007Graph} methods, including MDS, Isomap, and LLE, subsequently share extended relations to SSR. It is
because, under a kernel view, these methods can be written into the form of KPCA, as is well-known in the literature
\cite{Ham2004A, bengio2004out, Yan2007Graph}. In particular, it had been proven that classical MDS using Euclidean
distance is exactly equivalent to PCA \cite{williams2002connection}. For metric MDS, Isomap (a special metric MDS that
uses geodesic manifold distances), and LLE, relations to KPCA exist, although less rigorous, due to the kernel
functions of their constructed ``kernel matrices'' are ambiguous \cite{Ham2004A}.

Finally, it should be clarified that when we talk about the relations, it does not mean that dimensionality reduction or manifold learning is always related to SSR or cluster analysis. When the data do not exhibit cluster structure, these methods work as usual, and the codes preserve the data proximity. When clusters emerge in the data distribution, noisy or not, the codes are intrinsically embedded with cluster structure. Depending on two things, the structure becomes explicit: the attachment of vector $\mb{1}$ and a proper rotation. We believe that for most real-world data, some underlying clusters naturally exist, more or less, and in a hierarchical way. Therefore, the codes must generally contain some cluster information, and the link of dimensionality reduction or manifold learning methods to SSR and cluster analysis should be commonplace. This explains why in so many paper focusing on manifold learning, the codes, when reduced to two dimensions for visualization, frequently exhibit a triangle shape (three clusters, cf. Figure~\ref{fig:eigenvector_points}). Plenty of examples can be found, e.g., in
\cite{Saul2003Think, Silva2003Global, saul2006spectral}.

\subsection{Relations between Subspace Clustering, LLE, and SSRl}
A series of work in subspace clustering, represented by SSC \cite{elhamifar2013sparse}, low-rank representation (LRR) \cite{liu2013robust}, and the manifold learning method LLE \cite{roweis2000nonlinear} center around the seemingly trivial data representation form $A\approx AZ$, which has been repeatedly interpreted in PCA, K-means, and SSR. We will show that SSC, LRR, LLE, and SSRl are related to each other, and the ideal graph condition is a special case of the ideal working condition of subspace clustering.

In cluster analysis, besides the clusters formed by spatial proximity, there is another circumstance that the clusters are formed by a union of linear subspaces, with one subspace corresponding to one cluster. The latter case is the subject of subspace clustering \cite{Vidal2011Subspace}, which has found applications in, e.g., motion and image segmentation, face recognition \cite{elhamifar2013sparse, liu2013robust}. SSRl/Scutl may belong to this domain too. The ideal graph condition of linear version requires that the clusters should be orthogonal.\footnote{In this section, for SSRl, we consider the augmented data. For SSC and LRR, we focus on their basic models.} This is a special case of the ideal working condition of subspace clustering, i.e., the subspaces being linearly independent.

The first step of SSC, LRR, LLE, and SSRl can be viewed as finding some code matrix to represent the
data matrix with respect to a dictionary composed of the data matrix itself, i.e., solving $\|A-AZ\|_F^2$ with some constraint on $Z$. The problem is generally under-determined, with infinitely many solutions achieving zero error, e.g., the identity matrix and the Gram matrix $V^TV$, where $V$ is the full normalized PCs of data. SSC finds a sparse solution with the constraint that the diagonal being zero, i.e., forbidding self-representation. LRR finds a low-rank solution via minimizing the nuclear norm of $Z$, which turned out to be $V^TV$ uniquely. LLE finds a sparse solution such that the nonzero entries of each column are constrained to the neighborhood of the associated point, and they sum to 1, i.e., each point is approximated locally by its neighborhood. For SSRl, we have $A\approx A(H^TH)$. Note $H^TH=V_{1:r}^TV_{1:r}$, which is the Gram matrix of $r$ PCs. SSC and LRR can achieve zero reconstruction error while SSRl is optimal within rank-$r$ limitation.

In the second step, SSC, LRR, and SSRl aim at clustering, while LLE is oriented to dimensionality reduction. When the subspaces are independent, \cite{elhamifar2013sparse} (excluding minor factors) and \cite{liu2013robust} proved their solutions are block-diagonal, if the data are arranged orderly. That is each point can be exactly reconstructed by its subspace. When the subspaces are orthogonal as the ideal graph condition requires, it is easy to see that all code matrices of the four methods become block-diagonal. Block-diagonal property is desirable, since it suits for the clustering purpose. In the second step of SSC and LRR, a similarity matrix is heuristically built based on $Z$, e.g., $W= |Z|+|Z|^T$ in SSC, then spectral clustering is employed to finish clustering. Since the code matrices are block-diagonal, so are the similarity matrices. It implies an ideal graph condition of kernel version is satisfied, making spectral clustering easy. For SSRl, this step can be omitted and clustering can be done by applying Scutl on $H$, since the code matrix $H^TH$ itself is a qualified similarity matrix, and the eigenvectors of the associated Laplacian matrix is exactly $H$. In the second step of LLE, it seeks low-dimensional codes $X\in \mathbb{R}^{r\times n}$, such that it replaces original data and the neighborhood structure of data is preserved: $\min_X$ $\|X-XZ\|_F^2$, s.t. $XX^T=I$. For SSRl, this step can be omitted too, since $H$ itself is the codes and also the solution of LLE's objective when $Z=H^TH$.

Although LLE is designed for dimensionality reduction, by the same principle of SSC and LRR, it can be used for clustering too. Conversely, following LLE, SSC and LRR can be used for dimensionality reduction, regardless of the clusters. But only SSRl covers both domains in a natural way. We again witness the ambiguous borders between dimensionality reduction, sparse representation, and cluster analysis.

\section{Conclusion and Future Work}\label{sec:future work}
In this paper, we have established a spectral graph-based framework unifying dimensionality reduction, cluster analysis, and sparse representation. It includes the inherent relations between PCA, K-means, LE, Rcut, SSR, and extended relations to OSRs, manifold learning, and subspace clustering. Based on the unification of PCA, K-means, LE, Rcut, a new sparse representation, SSR, has been derived. An efficient algorithm, NSCrt, has been developed to solve the sparse codes of SSR. As an application of SSR, the new clustering algorithm Scut achieved the state-of-the-art performance in the spectral clustering family.

The future work may include as follows.
First, the theoretical analysis of whether NSCrt has the ability to recover underlying solution remains an open
question. It is essentially a dictionary recovery problem. Recent results have shed light on it, e.g.,
\cite{Spielman2012Exact,Agarwal2013Exact,Arora2014New}. Especially, \cite{Spielman2012Exact} deals with square
dictionary, and they proved that in noise-free case the dictionary can be recovered with high probability when
each column of the codes has at most $O(\sqrt{r})$ nonzeros and there are at least $\Omega(r^2\log^2 r)$ samples.
It matches our case closely. Second, this paper remains in the unsupervised learning domain. It is interesting to find out whether the supervised classification can be incorporated into the spectral graph framework. Finally, in view of the relations of SSR
to many other methods, as well as the nice properties it has, we envision that SSR may have other potential applications, especially the out-of-sample case.

% Acknowledgements should go at the end, before appendices and references

%\acks{We would like to acknowledge support for this project from the
%National 973 Program (2013CB329500), and the Program for New Century
%Excellent Talents in University (NCET-13-0521). }

% Manual newpage inserted to improve layout of sample file - not
% needed in general before appendices/bibliography.

%\newpage

\appendix

\section{An Interpretation for Kernel Scut}
\label{app:scut}

In SSRk, Scut can be interpreted as clustering data according to the
smallest included angle (or distance) between the cluster centers
and the virtual data. By Theorem \ref{theo:kernel ssr}, the atoms of
$\hat{D}$ are cluster centers that are near-orthogonal and have
similar lengths. In this case, it can be proved that
\begin{equation}
\begin{aligned}
&\mathop{\arg\max}_k\,\cos\,(\theta(\hat{D}_k,\tilde{A}_i))\\
=&\mathop{\arg\max}_k\,\hat{D}_k^T\tilde{A}_i/(\|\hat{D}_k\|_2\cdot\|\tilde{A}_i\|_2)\\
\approx &\mathop{\arg\max}_k\, \hat{D}_k^T\tilde{A}_i\qquad\qquad\qquad\qquad\quad\, (\text{since } \|\hat{D}_k\|_2\approx \lambda_n^{\frac{1}{2}})\\
\approx &\mathop{\arg\max}_k\,
[\lambda_n^{\frac{1}{2}}R_k^T,\mb{0}](\lambda_n
I-\Lambda)^{\frac{1}{2}}V_i\qquad (\text{by (\ref{equ:kernel D}) and definition of $\tilde{A}$}) \\
=&\mathop{\arg\max}_k\, \lambda_n^{\frac{1}{2}}R_k^T(\lambda_n
I-\Lambda_{1:r})^{\frac{1}{2}}(V_{1:r})_i\\
\approx &\mathop{\arg\max}_k\,\lambda_n R_k^T RH_i\qquad\qquad\qquad\quad\;\, (\text{by $\Lambda_{1:r}\approx \mb{0}$ and $V_{1:r}=RH$})\\
= &\mathop{\arg\max}_k\,H_{ki},
\end{aligned}
\end{equation}
which implies Scut performs clustering based on the smallest
included angle.

Besides, since
$\mathop{\arg\min}_k\,\|\tilde{A}_i-\hat{D}_k\|_2^2=\|\tilde{A}_i\|_2^2+\|\hat{D}_k\|_2^2-2\hat{D}_k^T\tilde{A}_i\approx
\mathop{\arg\max}_k\,\hat{D}_k^T\tilde{A}_i$, Scut can also be
interpreted as clustering data according to the smallest distance to
the cluster centers.

%{\small
\bibliographystyle{plain}
\bibliography{sparse}
%}

\end{document}